\DeclareMathAlphabet{\mathsfit}{\encodingdefault}{\sfdefault}{m}{sl}
\SetMathAlphabet{\mathsfit}{bold}{\encodingdefault}{\sfdefault}{bx}{n}
\def\gA{{\mathcal{A}}}
\def\gD{{\mathcal{D}}}
\def\gF{{\mathcal{F}}}
\def\gG{{\mathcal{G}}}
\def\gH{{\mathcal{H}}}
\def\gJ{{\mathcal{J}}}
\def\gL{{\mathcal{L}}}
\def\gN{{\mathcal{N}}}
\def\gO{{\mathcal{O}}}
\def\gS{{\mathcal{S}}}
\def\gX{{\mathcal{X}}}
\def\gY{{\mathcal{Y}}}
\def\gZ{{\mathcal{Z}}}
\newcommand{\E}{\mathbb{E}}
\newcommand{\R}{\mathbb{R}}
\DeclareMathOperator*{\argmax}{arg\,max}
\DeclareMathOperator*{\argmin}{arg\,min}
\newcolumntype{C}[1]{>{\centering\let\newline\\\arraybackslash\hspace{0pt}}m{#1}}
\def\shownotes{1} 
\newcommand{\authnote}[2]{{[#1: #2]}}
\newcommand{\authnote}[2]{{}}
\newtheorem{theorem}{Theorem}[section]
\newtheorem{condition}[theorem]{Condition}
\newtheorem{proposition}[theorem]{Proposition}
\newtheorem{lemma}[theorem]{Lemma}
\newtheorem{corollary}[theorem]{Corollary}
\newtheorem{claim}[theorem]{Claim}
\newtheorem{definition}[theorem]{Definition}
\numberwithin{equation}{section}
\newcommand{\ones}{\mathbf{1}}
\newcommand{\onehot}{\ones}
\newcommand{\zeros}{\mathbf{0}}
\newcommand{\1}{\mathbbm{1}}
\newcommand{\poly}{\textup{poly}}
\newcommand{\gnorm}[1]{{\left\vert\kern-0.25ex\left\vert\kern-0.25ex\left\vert #1 
		\right\vert\kern-0.25ex\right\vert\kern-0.25ex\right\vert}}
\renewcommand{\dim}{w} %
\newcommand{\dom}{\gX} %
\newcommand{\inp}{x} %
\newcommand{\func}{f} %
\newcommand{\Func}{F} %
\newcommand{\layer}{h} %
\newcommand{\clayer}{\widetilde{\layer}}
\newcommand{\clayerf}{g}
\newcommand{\hlayer}{\widehat{h}} %
\newcommand{\corr}{\zeta} %
\newcommand{\corrdist}{\sigma_\corr} %
\newcommand{\norm}{\lambda} %
\newcommand{\param}{\theta} %
\newcommand{\hparam}{\widehat{\param}}
\newcommand{\Param}{\theta}
\newcommand{\hParam}{\widehat{\Param}}
\newcommand{\paramset}{\Theta}
\newcommand{\paramdist}[1]{\sigma_{\param #1}} %
\newcommand{\funcdist}[1]{\sigma_{\layer #1}} %
\newcommand{\paramlip}[1]{\kappa_{\param #1}} %
\newcommand{\funclip}{\mu} %
\newcommand{\corrparam}{\xi} %
\newcommand{\hcorrparam}{\widehat{\corrparam}}
\newcommand{\corrParam}{\xi}
\newcommand{\hcorrParam}{\widehat{\corrParam}}
\newcommand{\corrparamset}{\Xi}
\newcommand{\corrlip}{\kappa_{\corrparam}}
\newcommand{\corrparamdist}{\sigma_{\corrparam}}
\newcommand{\depth}{d} %
\newcommand{\marg}{\rho} %
\newcommand{\tm}{\textup{TM}} %
\newcommand{\state}{z}
\newcommand{\stateset}{\gZ}
\newcommand{\symb}{a}
\newcommand{\symbset}{\gA}
\newcommand{\trans}{S} 
\newcommand{\init}{\state_{\textup{init}}}
\newcommand{\blank}{[\varnothing]}
\newcommand{\term}{\stateset_{\textup{term}}}
\newcommand{\pos}{\beta} %
\newcommand{\enc}{\textup{Enc}}
\newcommand{\inpsize}{m} %
\newcommand{\Time}{T} %
\newcommand{\ff}{\textup{FF}} %
\newcommand{\relu}{\phi}
\newcommand{\attn}{\textup{Attn}}
\newcommand{\query}{Q}
\newcommand{\key}{K}
\newcommand{\val}{V}
\newcommand{\out}{o}
\newcommand{\dimpos}{\dim_{\textup{pos}}}
\newcommand{\bin}{\textup{Bin}}
\newcommand{\dimtrans}{\width_{\textup{TM}}}
\newcommand{\st}{\textup{st}} %
\newcommand{\posone}{\textup{pos}_1}
\newcommand{\postwo}{\textup{pos}_2}
\newcommand{\posthree}{\textup{pos}_3}
\newcommand{\symbinone}{\textup{sym}_1}
\newcommand{\symbintwo}{\textup{sym}_2}
\newcommand{\scr}{\textup{scr}} %
\newcommand{\scrone}{\textup{scr}_1}
\newcommand{\scrtwo}{\textup{scr}_2}
\newcommand{\scrpos}{\textup{scr}_3}
\newcommand{\scrscr}{\textup{scr}_4}
\newcommand{\loc}{l} %
\newcommand{\wri}{u} %
\newcommand{\bias}{b} 
\newcommand{\AND}{\textup{AND}}
\newcommand{\OR}{\textup{OR}}
\newcommand{\NOT}{\textup{NOT}}
\newcommand{\ID}{\textup{ID}}
\newcommand{\round}{\textup{round}}
\newcommand{\afam}{\gG} %
\newcommand{\apprf}{G} %
\newcommand{\mfam}{\gF} %
\newcommand{\mf}{F} %
\newcommand{\dist}{P}
\newcommand{\bell}{\widebar{\ell}}
\newcommand{\radn}{\textup{Rad}_{n, \dist}}
\newcommand{\cwidth}{r} %
\newcommand{\width}{\dim}
\newcommand{\totwidth}{p}
\newcommand{\cdepth}{q} %
\newcommand{\csize}{s} %
\newcommand{\avglone}{\alpha} %
\newcommand{\ellzo}{\ell_{\textup{0-1}}}
\newcommand{\thresh}{\gamma}
\newcommand{\tildeO}{\widetilde{O}}
\newcommand{\numstates}{k}
\newcommand{\embed}{E} %
\newcommand{\ftrans}{\textup{Tr}}
\newcommand{\mvmnt}{q}
\newcommand{\mftr}{\mf^{\textup{tr}}}
\newcommand{\sm}{SM}
\title{Statistically Meaningful Approximation: a Case Study on Approximating Turing Machines with Transformers}
\author{
	Colin Wei ~~~~ Yining Chen ~~~~  Tengyu Ma\\
	~\\
	Department of Computer Science\\
	Stanford University\\ 
	~\\
	\texttt{\{colinwei,cynnjjs,tengyuma\}@cs.stanford.edu}
}
\begin{document}

	\maketitle

	\begin{abstract}
		A common lens to theoretically study neural net architectures is to analyze the functions they can approximate. However, the constructions from approximation theory often have unrealistic aspects,
		for example, reliance on infinite precision to memorize target function values. To address this issue, we propose a formal definition of statistically meaningful approximation which requires the approximating network to exhibit good statistical learnability.
		We present case studies on statistically meaningful approximation for two classes of functions: boolean circuits and Turing machines. We show that overparameterized feedforward neural nets can statistically meaningfully approximate boolean circuits with sample complexity depending only polynomially on the circuit size, not the size of the approximating network. 
		In addition, we show that transformers can statistically meaningfully approximate Turing machines with computation time bounded by $T$, requiring sample complexity polynomial in the alphabet size, state space size, and $\log (T)$. Our analysis introduces new tools for generalization bounds that provide much tighter sample complexity guarantees than the typical VC-dimension or norm-based bounds, which may be of independent interest.
	\end{abstract}

	\section{Introduction}
Dating back to the seminal works on universal approximation~\citep{cybenko1989approximation,hornik1989multilayer,park1991universal,leshno1993multilayer}, a common way to theoretically study neural nets has been through their expressivity, which measures the ability of neural nets to approximate well-behaved functions. 
This perspective has shaped how researchers perceive different types of deep learning architectures: a basic way to theoretically justify new architectures is to study their approximation capabilities.
This has led to a number of analyses studying universal approximation capabilities for various widely-used architectures, such as recurrent neural nets (RNNs)~\citep{schafer2007recurrent}, graph neural nets~\citep{scarselli2008computational}, convolutional networks~\citep{bao2014approximation,zhou2020universality,yarotsky2021universal}, residual networks~\citep{lin2018resnet}, transformers~\citep{yun2019transformers}, and neural ODEs~\citep{teshima2020universal,zhang2020approximation}. 

However, 
approximation theoretic results often misalign with more meaningful end-to-end guarantees, because models constructed in the literature often exhibit unrealistic properties. For example, a common technique in the universal approximation literature is to rely strongly on infinite-precision weights and activations, or exponentially many parameters to encode the desired function values~\citep{hornik1989multilayer,cybenko1989approximation,leshno1993multilayer,lin2018resnet,yun2019transformers,sannai2019universal}. This issue even arises outside of universal approximation, e.g., various papers demonstrate the ability of RNNs and transformers to simulate various computational models such as Turing machines and automata, but require strong reliance on arbitrary precision~\citep{siegelmann1995computational,perez2019turing,korsky2019computational,bhattamishra-etal-2020-computational}.
Infinite precision can inflate the expressivity of an architecture (function class) in a unrealistic and misleading way: for example, finite width RNNs with infinite precision can simulate Turing machines, but finite-precision, finite-width RNNs cannot. This is implied by streaming lower bounds~\citep{alon1999space} -- any finite-precision, finite-width RNN induces a finite-space streaming algorithm corresponding to running the RNN on the inputs. However, streaming lower bounds tell us that finite-space streaming algorithms are not powerful enough to simulate Turing machines, and hence finite-precision, finite-width RNNs cannot either. As another example,~\citet{park2020provable} exploit infinite precision in the parameters to show that a neural net with parameter count sublinear in $n$ can memorize $n$ arbitrary input-label pairs. However, a simple counting argument reveals that this result cannot be proven using finite precision networks -- there are $2^n$ input-labeling pairs, but only $2^{o(n)}$ finite precision networks with $o(n)$ parameters. 

More broadly, the ideal theoretical perspective should consider not only whether target functions can be expressed, but also whether the approximating functions can plausibly be obtained by fitting a neural network to a finite training sample, as is the case in practical deep learning settings.
The latter question can be decomposed into studying optimization and generalization. 
Unfortunately, a rigorous analysis of  optimization is unresolved even for simple two-layer nets~\citep{mei2018mean,ma2020local}. Global optimization analyses such as NTK do exist~\cite{du2018gradient,jacot2018neural}, but there is a large body of theoretical and empirical work showing that neural networks can generalize much better than NTK analyses can hope to prove~\cite{ghorbani2019limitations,wei2019regularization}. Generalization is more tractable, so we propose to study expressivity and generalization together. 

Towards studying more meaningful notions of approximation, this work proposes \textit{statistically meaningful (SM) approximation}. This definition requires not only the existence of an approximating network, but also that it has good statistical properties. Consider a setting where the aim is to fit the target function $\apprf$ using the approximating family $\mfam$ and a finite sample of training data. SM approximation requires existence of a loss whose empirical risk minimizer in $\mfam$ leads to a model with low approximation error in fitting $\apprf$. We define the sample complexity of the approximation as the number of training samples needed to guarantee $\epsilon$ approximation error and study SM approximation with low sample complexity bounds.
SM approximation essentially eliminates statistical concerns about fitting the target function with a finite sample (optimization concerns can remain).

We present two case studies on SM approximation. First, we demonstrate that overparameterized feedforward neural nets can SM approximate boolean circuits with a low sample complexity that depends only on the intrinsic circuit size. 
Though it is simple to construct neural nets to approximate boolean circuits, bounding the sample complexity of the approximation is challenging. For example, standard norm-based generalization bounds for the naive construction scale exponentially in depth~\citep{bartlett1996valid,bartlett2017spectrally}. Furthermore, VC dimension-based bounds would scale polynomially in the number of parameters in the network~\citep{harvey2017nearly}, which is problematic because for practical optimization concerns, neural nets are typically overparameterized in terms of width~\citep{zhang2016understanding}. In contrast, our sample complexity bound for SM approximation depends only on the intrinsic circuit size, up to logarithmic factors.

Our second case study is on SM approximating Turing machines with transformers. We consider a class of Turing machines with bounded computation time $\Time$ and construct encoder-decoder transformers~\citep{vaswani2017attention} which SM approximate these Turing machines. The sample complexity of the approximation depends on a polynomial in $\log \Time$ and the sizes of the state space and the alphabet of the Turing machine. Though constructions for approximating Turing machines from prior work~\citep{siegelmann1995computational,perez2019turing,bhattamishra-etal-2020-computational} have not been formally studied from a sample complexity perspective, existing bounds would depend at least linearly on $\Time$. 
Furthermore, our construction only uses $\log \log \Time$ precision, compared to at least $\log \Time$ in prior works, resulting in the exponential improvement in the sample complexity.

Proving sample complexity guarantees for our SM approximation results is nontrivial and requires additional insights. To obtain our sample complexity bounds, we leverage a recent generalization bound which depends on data-dependent Lipschitzness~\citep{wei2019improved}. We develop theoretical tools to convert a broad class of neural nets, with possibly large Lipschitzness, into ones with small Lipschitzness on the training data, by introducing a number of new layers that is linear in depth. Our result applies to neural nets where each entry in the hidden representations on the training data takes values from a finite set (e.g., binary entries), and may be of independent interest.

In summary, our conceptual contribution is to propose a new notion of statistically meaningful approximation, intended to provide more meaningful guarantees by requiring that the approximating family have good statistical learnability. Technically, 1) we prove that feedforward neural nets can meaningfully approximate boolean circuits with sample complexity that depends polynomially on the width and depth of the circuit; and 2) we show that transformers can meaningfully approximate Turing machines with sample complexity logarithmic in the computation time.

	\subsection{Related works}

Classifical approximation theory for neural networks has a long history.~\citet{hornik1989multilayer, cybenko1989approximation}, and~\citet{ leshno1993multilayer} show that neural nets with one hidden layer are universal approximators but require the hidden layer size to grow exponentially in input dimension.~\citet{barron1993universal} uses the Fourier transform to write target functions as infinite-width networks and subsamples neurons to obtain widths which depend only on target function properties.~\citet{pmlr-v65-lee17a, Ji2020Neural} prove recent related developments in this direction of universal approximation.

Many works study benefits of deep networks over shallow ones~\citep{bengio2011expressive,arora2016understanding,pmlr-v49-telgarsky16,eldan2016power,pmlr-v65-daniely17a,chatziafratis2020better,chatziafratis2019depth}.~\citet{bengio2011expressive} show separation for exact representation, whereas~\citet{pmlr-v49-telgarsky16} shows separation for approximate representations with univariate inputs. 
~\citet{eldan2016power} demonstrate high-dimensional functions that can be approximated by two-layer polynomial-sized neural networks, but cannot be approximated by one-layer neural nets with subexponential hidden units. Via reduction to certain complexity theoretic questions,~\citet{NEURIPS2020_e1fe6165} show that proving constant depth separations may be hard.~\citet{malach2021connection} analyze the relationship between optimization and approximability, showing in various settings that deeper networks cannot be optimized if shallow networks cannot approximate them. This demonstrates that depth separation results~\citep{pmlr-v49-telgarsky16} from approximation theory can be misleading since gradient descent anyways cannot optimize the deep networks used to construct the approximation. 

Another area of study is on the ability of deep networks to memorize training data~\citep{zhang2016understanding,yun2018small,park2020provable,vershynin2020memory}.~\citet{yun2018small} show that $\Theta(n)$ parameters are sufficient to memorize $\Theta(n)$ training points for ReLU nets with at least 3 layers, and~\citet{park2020provable} reduce the parameter requirement to sublinear in $n$. Similar results have been proven for residual architectures~\citep{hardt2016identity} and convolutional nets~\citep{nguyen2018optimization}.~\citet{bartlett2019nearly} analyze the VC-dimension of neural nets, leading to bounds on the parameter count needed to fit training data. Other works study expressivity via connections to tensor approximation and sum-product networks~\citep{cohen2016convolutional, cohen2016expressive}.

There is a long line of work on studying the ability of neural nets to recognize and represent formal languages. The seminal work of~\citet{siegelmann1995computational} shows that RNNs are Turing complete but leverages infinite precision in the hidden activations.~\citet{Chen2018RecurrentNN} extend this result to ReLU activations and study implications in language modeling. Many variants of transformers are shown to be Turing-complete, but these constructions also rely on arbitrary precision~\citep{perez2019turing, bhattamishra-etal-2020-computational}. 
Recent works have also proven results for  generating or recognizing formal languages with finite-precision neural nets~\citep{weiss2018practical,korsky2019computational,hewitt2020rnns}, but these results do not consider Turing machines or analyze statistical properties of their constructions. Concurrent work~\citep{chung2021turing}~proves Turing completeness of RNNs with finite precision, relying on a dynamically growing memory module in the architecture (which serves the same purpose as the long decoder sequences in our Transformer construction). However, they do not analyze statistical properties, which requires additional complications in both the construction and statistical analysis.

	\subsection{Notation}
Let $\func \circ g$ denote the composition of functions $\func$ and $g$. For a family of functions $\gG$, let $\func \circ \gG \triangleq \{ \func \circ g : g\in \gG\}$ denote the family of compositions between $\func$ and functions in $\gG$.
For a set $\gS$ and function $f : \gS\to \gY$, let $f(\gS)$ denote the set $\{f(s) : s \in \gS\} \subseteq \gY$. We use $\ones_d$ to denote the all-one's vector in $d$ dimensions, with the subscripted omitted if clear. For $i \in [d]$, we let $\onehot_d(i)$ denote the one-hot embedding in $d$-dimensions, which is 1 at index $i$ and 0 everywhere else.  We use the notation $\tildeO(\cdot)$ to hide poly-logarithmic factors in the argument. The notation $\lesssim, \gtrsim$ indicates the existence of a constant factor such that the inequality holds, and $\asymp$ denotes that the $\gtrsim$ and $\lesssim$ relations simultaneously hold. We use $\poly(\cdot)$ to indicate the existence of a polynomial in the argument which makes the equation true. For a set $\symbset$ (e.g., the set of alphabet symbols for a Turing machine) let $\symbset^*$ denote the set of all sequences of elements of $\symbset$, where sequence length can vary.
Let $\dist$ denote a distribution over a space of inputs $\dom$. 
Let $\xi_1, \ldots, \xi_n$ be $n$ i.i.d. Rademacher variables sampled from $\{-1, +1\}$. The expected $n$-sample Rademacher complexity of $\gF$ on $\dist$ is as follows:
$
	\radn(\gF) \triangleq \E_{(\inp_i)_{i = 1}^n \stackrel{i.i.d}{\sim} \dist}\left[\E_{\xi_1, \ldots, \xi_n}\left[\sup_{F \in \gF} \frac{1}{n} \sum_{i = 1}^{n} \xi_i F(\inp_i) \right]\right]
$, where $(\inp_i)_{i = 1}^n$ denotes $n$ i.i.d. samples from $\dist$. 
	\section{Statistically meaningful approximation}\label{sec:stats_approx}
We consider settings where we wish to approximate every member $\apprf$ in a real-valued function class $\afam$ with some function $\mf$ in function class $\mfam$. Functions in both $\afam$ and $\mfam$ map input space $\mathcal{X}$ to $\mathbb{R}$. In this work, $\mfam$ is some family of neural networks. Fix a loss $\ell : \R \times \R \to [0, 1]$. The classical notion of $\epsilon$-approximation~\citep{powell1981approximation} is as follows: 
\begin{definition} [Classical $\epsilon$-approximation]
	We say a function class $\mfam$ $\epsilon$-approximates a function class $\afam$ with respect to loss $\ell$ and  input distribution $\dist$, if for any given $\apprf \in \afam$, there exists $\mf \in \mfam$ such that  
	$\E_{\inp \sim \dist}[\ell(\mf(\inp), \apprf(\inp))] \le \epsilon$.
\end{definition}

The issue with this classical notion of approximation is that it allows solutions which use infinite precision (or other potential unrealistic characteristics). Because of these drawbacks, even if $\mfam$ approximates $\afam$, it does not mean that we can use $\mfam$ to fit the target function from $\afam$ with a good sample complexity. 

This work studies a stronger notion of approximation, statistically meaningful (\sm) approximation, to eliminate statistical issues with fitting $\apprf$ on a finite sample.~\sm-approximation requires that $\afam$ is learnable via empirical risk minimization using models from $\mfam$, when data is generated from $\dist$. 

\begin{definition}[$\epsilon$-\sm-approximation]\label{def:meaningful_approx}
We say a function class $\mfam$ $\epsilon$-\sm-approximates a function class $\afam$ with respect to evaluation loss $\ell$ and input distribution $\dist$ with sample complexity $n$ if there exists a surrogate loss $\bell : \mfam \times \dom \times \R \to [0, 1]$ such that for any given $\apprf \in \afam$, the following holds: 
	
 With probability 0.99 over the randomness of $n$ examples $(x_i)_{i = 1}^n$ drawn from $P$, the empirical risk minimizer of $\bell$, $
 \widehat{\mf} \triangleq \argmin_{\mf \in \mfam} \frac{1}{n}\sum_{i = 1}^{n} \bell(\mf, x_i, \apprf(x_i)) \nonumber
 $, approximates $\apprf$:
$
		\E_{x \sim \dist}[\ell(\widehat{\mf}(x), \apprf(x))] \le \epsilon \nonumber
$.
\end{definition}
Definition~\ref{def:meaningful_approx} requires that the empirical risk minimizer of $\bell$ over $\mfam$ on a finite sample $(x_i, \apprf(x_i))_{i = 1}^n$ is guaranteed to $\epsilon$-approximate $\apprf$ on the population.
Note that the surrogate loss $\bell$ and evaluation loss $\ell$ can differ, and that $\bell$ takes the model $\mf$ as an argument, allowing the empirical risk to include regularization.

Though Definition~\ref{def:meaningful_approx} may be reminiscent of PAC-learnability, there is a major conceptual difference: SM approximation unifies expressivity and generalization, whereas PAC-learnability is only concerned with generalization. 
 For example, in the realizable PAC-learning case, there is no notion of an approximating family $\mfam$ -- the setting only cares about fundamental learnability of $\afam$. Furthermore, in agnostic PAC-learning (non-realizable) settings, the main focus is achieving a low loss \textit{relative} to the best function in the hypothesis class. In contrast, SM approximation also requires \textit{proving} that the best function in $\mfam$ achieves near-zero loss, whereas there is no such requirement in PAC-learning settings.

\subsection{Background and tools}
\label{sec:tools}
To prove~\sm-approximation guarantees, Definition~\ref{def:meaningful_approx} requires a loss surrogate $\bell$ such that the empirical risk minimizer of $\bell$ on the training data can approximate functions in $\afam$. The following proposition, which is motivated by classical generalization theory, provides several conditions on $\bell$ which lead to~\sm-approximation guarantees. 

\begin{proposition}\label{prop:loss_surrogate}
	For loss function $\ell : \R \times \R \to [0, 1]$ and input distribution $\dist$, suppose there exists a surrogate loss $\bell : \mfam \times \dom \times \R \to [0, 1]$ satisfying the following properties: 
		
		 1) For all $\mf \in \mfam$, $\inp \in \dom$, $y \in \R$, $\bell(\mf, x, y) \ge \ell(\mf(x), y)$. 
		
		 2) For any $\apprf \in \afam$, consider the function class $\gL_{\apprf}\triangleq \{x \mapsto \bell(\mf, x, \apprf(x)) : \mf \in \mfam\}$. Then the $n$-sample Rademacher complexity of $\gL_{\apprf}$ is bounded: $\radn(\gL_{\apprf}) \le \epsilon$.
		
		 3) For any $\apprf \in \afam$, there exists $\mf \in \mfam$ with small surrogate loss: $\E_{\inp \sim \dist}[\bell(\mf, \inp, \apprf(\inp))] \le \epsilon$.
		
	Then, the function class $\mfam$ $O\left(\epsilon + \frac{1}{\sqrt{n}}\right)$-\sm-approximates $\afam$ with respect to loss $\ell$ and input distribution $\dist$ with sample complexity $n$.
\end{proposition}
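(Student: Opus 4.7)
The plan is to combine the three assumptions in a standard empirical-process chain: uniform convergence on the surrogate loss class, followed by comparison of the ERM to the guaranteed low-surrogate-risk predictor, and finally domination of $\ell$ by $\bell$.

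First, I would apply a standard Rademacher-based uniform convergence bound to the loss class $\gL_{\apprf}$. Since $\bell$ takes values in $[0,1]$, the usual symmetrization and McDiarmid argument yields that with probability at least $1 - \delta_1$ over the draw of $(x_i)_{i=1}^n$, every $\mf \in \mfam$ satisfies
\begin{align}
\E_{x\sim \dist}[\bell(\mf, x, \apprf(x))] \;\le\; \frac{1}{n}\sum_{i=1}^n \bell(\mf, x_i, \apprf(x_i)) + 2\,\radn(\gL_{\apprf}) + O\!\left(\sqrt{\tfrac{\log(1/\delta_1)}{n}}\right). \nonumber
\end{align}
Assumption 2) then lets me replace the Rademacher term by $2\epsilon$, giving a one-sided uniform deviation bound of $2\epsilon + O(1/\sqrt{n})$ after setting $\delta_1 = 0.005$.

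Second, I would control the empirical surrogate risk of the witness $\mf^\star$ guaranteed by assumption 3). Since $\bell \in [0,1]$, Hoeffding's inequality says that with probability at least $1 - \delta_2$,
\begin{align}
\frac{1}{n}\sum_{i=1}^n \bell(\mf^\star, x_i, \apprf(x_i)) \;\le\; \E_{x\sim \dist}[\bell(\mf^\star, x, \apprf(x))] + O\!\left(\sqrt{\tfrac{\log(1/\delta_2)}{n}}\right) \;\le\; \epsilon + O(1/\sqrt{n}), \nonumber
\end{align}
again taking $\delta_2 = 0.005$. By a union bound, both events hold simultaneously with probability at least $0.99$.

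Third, I chain these together. Since $\widehat{\mf}$ is the ERM of $\bell$, its empirical surrogate risk is at most that of $\mf^\star$, hence $\epsilon + O(1/\sqrt{n})$. Applying the uniform convergence bound to $\widehat{\mf}$ gives $\E_{x\sim\dist}[\bell(\widehat{\mf}, x, \apprf(x))] \le 3\epsilon + O(1/\sqrt{n})$. Finally, property 1) yields $\ell(\widehat{\mf}(x), \apprf(x)) \le \bell(\widehat{\mf}, x, \apprf(x))$ pointwise, so taking expectations gives $\E_{x\sim\dist}[\ell(\widehat{\mf}(x), \apprf(x))] \le 3\epsilon + O(1/\sqrt{n}) = O(\epsilon + 1/\sqrt{n})$, as required.

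There is no genuine obstacle here: this is the textbook ERM-plus-uniform-convergence template, and the only reason for stating it as a proposition is to package the three conditions that will be verified for the two concrete case studies (boolean circuits and Turing machines). The one mildly delicate point is that uniform convergence is applied to $\gL_{\apprf}$, a class whose elements depend on $\apprf$ through the labels, so assumption 2) must be stated uniformly over $\apprf \in \afam$, which it is.
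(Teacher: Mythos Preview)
Your proposal is correct and follows essentially the same approach as the paper: uniform convergence on $\gL_{\apprf}$ via the Rademacher bound, comparison of the ERM to the witness $\mf^\star$ from assumption 3), and then domination of $\ell$ by $\bell$. The only cosmetic difference is that the paper applies the two-sided uniform deviation bound to both $\widehat{\mf}$ and $\mf^\star$ (yielding a constant of $5\epsilon$), whereas you use Hoeffding for the fixed $\mf^\star$ and a one-sided bound for $\widehat{\mf}$ (yielding $3\epsilon$); both are $O(\epsilon + 1/\sqrt{n})$.
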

By Proposition~\ref{prop:loss_surrogate}, it suffices that $\bell$ upper bounds the target loss $\ell$ and has low complexity, and $\mfam$ approximates $\afam$ with respect to $(\bell,\dist)$ in the classical sense. The proof follows from standard techniques for bounding generalization based on Rademacher complexity and is provided in Section~\ref{sec:stats_approx_proof}. 

\noindent{\bf All-layer margin loss.} We introduce one particular construction for $\bell$ used in subsequent sections, which is motivated by the all-layer margin generalization bound proposed by~\citep{wei2019improved}. This bound is based on data-dependent Lipschitzness measures~\citep{nagarajan2019deterministic,wei2019data}, and can provide stronger guarantees than classical norm-based bounds~\citep{neyshabur2015norm,bartlett2017spectrally,neyshabur2017pac,golowich2018size}. 
	
We focus on the binary classification setting, where $\apprf(\inp) \in \{0, 1\}$, and study approximation with respect to  the 0-1 loss $\ellzo(z, y) \triangleq \1((y - 0.5)z \le 0)$ where $y \in \{0, 1\}$ is assumed to be a binary label, and the aim is to output a negative prediction $z$ for $y = 0$ and positive for $y = 1$. 
We consider a family of functions $\mfam$ parameterized by $\totwidth$-dimensional parameters $\param \in \paramset \subseteq \R^{\totwidth}$, such that $\mfam = \{\inp \mapsto \mf(\inp, \param) : \param \in \paramset\}$, where we abuse notation and let $\mf$ denote a general parameterized function $\mf : \dom \times \R^{\totwidth} \to \R$.
We sometimes use $\param$ to identify an element of $\mfam$.
Throughout the paper, we define $\paramset$ as a set with $\|\cdot\|_1$-norm bounded by $\avglone$: $\|\param \|_1 \le \avglone, \ \forall \param \in \paramset$. We define the parameter-based all-layer margin $\marg_{\mf} : \R^{\totwidth} \times \dom \times \{0, 1\} \to \R$ as follows:
\begin{align}
	 \begin{split}
	\marg_{\mf}(\Param, \inp, y) \triangleq	&\min_{\delta} \ \|\delta\|_2 \\
		&\textup{ subject to } (y - 0.5) \cdot \Func(\inp, \Param + \delta) \le 0
	\end{split}\label{eq:all_layer_def}
\end{align}
We omit $\mf$ from the subscript of $\marg$ when it is clear from context. This quantity measures the stability of the model around an input $\inp$ in parameter space. As is the case for the standard output margin, a larger all-layer margin, or better stability, tends to imply better generalization.

We modified the definition in~\citep{wei2019improved} to consider perturbations $\delta$ in parameter space, whereas~\citet{wei2019improved} consider perturbations to the hidden layers. The parameter-space formulation is simpler and subsumes the results in~\citep{wei2019improved}. Our formulation also accounts for weight sharing, which is important for our Turing machine results, whereas the formulation of~\citep{wei2019improved} could not.

A key and immediate property of the all-layer margin is that it is strictly positive if and only if $\mf(\inp, \param)$ predicts the correct label. 
We can leverage this property to construct a surrogate loss.
For some parameter $\thresh$ intended to lowerbound the all-layer margins, we define the loss $\bell_{\thresh}$ as follows:
\begin{align}
	\bell_{\thresh}(\param, x, y) = \begin{cases}
		1 \ \text{ if } \marg(\param, x, y) \le 0\\
		1 - \frac{\marg(\param, x, y)}{\thresh} \text{ if } 0 < \marg(\param, x, y) \le \thresh\\
		0 \text{ if } \marg(\param, x, y) \ge \thresh
	\end{cases}\label{eq:loss_surrogate}
\end{align}
Note that $\bell_{\thresh}$ composes the classical ramp loss, which is used to prove margin-based generalization complexity bounds, with the value of the all-layer margin. By our construction, it immediately follows that $\bell_{\thresh}(\param, \inp, \apprf(\inp)) \ge \ellzo(\mf(\inp, \param), \apprf(\inp))$, as is required of a surrogate loss.

We show that to obtain sample complexity bounds for~\sm-approximation of $\afam$ in a classification setting, it suffices to prove that functions in $\mfam$ can fit labels of $\apprf \in \afam$ with large all-layer margin. Our argument uses $\bell_{\thresh}$ as the loss surrogate in the definition of SM approximation. Though $\bell_{\thresh}$ is computationally intractable to optimize,~\citet{wei2019improved} demonstrate that heuristically minimizing $\bell_{\thresh}$ also leads to improved generalization empirically. 
\begin{lemma}\label{lem:all_layer}
	Fix any parameterized function $\mf : \dom \times \R^{\totwidth} \to \R$, and define  $\mfam_{\avglone} \triangleq \{x \mapsto \mf(x, \param) : \param \in \paramset\}$, where we assume $\paramset \subseteq \R^{\totwidth}$ is such that $\|\param\|_1 \le \avglone$ for all $\param \in \paramset$. Fix $\epsilon \ge 0$. Suppose that for all $\apprf \in \afam$, there exists $\param \in \paramset$ such that the following holds:
	\begin{align} \label{eq:all_layer:1}
		\E_{x \sim \dist}[\1(\marg_{\mf}(\param, x, \apprf(x)) < \thresh)] \le  \epsilon
	\end{align}
	Then, $\mfam_{\avglone}$ $\epsilon$-\sm-approximates $\afam$ with respect to $(\ellzo, \dist)$ with sample complexity $\tildeO\left(\frac{1}{\epsilon^2}\left(\frac{\alpha^2 \log(\totwidth)}{\thresh^2} + 1\right)\right)$.
\end{lemma}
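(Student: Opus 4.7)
The plan is to derive Lemma~\ref{lem:all_layer} as a direct application of Proposition~\ref{prop:loss_surrogate} using the surrogate loss $\bell_{\thresh}$ defined in~\eqref{eq:loss_surrogate}. Condition 1 of that proposition is noted in the text already: by construction, $\bell_\thresh(\param, x, y) \ge \ellzo(\mf(x,\param), y)$, since $\bell_\thresh = 1$ whenever $\marg(\param, x, y) \le 0$, which is precisely when $\mf(x,\param)$ misclassifies. Condition 3 follows from the hypothesis~\eqref{eq:all_layer:1}: the surrogate loss is $0$ whenever $\marg(\param, x, \apprf(x)) \ge \thresh$ and at most $1$ otherwise, so $\E_{x \sim \dist}[\bell_\thresh(\param, x, \apprf(x))] \le \Pr[\marg < \thresh] \le \epsilon$ for the $\param$ guaranteed by the hypothesis.

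The substantive step is verifying Condition 2, namely bounding $\radn(\gL_\apprf)$ where $\gL_\apprf = \{x \mapsto \bell_\thresh(\param, x, \apprf(x)) : \param \in \paramset\}$. I would first observe two Lipschitz properties: (i) the ramp-style loss $\bell_\thresh$ is $1/\thresh$-Lipschitz as a function of $\marg$; and (ii) the parameter-space all-layer margin $\marg(\param, x, y)$ is $1$-Lipschitz in $\param$ under the $\ell_2$ norm. The latter is immediate from its definition as a minimum $\ell_2$ perturbation: if $\|\param_1 - \param_2\|_2 = r$ and $\delta$ is the optimal perturbation for $\param_1$, then $\delta + (\param_1 - \param_2)$ is feasible for $\param_2$, giving $\marg(\param_2, x, y) \le \marg(\param_1, x, y) + r$; the reverse inequality follows symmetrically. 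Composing these gives that the loss $\param \mapsto \bell_\thresh(\param, x, \apprf(x))$ is $1/\thresh$-Lipschitz in $\param$ under $\ell_2$, uniformly over $x$.

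Given this Lipschitzness, the empirical Rademacher complexity of $\gL_\apprf$ on any sample $(x_i)_{i=1}^n$ can be controlled via covering numbers of the parameter set $\paramset \subseteq \{\|\param\|_1 \le \avglone\}$ in the $\ell_2$ metric. The key is to exploit the $\ell_1$ constraint rather than the crude $\ell_2$ diameter: by Maurey's empirical method (equivalently, Carl's inequality for the $\ell_1$-ball), the $\ell_2$-$\eta$ covering number of $\{\|\param\|_1 \le \avglone\}$ satisfies $\log N(\eta) \lesssim \frac{\avglone^2 \log \totwidth}{\eta^2}$. Feeding this into Dudley's entropy integral together with the $1/\thresh$ Lipschitz factor yields $\radn(\gL_\apprf) \lesssim \frac{\avglone \sqrt{\log \totwidth}}{\thresh \sqrt{n}}$, up to poly-logarithmic factors in $n$.

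Combining the three conditions through Proposition~\ref{prop:loss_surrogate} gives $O\!\left(\epsilon + \frac{\avglone \sqrt{\log\totwidth}}{\thresh \sqrt{n}} + \frac{1}{\sqrt{n}}\right)$-\sm~approximation; setting this quantity to $O(\epsilon)$ and solving for $n$ produces the stated sample complexity $\tildeO\!\left(\frac{1}{\epsilon^2}\!\left(\frac{\avglone^2 \log\totwidth}{\thresh^2}+1\right)\right)$. The main obstacle I anticipate is the covering number step: a naive $\ell_2$-diameter argument would introduce a factor of $\totwidth$ rather than $\log \totwidth$, destroying the bound for overparameterized networks, so it is essential to use the $\ell_1$ constraint via Maurey/Carl. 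The Lipschitzness of $\marg$ in $\param$, while conceptually simple, is what makes this covering argument available at all and is the reason the parameter-space formulation of the all-layer margin (rather than the hidden-layer formulation of~\citet{wei2019improved}) is used here.
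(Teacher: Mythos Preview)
Your proposal is correct and follows essentially the same route as the paper's proof: verify the three conditions of Proposition~\ref{prop:loss_surrogate} for the surrogate $\bell_\thresh$, establish $1$-Lipschitzness of $\marg$ in $\param$ under $\ell_2$, pass to $\ell_2$-covering numbers of the $\ell_1$-ball via Maurey, and conclude with Dudley's entropy integral. The paper cites \citet{wei2019improved} for the Lipschitzness of $\marg$ whereas you give the direct triangle-inequality argument, but the content is identical.
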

Here $\tildeO$ hides poly-logarithmic factors in the arguments, in this case, $\poly \log( \frac{\alpha^2 \log(\totwidth)}{\thresh^2 \epsilon^2})$ factors.
The proof follows~\citep{wei2019improved} and is deferred to Section~\ref{sec:stats_approx_proof}. In Section~\ref{sec:stats_approx_proof}, we also state a generalization bound for 0-1 loss based on~\eqref{eq:all_layer_def}, which may be of independent interest. We use~\eqref{eq:loss_surrogate} and Lemma~\ref{lem:all_layer} to prove that neural nets can~\sm-approximate  Boolean circuits and Turing machines.

	\section{SM approximation of Boolean circuits with feedforward nets} \label{sec:circuit}
This section shows that feedforward neural nets can~\sm-approximate Boolean circuits with sample complexity that depends polynomially on the size of the circuit. A boolean circuit $\apprf : \{0, 1\}^{\inpsize} \to \{0, 1\}$ on $\inpsize$ inputs bits is described by a directed acyclic graph, with vertices of this graph referred to as ``gates''. The graph contains $\inpsize$ input gates of indegree 0, which are identified with the input bits. The remaining gates each compute a boolean function taking values at their parents as arguments, and a designated output gate produces the output of the entire circuit. We consider boolean circuits consisting of $\AND$, $\OR$, and $\NOT$ gates, which compute the corresponding boolean functions on 2, 2, and 1 inputs, respectively and are sufficient to compute any boolean function~\citep{savage1998models}. We also allow identity ($\ID$) gates, which take 1 input and output the same value.

We consider layered circuits, where we can partition the gates into layers such that the only edges in the graph occur from gates in layer $i$ to gates in layer $i +1$ for some $i$. Note that we can transform any boolean circuit into a layered one by adding $\ID$ gates. Letting $\cdepth$ denote the number of layers and $\cwidth$ the maximum number of gates in any layer, we say that the circuit has depth $\cdepth$ and width $\cwidth$. We say that a circuit with $\csize$ total gates has size $\csize$. Our convention will be that the set of input gates is considered a layer, so $\cwidth \ge \inpsize$. We consider the following class of boolean circuits:
\begin{align}
	\afam _{\cdepth, \cwidth, \csize} = \{\apprf : \{0, 1\}^{\inpsize} \to \{0, 1\}: \apprf \textup{ computed by circuit with depth } \cdepth, \textup{ size } \csize, \textup{ and width } \cwidth\} \nonumber
\end{align}
We will approximate $\afam_{\cdepth, \cwidth, \csize}$ using a family of width $\width$, depth $\depth$ feedforward ReLU nets parameterized by linear weights and biases $\Param = (W_0, \bias_0, \ldots, W_{\depth}, \bias_{\depth})$ computed as follows:
$
	\mf_{\width, \depth}(x, \Param) = W_{\depth}\relu(W_{\depth -1 } \relu(\cdots \relu(W_0 x + \bias_0)\cdots ) + \bias_{\depth - 1}) + \bias_{\depth} \nonumber
$, where all intermediate layers have width $\width$ for simplicity and $\relu$ denotes the coordinate-wise ReLU activation. The weight parameters are set so that for $1 \le i \le \depth - 1$, $W_i \in \R^{\dim \times \dim}$, $W_0 \in \R^{\dim \times \inpsize}$, and $W_{\depth} \in \R^{1 \times \dim}$. 
The bias parameters are such that $\bias_i \in \R^{\dim}$ for $0 \le i \le \depth - 1$, and $\bias_{\depth} \in \R$. To control the sample complexity, we restrict our attention to parameters with total $\|\cdot\|_1$-norm bounded by $\avglone$, giving the following function class:
\begin{align}
	\mfam_{\width, \depth, \avglone} = \{x \mapsto \mf_{\width, \depth}(x, \Param) : \|\Param\|_{1} \le \avglone\} \nonumber
\end{align} 

The following theorem states that feedforward neural nets can statistically meaningfully approximate boolean circuits with sample complexity polynomial in the circuit size. 

\begin{theorem}\label{thm:boolean_circuit}
	Consider the class $\afam_{\cdepth, \cwidth, \csize}$ of size-$\csize$,width-$\cwidth$, and depth-$\cdepth$  layered boolean circuits, and the class $\mfam_{\width, \depth, \avglone}$ of neural nets above. Suppose $\width \gtrsim \cwidth$, $\avglone \asymp \csize$, and $\depth \asymp \cdepth$. 
	
	Then, for all $\epsilon > 0$ and any input distribution $\dist$ over $\{0, 1\}^{\inpsize}$, $\mfam_{\width, \depth, \avglone}$ $\epsilon$-\sm-approximates $\afam$ with respect to $(\ellzo, \dist)$ with sample complexity  $\poly(\csize) \tildeO\left(\frac{\log(\width \depth)}{\epsilon^2}\right)$.
\end{theorem}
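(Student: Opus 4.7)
My plan is to invoke Lemma~\ref{lem:all_layer}, which reduces the theorem to the following: for each $\apprf \in \afam_{\cdepth,\cwidth,\csize}$, I must exhibit $\Param \in \paramset$ with $\|\Param\|_1 \lesssim \avglone \asymp \csize$ such that $\mf(\inp,\Param)$ labels every Boolean input correctly with all-layer margin at least $\thresh \gtrsim 1/\poly(\csize)$. Since the total parameter count of $\mfam_{\width,\depth,\avglone}$ satisfies $\totwidth = O(\width\depth)$, plugging $\alpha \asymp \csize$ and $\thresh^{-2} \le \poly(\csize)$ into the lemma yields the target sample complexity $\poly(\csize) \cdot \tildeO(\log(\width\depth)/\epsilon^2)$.

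For the construction, I would simulate the circuit layer by layer using standard ReLU gate modules: $\NOT(x) = 1-x$, $\AND(x,y) = \relu(x+y-1)$, $\OR(x,y) = x + y - \relu(x+y-1)$, and $\ID(x) = \relu(x)$. Processing all gates of a circuit layer in parallel takes one neural-net layer of width $\width \gtrsim \cwidth$; $\ID$ gates provide fan-out. The resulting network has depth $\depth \asymp \cdepth$ and computes $\apprf$ exactly on $\{0,1\}^{\inpsize}$. All weights lie in $\{-1,0,1\}$ with $O(1)$ nonzeros per gate, so $\|\Param\|_1 = O(\csize)$. A final affine layer shifts the output bit $b$ to $b - 1/2$, yielding $|\mf(\inp,\Param)| = 1/2$ on every Boolean input.

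The main obstacle is lower bounding the all-layer margin. A first-order expansion gives $\marg(\Param,\inp,\apprf(\inp)) \gtrsim 1 / \|\nabla_\Param \mf(\inp,\Param)\|$, and $\|\nabla_\Param \mf\|^2 \le \sum_i \|h_{i-1}\|^2 \prod_{j>i}\|J_j\|_{\mathrm{op}}^2$, where $J_j$ is the ReLU-masked layer-$j$ Jacobian at the training input. In the naive construction $\|J_j\|_{\mathrm{op}}$ can be as large as $\sqrt{\text{fanout}}$, so the product grows exponentially in depth and ruins the bound. To get around this I exploit the fact that every intermediate activation on a Boolean input lies exactly in $\{0,1\}$ -- the same discrete-activation assumption highlighted in the introduction. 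After each circuit layer I would insert $O(1)$ extra ReLU sublayers that (i) act as the identity on exact Boolean activations, hence preserve the function computed, and (ii) split and rescale the incoming weights so that the per-layer Jacobian along the data trajectory has operator norm at most $1$. The modified network still has depth $O(\depth)$ and $\|\Param\|_1 = O(\csize)$, but now $\prod_{j>i}\|J_j\|_{\mathrm{op}} \le 1$, giving $\|\nabla_\Param \mf\| \lesssim \sqrt{\width\depth}$ and therefore $\thresh \gtrsim 1/\sqrt{\width\depth} = 1/\poly(\csize)$.

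Combining this margin bound with $\|\Param\|_1 = O(\csize)$ via Lemma~\ref{lem:all_layer} completes the proof. The construction, the norm accounting, and the final substitution into the lemma are routine; the genuine technical crux is the design of the cleanup sublayers establishing (ii), which is exactly the data-dependent Lipschitzness tool the paper claims as a contribution.
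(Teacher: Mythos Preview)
Your high-level plan (reduce to Lemma~\ref{lem:all_layer}, simulate the circuit layer by layer with ReLU gates, then lower-bound the all-layer margin) matches the paper, and the circuit-simulation and norm-counting parts are essentially identical to what the paper does. The genuine gap is in the all-layer margin argument.

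First, the bound $\marg(\Param,\inp,y) \gtrsim 1/\|\nabla_\Param \mf(\inp,\Param)\|$ is only heuristic. The all-layer margin is defined via \emph{finite} perturbations $\delta$, and for ReLU networks a perturbation of the parameters can move activations across kinks, changing the active linear region and hence the effective Jacobian. Controlling $\prod_{j>i}\|J_j\|_{\mathrm{op}}$ \emph{at the unperturbed point} $\Param$ does not give a Lipschitz bound in a neighborhood of $\Param$, which is what you would actually need. Your ``split and rescale'' sublayers are also not clearly specified: high fan-out in the circuit forces many nonzero entries in a single column of the weight matrix, and it is not obvious how to redistribute these so that the layerwise Jacobian has operator norm $\le 1$ \emph{uniformly on a parameter neighborhood} while still computing the same Boolean function. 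Finally, even granting your gradient bound, you write $\thresh \gtrsim 1/\sqrt{\width\depth} = 1/\poly(\csize)$; since the theorem allows $\width$ to be arbitrarily larger than $\cwidth$, this equality is false as written (you presumably mean the effective width $\cwidth$, but then the argument needs to track only the active coordinates).

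The paper's mechanism is different in kind, not just in detail. Rather than trying to bound error \emph{propagation}, it inserts after every simulated circuit layer a coordinatewise rounding function $\round(z)=3\relu(z-1/3)-3\relu(z-2/3)$, which satisfies $\round(z')=b$ whenever $|z'-b|\le 1/3$ for $b\in\{0,1\}$. The point is that these correction layers \emph{absorb} errors: if every layer's pre-correction output is within $1/3$ (in each coordinate) of its intended Boolean value, the rounding snaps it back exactly, so perturbations do not accumulate across depth at all. One then only has to check that a parameter perturbation of size $O(1/\poly(\csize))$ cannot move any single layer's output by more than $1/3$, which is a local, per-layer Lipschitz estimate. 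This is the content of Theorem~\ref{thm:abstract} and is what replaces your Jacobian-product argument; it yields a margin lower bound that is independent of the overparameterized width $\width$ and of the depth, rather than $1/\sqrt{\width\depth}$.
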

We note that the bound in Theorem~\ref{thm:boolean_circuit} only scales logarithmically in the width $\width$ of the network, even if $\width$ is arbitrarily greater than the circuit width $\cwidth$. This ensures that even heavily overparameterized nets will have low sample complexity of the approximation. 

For this setting, the all-layer margin loss in~\eqref{eq:loss_surrogate} is essential for proving tight sample complexity bounds, as other surrogate losses $\bell$ would give weaker results. For example, if we choose $\ellzo$ as the surrogate loss, VC-dimension bounds~\citep{harvey2017nearly} imply that $\mfam_{\width, \depth, \avglone}$ statistically meaningfully approximates $\afam_{\cdepth, \cwidth, \csize}$ with sample complexity scaling in $\poly(\width\cdepth)$ under the conditions of Theorem~\ref{thm:boolean_circuit}. This suffers a polynomial dependence on the overparameterized width $\width$, which is not ideal for realistic settings, where neural nets are often wider than necessary to facilitate optimization. In contrast, our dependence on $\width$ is logarithmic. Another possible surrogate loss is the \textit{output} margin-based ramp loss, which can be used to prove norm-based sample complexities~\citep{bartlett2017spectrally}. However, these bounds depend on $\prod_{i=1}^{\depth} \|W_i\|_{\textup{op}}$ (or related quantities), which would be exponentially large in $\depth$ for the naive construction in Section~\ref{sec:boolean_proof_sketch}. 

\subsection{Proof sketch for Theorem~\ref{thm:boolean_circuit}}\label{sec:boolean_proof_sketch}

There are two key steps in the proof. First, given any layered circuit $\apprf \in \afam$, we construct a neural net that directly simulates $\apprf$ by computing the layers of $\apprf$ one-by-one, which is simple to do by directly constructing ReLU and linear layers to simulate the~\AND,~\OR,~\NOT, and~\ID~gates.

\begin{lemma}\label{lem:circuit_layers_main}
	In the setting of Theorem~\ref{thm:boolean_circuit}, let $\apprf$ denote the layered boolean circuit, which we aim to compute using a neural net. Let $g_i : \{0, 1\}^{\cwidth_{i - 1}} \to \{0,1\}^{\cwidth_i}$ denote function computed between the $i-1$-th and $i$-th layers of $\apprf$, which we assume have $\cwidth_{i - 1}$ and $\cwidth_i$ gates, respectively, so $\apprf = g_{\cdepth - 1} \circ \cdots \circ g_1$. 
	
	Then there exist functions $\func_1, \ldots, \func_{\cdepth - 1}$, where each $\func_i$ is computed by a feedforward ReLU net with two linear and activation layers, such that for all $i \in [\cdepth - 1]$ and $\inp \in \{0, 1\}^{\inpsize}$,
$
		\func_i \circ \cdots \circ \func_1(\inp) = g_i \circ \cdots \circ g_1(\inp) \nonumber
$.
	Thus, the composition $\mf(\cdot, \param) \triangleq \func_{\cdepth - 1} \circ \cdots \circ \func_1$ satisfies $\mf(\inp, \param) = \apprf(\inp)$ for all $\inp \in \{0, 1\}^{\inpsize}$. Note that we omitted the dependency of $\func_{\cdepth - 1}, \ldots, \func_1$ on parameters $\param$ for simplicity.
\end{lemma}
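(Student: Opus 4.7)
The plan is to construct each $\func_i$ explicitly by directly simulating the gates in layer $i$ of the circuit in parallel, invoking the gadgets for individual gates that Proposition~\ref{prop:basic_gates} provides. The key observation is that each of the four gate types admits a two-layer ReLU implementation when operating on boolean inputs: $\AND(x,y) = \relu(x+y-1)$, $\OR(x,y) = 1 - \relu(1-x-y)$, $\NOT(x) = 1-x$, and $\ID(x) = x$. Since each of these outputs lies in $[0,1]$ after the first linear-plus-ReLU block, we can cleanly pad to a uniform two-layer format ``linear-ReLU-linear-ReLU'' by exploiting the fact that $\relu(z) = z$ for $z \in [0,1]$; this lets the second linear-ReLU pair act as identity where no further computation is needed, and for $\OR$ it lets us compute the ``$1-(\cdot)$'' in the second linear layer followed by a ReLU that preserves the value.

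Next, I would assemble $\func_i$ by juxtaposing the per-gate sub-networks in parallel: for each of the $\cwidth_i$ gates in layer $i$, allocate a constant number of hidden units in each of the two layers, wire the appropriate input coordinates (the outputs of the parent gates, which live in layer $i-1$) into the gadget via the corresponding sparse weight pattern, and zero out all other connections. The output coordinates of $\func_i$ are then the concatenation of the per-gate outputs, producing a map $\{0,1\}^{\cwidth_{i-1}} \to \{0,1\}^{\cwidth_i}$ that agrees with $g_i$ on boolean inputs. Crucially, because every gate's output is again in $\{0,1\}$, the invariant needed for composition is preserved.

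The identity $\func_i \circ \cdots \circ \func_1(\inp) = g_i \circ \cdots \circ g_1(\inp)$ then follows by a straightforward induction on $i$: the base case $i=1$ is the parallel construction above applied to the boolean input $\inp$, and the inductive step uses that the hypothesis guarantees the intermediate representation is boolean-valued, so applying $\func_i$ reproduces $g_i$ on that representation by construction. Setting $\mf(\cdot,\param) \triangleq \func_{\cdepth-1} \circ \cdots \circ \func_1$ and collecting the weights into a single parameter vector $\param$ yields $\mf(\inp,\param) = \apprf(\inp)$ for all $\inp \in \{0,1\}^{\inpsize}$.

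The main subtlety, rather than obstacle, is the $\OR$ gadget, which needs to compute a ``$1-(\cdot)$'' step that is not expressible as a single linear-plus-ReLU layer on $\{0,1\}$ inputs; using the two-layer format together with the identity-on-$[0,1]$ property of $\relu$ resolves this and keeps every $\func_i$ in the same uniform depth-$2$ shape. Once this construction is in hand, the depth and width of the composed neural net match the depth and width of the circuit up to the claimed constants, which is what the subsequent all-layer margin analysis in the proof of Theorem~\ref{thm:boolean_circuit} will build upon.
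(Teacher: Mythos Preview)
Your proposal is correct and follows essentially the same approach as the paper: the paper packages the per-layer construction as a separate lemma (Lemma~\ref{lem:circuit_layers}) and then observes that Lemma~\ref{lem:circuit_layers_main} follows by applying it $\cdepth-1$ times, with the per-layer construction using exactly the gate gadgets from Proposition~\ref{prop:basic_gates} arranged in parallel via a diagonal second weight matrix. Your explicit handling of the $\OR$ gadget (placing the $1-(\cdot)$ in the second linear layer and using that the outer ReLU is the identity on $[0,1]$) matches the paper's construction, and your induction on $i$ using the boolean-valued invariant is the same argument.
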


\noindent{\bf Lower bounding all-layer margin.} The next step for proving~\sm-approximation is to construct a loss $\bell$ so that the empirical risk minimizer of $\bell$ on the training data has good sample complexity. This crucially requires the all-layer margin tool developed in Section~\ref{sec:tools}, as other complexity measures (e.g. norm-based) would not give good sample complexity bounds. 

Recall that the all-layer margin $\marg_{\mf}(\param, x, \apprf(x))$ measures the stability of the output $\mf(\inp, \param)$ to perturbations in to $\param$, and, by Lemma~\ref{lem:all_layer}, it suffices to show that $\mf$ has large all-layer margin on $\inp \in \{0, 1\}^{\inpsize}$. Unfortunately, we cannot guarantee that the naive construction from Lemma~\ref{lem:circuit_layers_main} has  large all-layer margin without further modifications.  
To remedy this issue, Theorem~\ref{thm:abstract} introduces a generic way to convert the model $\mf(\cdot, \param)$, with possibly small all-layer margin on $x \in \{0, 1\}^{\inpsize}$, into a new architecture and parameter set $\mf'(\cdot, \param')$, with provably large all-layer margin on $x \in \{0, 1\}^{\inpsize}$, such that $\mf'(\inp, \param') = \mf(\inp, \param)$ on all inputs $\inp \in \{0, 1\}^{\inpsize}$. The construction relies on introducing new layers to $\mf$ to obtain $\mf'$ and increases the total number of layers by only a constant factor. This step of the proof is formally stated in the following lemma. 
\begin{lemma} \label{lem:circuit_correction}
	In the setting of Lemma~\ref{lem:circuit_layers_main}, let $\mf(\cdot, \param) = \func_{\cdepth - 1} \circ \cdots \circ \func_1$ be the neural net with parameters $\param$ constructed to compute the circuit $\apprf$. There exist ``correction functions'' $\corr_1, \ldots, \corr_{\cdepth - 2}$, where $\corr_i$ is computed by a neural net with two activation and linear layers, such that the composition 
$
	\mf'(\cdot, \param') \triangleq \func_{\cdepth - 1} \circ \corr_{\cdepth - 2} \circ \func_{\cdepth - 2} \circ \cdots \circ \corr_1 \circ \func_1 \nonumber
$
	has large all-layer margin:$
	 \marg_{\mf'}(\param', \inp, \apprf(\inp)) \ge \frac{1}{\poly(\csize)}
	$ 
for all $\inp \in \{0, 1\}^{\inpsize}$. Here $\param'$ denotes the collection of all parameters, and dependency of $\func_i, \corr_{i}$ on $\param'$ is omitted for simplicity.
\end{lemma}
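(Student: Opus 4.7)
The plan is to define each correction function $\corr_i$ as a small ReLU subnetwork that applies a ``clamp-to-Boolean'' operation coordinate-wise, and then prove the all-layer margin bound by a layer-by-layer induction. Concretely, let $\phi : \R \to \R$ be the scalar function $\phi(y) = 1 - \relu(1 - 3\relu(y - 1/3))$. A direct calculation gives $\phi(y) = 0$ for $y \le 1/3$, $\phi(y) = 1$ for $y \ge 2/3$, and linear interpolation in between; moreover $\phi$ is computed by two ReLU activations and a constant number of linear operations, so applying $\phi$ coordinate-wise to the output of $\func_i$ yields a valid $\corr_i$ with the layer structure required by the lemma. Since each hidden representation $h_i = \func_i\circ\cdots\circ\func_1(\inp) \in \{0,1\}^{\cwidth_i}$ (by Lemma~\ref{lem:circuit_layers_main}) and $\phi(0) = 0$, $\phi(1) = 1$, inserting the $\corr_i$'s does not change the function on $\{0,1\}^{\inpsize}$: $\mf'(\inp, \param') = \mf(\inp, \param) = \apprf(\inp)$.

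The heart of the argument is the following robustness invariant, proved by induction on $i$: if $\|\delta\|_2 \le 1/\poly(\csize)$ with a suitably large polynomial, then for every $i \in [\cdepth - 2]$, the output $\tilde h_i$ of the perturbed network after correction $\corr_i$ satisfies $\|\tilde h_i - h_i\|_\infty \le O(\|\delta\|_2)$. For the inductive step, suppose $\tilde h_{i-1}$ already satisfies the invariant. The parameter-Lipschitzness of the two-layer ReLU block $\func_i$, together with the boundedness of intermediate activations (roughly in $[0,1]$) and the parameter norm bound $\|\param'\|_1 \lesssim \csize$, gives
\begin{align*}
\|\tilde\func_i(\tilde h_{i-1}) - \func_i(h_{i-1})\|_\infty \le \poly(\csize)\bigl(\|\tilde h_{i-1} - h_{i-1}\|_\infty + \|\delta_{\func_i}\|_2\bigr) \le 1/6,
\end{align*}
so every coordinate of the pre-correction activation lies within $1/6$ of the Boolean value $h_i^{(j)}$, i.e., inside the flat region $[0,1/3] \cup [2/3,1]$ of $\phi$. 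In that region the \emph{unperturbed} $\corr_i$ outputs exactly $h_i$; the Lipschitzness of $\corr_i$ in its own parameters then implies $\|\tilde h_i - h_i\|_\infty \le O(\|\delta_{\corr_i}\|_2) = O(\|\delta\|_2)$, completing the induction. Finally, applying the same parameter-Lipschitz bound once more to $\func_{\cdepth-1}$ (which has no correction after it) shows $|\mf'(\inp,\param'+\delta) - \apprf(\inp)| \le 1/\poly(\csize)$; composing with an appropriate offset by $0.5$ built into $\func_{\cdepth-1}$ ensures that the sign agrees with $(y-0.5)$ under the $\ellzo$ loss, giving $\marg_{\mf'}(\param', \inp, \apprf(\inp)) \ge 1/\poly(\csize)$.

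\textbf{Main obstacle.} The key subtlety is avoiding exponential blow-up in depth. Without the correction layers, the naive bound on how perturbations propagate compounds multiplicatively across $\cdepth$ layers (each layer having Lipschitz constant polynomial in $\csize$), yielding only an all-layer margin of $1/\csize^{O(\cdepth)}$ and hence a useless sample complexity. The essential role of $\corr_i$ is to \emph{absorb} the accumulated error by re-projecting onto the Boolean lattice, so that the error after each correction is controlled solely by the local perturbation $\|\delta_{\corr_i}\|_2$ rather than by the cumulative error from upstream. Designing $\phi$ to have a genuinely flat region of constant width (not merely a single threshold) is what makes this absorption robust to simultaneous perturbations of both $\func_i$ and $\corr_i$, and is the step that requires the most care in the formal proof.
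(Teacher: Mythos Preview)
Your proposal is correct and follows essentially the same strategy as the paper. Both arguments insert a coordinate-wise ``round to the nearest bit'' function between layers and then run a layer-by-layer induction showing that the correction absorbs accumulated error, so that the post-correction deviation at each stage is controlled solely by the local parameter perturbation rather than compounding across depth. Your scalar clamp $\phi(y)=1-\relu(1-3\relu(y-1/3))$ computes exactly the same piecewise-linear function as the paper's $\round(z)=3\relu(z-1/3)-3\relu(z-2/3)$; the paper's implementation happens to fit the ``two linear, two activation'' count in the lemma statement more directly (your version has a trailing affine map $w\mapsto 1-w$ that would need to be absorbed into the next block), but this is cosmetic.

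The only substantive difference is one of packaging: the paper factors the inductive argument into a standalone abstract result (Theorem~\ref{thm:abstract}) stated for general layer functions satisfying local Lipschitz conditions, which it then reuses verbatim for the Turing-machine construction in Section~\ref{sec:turing_machine}. Your direct argument, specialized to the Boolean-circuit case and tracking $\ell_\infty$ rather than $\ell_2$ deviations on the hidden layers, is equally valid and arguably more transparent for this lemma in isolation, but would need to be redone for the transformer setting. Either route yields the same $1/\poly(\csize)$ margin bound.
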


We convey the core intuitions for Lemma~\ref{lem:circuit_correction} in a simplified toy setting as follows. Consider the case where we start with an initial architecture $\func$ computing 
$
	\func(\inp, (W_1, \ldots, W_{\depth})) = \left(\prod_{i = 1}^{\depth} W_i\right) \inp - 0.5 \nonumber
$,
where $W_i \in \R$. In this simplified setting, we consider $W_i = 1 \ \forall i$. For input $x = 1$ and target $y = 1$, the all-layer margin is small: $\marg_{\func}((1, \ldots, 1), 1, 1) \lesssim \frac{1}{\sqrt{d}}$, where the architecture is in the subscript.
Indeed, choosing $\delta_i = \frac{3}{d}$, we have $\func(1, (1 - \frac{3}{d}, \ldots, 1 - \frac{3}{d})) = (1 - \frac{3}{d})^d - 0.5 \approx \exp(-3) - 0.5 < 0$. Thus, by the definition of all-layer margin, 	$\marg_{\func}((1, \ldots, 1), 1, 1) \le \sqrt{\sum_i \delta_i^2} \lesssim \frac{1}{\sqrt{d}}$. 

Now we will insert ReLU layers in $\func$ to increase the all-layer margin to $\Omega(1)$. We use ReLU layers to implement the $\round$ function, which has the key property that $\round(z) = 1 \ \forall z \ge 2/3$.

\begin{proposition}\label{prop:correction}
	For any $z \in \R$, we can implement the function 
$
		\round(z) = \begin{cases}
			0 & \text{ if } z < 1/3\\
			3x - 1 &\text{ if } 1/3 \le z < 2/3\\
			1 &\text{ if } z \ge 2/3
		\end{cases}\nonumber
$
	via a feedforward ReLU net, as follows: 
$
		\round(z) = 3\relu(z - 1/3) - 3\relu(z - 2/3)
$.
\end{proposition}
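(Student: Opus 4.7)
The plan is to verify the claimed identity $\round(z) = 3\relu(z - 1/3) - 3\relu(z - 2/3)$ by direct case analysis on the three regions of $z$ specified in the definition of $\round$. Since $\relu$ is piecewise linear with a single breakpoint at $0$, and the two shifted ReLUs $\relu(z - 1/3), \relu(z-2/3)$ have breakpoints at $z = 1/3$ and $z = 2/3$ respectively, the resulting function is piecewise linear on the same three regions used in the definition of $\round$, and I only need to compute its value on each region.

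In the first region $z < 1/3$, both $z - 1/3 < 0$ and $z - 2/3 < 0$, so both ReLU terms vanish and the expression equals $0$, matching $\round(z) = 0$. In the middle region $1/3 \le z < 2/3$, we have $\relu(z - 1/3) = z - 1/3$ while $\relu(z - 2/3) = 0$, so the expression equals $3(z - 1/3) - 0 = 3z - 1$, matching $\round(z) = 3z - 1$ (reading the ``$3x - 1$'' in the statement as $3z - 1$ with $z$ as the input variable). In the third region $z \ge 2/3$, both ReLUs are active and the expression equals $3(z - 1/3) - 3(z - 2/3) = -1 + 2 = 1$, matching $\round(z) = 1$.

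Since the proposed formula is literally a linear combination of two ReLU units applied to affine functions of $z$, it is by definition realized by a feedforward ReLU network with a single hidden layer of width $2$, with weights $(1,1)$, biases $(-1/3,-2/3)$, and output weights $(3,-3)$. The only obstacle is purely notational --- ensuring I correctly interpret the $z$ variable in the middle branch of the piecewise definition --- and there are no delicate calculations or approximation errors to manage. Thus the proof reduces to a one-paragraph verification with no real mathematical content beyond evaluating ReLU in each case.
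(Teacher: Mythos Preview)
Your proposal is correct. The paper does not give a separate proof of this proposition; the ReLU formula is stated as part of the proposition itself and the verification is left implicit, so your direct case analysis on the three regions is exactly the intended (and only reasonable) justification.
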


We consider the following function $\widetilde{\func}$, which inserts $\round$ between every layer in $\func$:
\begin{align}\label{eq:round_demo}
	\widetilde{\func}(\inp, (W_1, \ldots, W_{\depth})) = \round(W_{\depth} \round (W_{\depth - 1} \cdots \round(W_1 \inp) \cdots )) - 0.5
\end{align}
For this demonstration, we ignore the parameters of $\round$, though the actual proof considers them. The following claim shows that~\eqref{eq:round_demo} preserves the output of $\func$ while increasing the all-layer margin:
\begin{claim}\label{claim:correction_simple}
	In the setting above, $\widetilde{\func}(1, (1, \ldots, 1)) = \func(1, (1, \ldots, 1))$ and
$
		\marg_{\widetilde{\func}}((1, \ldots, 1), 1, 1) \ge \frac{1}{3}
$.
\end{claim}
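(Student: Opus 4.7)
The plan is to verify the two statements of the claim separately, with the real content lying in the margin lower bound. For the equality $\widetilde{\func}(1,(1,\ldots,1)) = \func(1,(1,\ldots,1))$, I just track the forward pass: with all weights equal to $1$ and input $1$, every pre-activation fed into $\round$ equals $1$, and $\round(1) = 1$ since $1 \ge 2/3$. So the final value before the $-0.5$ shift is $1$ in both networks, giving $\widetilde{\func} = \func = 0.5$.

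For the margin bound, I invoke the definition: we must show that for every perturbation $\delta = (\delta_1, \ldots, \delta_\depth)$ with $\|\delta\|_2 < 1/3$, the sign condition $(y - 0.5)\widetilde{\func}(1, (1+\delta_1,\ldots, 1+\delta_\depth)) \le 0$ fails, i.e. $\widetilde{\func}(1, \mathbf{1} + \delta) > 0$. Since $\|\delta\|_\infty \le \|\delta\|_2 < 1/3$, each $|\delta_i| < 1/3$, so each perturbed weight lies in $(2/3, 4/3)$.

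The key step is an inductive claim: if $z_i$ denotes the output of the $i$-th $\round$ layer in the perturbed forward pass on input $1$, then $z_i = 1$ for every $i$. The base case is $z_0 = 1$ (the input). For the inductive step, if $z_{i-1} = 1$, then the $i$-th linear layer produces $(1+\delta_i) \cdot 1 = 1 + \delta_i \in (2/3, 4/3) \subseteq [2/3, \infty)$, and by Proposition~\ref{prop:correction}, $\round$ maps this to $1$. Thus $z_\depth = 1$, and the final output is $z_\depth - 0.5 = 0.5 > 0$, so the prediction is not flipped. This forces $\marg_{\widetilde{\func}}((1,\ldots,1), 1, 1) \ge 1/3$, as desired.

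I do not foresee a serious obstacle in this toy statement, since the whole point of inserting $\round$ between linear layers is to prevent accumulation of multiplicative errors across depth: the $\round$ operation acts as an error-correcting projection onto $\{0,1\}$ whenever its input lies outside the narrow transition window $(1/3, 2/3)$. The only thing to emphasize in the write-up is that this localizes each layer's perturbation analysis, replacing the exponential-in-depth amplification that produced the $O(1/\sqrt{\depth})$ upper bound on $\marg_{\func}$ with a per-layer threshold of $1/3$ that is independent of $\depth$. This is exactly the mechanism that Lemma~\ref{lem:circuit_correction} generalizes to arbitrary boolean-valued networks.
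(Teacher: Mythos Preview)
Your proposal is correct and follows the same approach as the paper's (very terse) proof: both rest on the observation that if each $|\delta_i| \le 1/3$ then every post-$\round$ value stays pinned at $1$, so the sign of the output cannot flip. Your write-up simply makes explicit the induction over layers and the inequality $\|\delta\|_\infty \le \|\delta\|_2$, which the paper leaves implicit.
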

This reflects a significant increase in the all-layer margin, while only increasing depth by a constant factor. The proof is simple: we observe that if $\delta_i \le \frac{1}{3}$ for all $i$, the function output will not change because $\round(z) = 1 \ \forall z \ge \frac{2}{3}$. This immediately gives the all-layer margin lower bound $\frac{1}{3}$.

To apply this construction more generally, we note that  $\round$ corrects errors in previous layers. In the more general setting, we insert ``correction functions'' $\corr$ between each layer satisfying the key property that $\corr(\layer') = \layer$ if $\layer$ is the intended output of the layer and $\layer'$ is any perturbed value satisfying $\|\layer' - \layer\|_2 \le \frac{1}{3}$. %
Since intended outputs of layers in the function constructed by Lemma~\ref{lem:circuit_layers_main} are binary-valued in $\{0, 1\}^{\width}$ because $\mf$ simulates a boolean circuit, we can simply apply the function $\round$ constructed in Proposition~\ref{prop:correction} elementwise as the correction function. By the construction, this can be implemented by adding two additional feedforward ReLU layers per correction function. Following the intuition for Claim~\ref{claim:correction_simple}, we prove that inserting these correction functions guarantees a large all-layer margin (Theorem~\ref{thm:abstract}) on all $\inp \in \{0, 1\}^{\inpsize}$. This leads to the proof of Lemma~\ref{lem:circuit_correction}. We can complete the proof of Theorem~\ref{thm:boolean_circuit} by invoking Lemma~\ref{lem:all_layer}, as shown in Section~\ref{sec:circuit_proof}.

	\section{SM approximation of Turing machines with transformers}
\label{sec:turing_machine}
In this section, we show that transformers~\sm-approximate Turing machines with computation time bounded by $\Time$, using sample complexity polynomial in $\log(\Time)$ and the state space and alphabet sizes of the Turing machine. Constructions from prior work would require the approximation sample complexity to be linear in $\Time$~\citep{siegelmann1995computational,Chen2018RecurrentNN,perez2019turing,bhattamishra-etal-2020-computational}. Thus, we obtain an exponential improvement in the dependency on $\Time$.

We briefly describe a Turing machine; see~\citep{sipser13introduction} for a more thorough survey. A Turing machine is a model for computation specified by a tuple $(\stateset, \symbset, \trans,\term)$ containing a set of states $\stateset$, a tape alphabet $\symbset$, a transition function $\trans : \stateset \times \symbset \to \stateset \times \symbset \times \{-1, +1\}$, and set of terminal states $\term$ indicating accept or reject. For simplicity, we assume the Turing machine has a single tape, as any single-tape Turing machine can simulate a multi-tape one with only quadratic increase in runtime~\citep{sipser13introduction}. Given an input $\inp \in \symbset^*$ recorded on the left-most part of the tape, the Turing machine performs computation in a sequence of timesteps. In each timestep, the machine determines the next state, symbol to write, and direction to move the head via the transition function. 

We let $\tm_{(\stateset, \symbset, \trans, \term)}$ denote the function computed by the Turing machine, which produces an output in $\{0, 1\}$ (if the machine halts). Fixing the alphabet $\symbset$, we consider the class of binary functions computed by Turing machines with at most $\numstates$ states terminating in $\Time$ steps:  
\begin{align}\label{eq:turing_fam}
	\afam_{\numstates, \Time} \triangleq \{x \mapsto \tm_{(\stateset, \symbset, \trans, \term)}(x) : |\stateset| \le \numstates, \textup{ and } \forall x \in \dom, \tm_{(\stateset, \symbset, \trans, \term)} \textup{ terminates in } \Time \textup{ steps }\}
\end{align}
Note that we can assume the input sequences $x$ also have length at most $\Time$, as this is the maximum computation time of the Turing machine and the maximum amount of symbols the Turing machine can read. 
\subsection{Transformer architecture for~\sm-approximating Turing machines} 
We study approximation of $\afam$ with a family of architectures consisting of both an encoder and decoder component~\citep{vaswani2017attention}, described as follows. The encoder architecture is simple and only performs an embedding of the input symbols, using learnable symbol embeddings $\embed \in \R^{\width \times |\symbset|}$ and fixed positional encodings $\pos(1), \pos(2), \ldots \in \R^{\width}$. Given input $\inp \in \symbset^*$ with $\inpsize$ symbols, the encoder produces $\inpsize$ output vectors in $\R^{\width}$ via $\enc_i(\inp, \embed) = \embed_{:, \inp_i} + \pos(i)$,
where $\enc_i$ denotes the output of the encoder at the $i$-th position.

The decoder iteratively computes an output, running for $\Time$ steps. We define a transformer layer of the decoder as a sequence of modules consisting of decoder self-attention, followed by encoder-decoder attention, followed by three feedforward ReLU layers. 

\noindent{\bf Attention layers.} Attention layers consist of key, value, and query functions $\key, \val, \query$, each, computing a linear transformation. We omit parameters here for simplicity. For a single decoder timestep, the attention layer takes two types of inputs: a sequence of previously-computed representations $\layer_1, \ldots, \layer_i$, and a current input representation $\layer'$. The layer applies the key, value, and query functions as follows: 
\begin{align}
	\tau_0, \tau_1, \ldots, \tau_{i} &= \query(\layer')^\top \key_0, \query(\layer')^\top \key(\layer_1), \ldots, \query(\layer')^\top \key(\layer_{i }) \nonumber\\
	v_0, v_1, \ldots, v_{i} &= \val_0, \val(\layer_1), \ldots, \val(\layer_{i}) \nonumber
\end{align}
where $\key_0$ and $\val_0$ are fixed ``null'' key and value vectors which are learned parameters of the layer. Letting $\gJ$ denote the set of indices $\{j : \tau_j = \max \{\tau_0, \ldots, \tau_{i}\}\}$, the attention layer performs hard-max attention~\citep{perez2019turing} to compute the output, as follows: 
$
	\attn(\layer', (\layer_1, \ldots, \layer_i)) = \layer' +  \frac{1}{|\gJ|} \sum_{j \in \gJ} v_j \nonumber
$.

Our theory also applies to the standard softmax attention used in practice, but we focus on the hard-max case for a simpler proof. Let $\layer^{(j)}_t$ denote the representation computed by the $j$-th layer of the  decoder at timestep $t$. At timestep $i$, decoder self-attention at the $(j + 1)$-th layer computes
$
	\attn( \layer^{(j)}_i, (\layer^{(j)}_1, \ldots, \layer^{(j)}_{i})) \nonumber
$.
Letting $e_1, \ldots, e_{\inpsize}$ denote the encoder outputs, encoder-decoder self-attention at the $(j + 1)$-th layer and $i$-th step would compute
$
	\attn(\layer^{(j)}_i,(e_1, \ldots, e_{\inpsize})) \nonumber
$.

\noindent{\bf Transformer layers.} We use feedforward layers which apply 3 standard ReLU layers, as follows: 
$
	\ff(\layer) = \phi(W_3\phi(W_2\relu(W_1\layer + \bias_1)+\bias_2) + \bias_3) \nonumber
$.
Our theory also allows for residual feedforward layers, and the  architecture here is chosen mainly to simplify the construction. 

A transformer layer applies these constructions in sequence. Letting $H^{(j)}_i = (\layer^{(j)}_1, \ldots, \layer^{(j)}_i)$ denote the output after the $j$-th transformer layer for timesteps $1 \le t \le i$, and $\param^{(j)}$ the parameters, we compute 
\begin{align}
	\layer^{(j + 1, \textup{dec})}_i &= \attn(\layer^{(j)}_i, H^{(j)}_i,  \param^{\textup{(j + 1, dec-attn)}}) \nonumber\\
	\layer^{(j + 1, \textup{enc})}_i &= \attn(\layer^{(j + 1, \textup{dec})}_i,(e_1, \ldots, e_{\inpsize}),  \param^{\textup{(j + 1, enc-attn)}}) \nonumber\\
	\ftrans(\layer^{(j)}_i,H^{(j)}_i,  (e_1, \ldots, e_{\inpsize}), \param^{(j + 1)}) &= \ff(\layer^{(j + 1,\textup{enc})},\param^{(\textup{j + 1, ff})}) \nonumber
\end{align}
Note that we included the explicit dependence of the attention layers on the parameters for completeness. We now set $\layer^{(j + 1)}_i = \ftrans(\layer^{(j)}_i, H^{(j)}_i,  (e_1, \ldots, e_{\inpsize}), \param^{(j + 1)})$. 

\noindent{\bf Decoder outputs.} We consider $\depth$-layer decoders, so $\out_i \triangleq \layer^{(\depth)}_i$ denotes the output of the decoder at time $i$, which is also inputted to the decoder at time $i + 1$ as follows: $\layer^{(0)}_{i + 1} = \layer^{(\depth)}_i + \pos(i + 1)$. The initial decoder input $\layer^{(0)}_0$ is a trainable parameter. The decoder runs for a fixed number of timesteps $\Time'$ and outputs prediction $\param_{\textup{cls}}^\top \layer^{(\depth)}_{\Time'}$. For simplicity, we assume $\Time' = \Time$, the computation time of the Turing machine family. 

Note that our architecture allows long (length $\Time$) decoding sequences, whereas typical architectures in practice use decoding sequences with roughly the same length as the input~\citep{vaswani2017attention}. The architecture we study is similar to ones studied by~\citep{perez2019turing,bhattamishra-etal-2020-computational}.

We use $\inp \mapsto \mf_{\width, \depth, \Time}(\inp, \param)$ to denote the described transformer architecture with parameters $\param$, $\width$-dimensional hidden layers, $\depth$ transformer layers in the decoder, and $\Time$ decoder steps. This leads to the following class of transformer functions:
$
	\mfam_{\width, \depth, \avglone, \Time} = \{x \mapsto \mf_{\width, \depth, \Time}(x, \param) : \|\param\|_1 \le \avglone\} \nonumber
$.
The following theorem states that this class of transformers~\sm-approximates the Turing machine family $\afam$ defined in~\eqref{eq:turing_fam} with sample complexity polynomial in $\log \Time$, $\numstates$ and $|\symbset|$. 
\begin{theorem}\label{thm:turing_approx}
	In the setting above, consider the class $\afam$ of functions computed by Turing machines with at most $\numstates$ states, alphabet $\symbset$, and computation time bounded by $\Time$ steps for inputs $x \in \dom$.
	 Suppose that $\width \gtrsim \numstates |\symbset| + \log \Time$, $\depth \asymp \log \Time$, and $\avglone = \poly(\numstates, |\symbset|, \log \Time)$. 
	
	Then, for all $\epsilon > 0$ and any input distribution $\dist$ over $\dom$, $\mfam_{\width, \depth, \avglone, \Time}$  $\epsilon$-\sm-approximates $\afam$ with respect to $(\ellzo, \dist)$ with sample complexity $\poly(\numstates, |\symbset|, \log \Time) \tildeO\left(\frac{\log (\width \depth)}{\epsilon^2}\right)$.
\end{theorem}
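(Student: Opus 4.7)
The plan is to follow the same two-step template as the proof of Theorem~\ref{thm:boolean_circuit}: first construct a transformer $\mf(\cdot, \param) \in \mfam_{\width, \depth, \avglone, \Time}$ that exactly simulates the target Turing machine on every $\inp \in \dom$, and then insert correction functions between layers to boost the all-layer margin to $1/\poly(\numstates, |\symbset|, \log \Time)$. Once both are in hand, Lemma~\ref{lem:all_layer} yields the claimed sample complexity directly.

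For the simulation, I would engineer the decoder's hidden representation at timestep $t$ to explicitly carry (i) the Turing machine state $\state_t \in \stateset$ as a one-hot in $\R^\numstates$, (ii) the head position in $\lceil \log \Time \rceil$ bits (say, as a $\pm 1$ vector), and (iii) the symbol $\symb_t$ currently under the head as a one-hot in $\R^{|\symbset|}$. Given these three fields, the transition function $\trans(\state_t, \symb_t)$ can be evaluated by a constant-depth feedforward ReLU module of width $O(\numstates |\symbset|)$ via a standard indicator-function lookup. Incrementing/decrementing the binary head position by one can be implemented by an $O(\log \Time)$-depth ripple-carry adder on the bit representation, and this is precisely where the $\depth \asymp \log \Time$ budget is spent. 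The genuinely non-trivial step is reading off the new symbol $\symb_{t+1}$ at the updated head position. I would handle the two cases uniformly using attention: encoder-decoder attention retrieves the input symbol when the head lies within the input window, and decoder self-attention retrieves the most recently written symbol when the head is inside the already-written region. Using $\pm 1$-valued binary position codes, the query/key inner product equals $\lceil \log \Time \rceil$ exactly when positions match and is at most $\lceil \log \Time \rceil - 2$ otherwise, so hard-max attention deterministically returns the correct cell; a small additive bias on the keys proportional to the timestep index breaks ties in favor of the most recent write. This yields a transformer satisfying $\mf(\inp, \param) = \tm(\inp)$ for every $\inp \in \dom$ using only $O(\log \log \Time)$ bits of precision.

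For the second step, I would adapt the correction-function machinery behind Lemma~\ref{lem:circuit_correction}. Every intermediate hidden vector produced during the simulation is by design essentially $\{0,1\}$- or $\pm 1$-valued, so inserting an elementwise $\round$ layer (Proposition~\ref{prop:correction}) after every linear, feedforward, and attention block snaps any parameter perturbation of size $\lesssim 1/\poly(\numstates, |\symbset|, \log \Time)$ back to the intended output. The new ingredient compared to the Boolean circuit case is robustness of attention itself: I would argue that a parameter perturbation of size $\delta$ changes each attention score by at most $\poly(\numstates, |\symbset|, \log \Time)\cdot \delta$, so taking $\delta$ small compared to the built-in score gap of $2$ preserves the hard-max $\argmax$ and the value vector it returns, which is then cleaned up by the downstream correction layer. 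This gives $\marg_{\mf'}(\param', \inp, \tm(\inp)) \ge 1/\poly(\numstates, |\symbset|, \log \Time)$ for all $\inp \in \dom$, and invoking Lemma~\ref{lem:all_layer} with the stated $\avglone$ and this choice of $\thresh$ produces the announced sample complexity $\poly(\numstates, |\symbset|, \log \Time) \tildeO(\log(\width\depth)/\epsilon^2)$.

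The main obstacle is the attention-robustness and weight-sharing analysis. Unlike a feedforward layer, an attention layer depends via $\argmax$ on a joint collection of inner products involving every previously computed decoder representation, so a single perturbation to the shared attention parameters potentially affects all $\Time$ decoder steps simultaneously. The parameter-space formulation of the all-layer margin in~\eqref{eq:all_layer_def} is essential here because it naturally accounts for weight sharing across timesteps, but the construction must still be engineered so that the per-step score gap remains $\Omega(1)$ and the Lipschitz constant of the full $\Time$-step computation with respect to the shared parameters scales only polynomially in $\numstates$, $|\symbset|$, and $\log \Time$ rather than in $\Time$. Achieving this non-accumulation of error is exactly what allows $O(\log \log \Time)$ precision to suffice and produces the exponential improvement over prior constructions that required $\log \Time$ precision and sample complexity linear in $\Time$.
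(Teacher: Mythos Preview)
Your high-level template and most of the construction match the paper closely, but there is one genuine gap that breaks the sample-complexity claim: the mechanism you propose for retrieving the \emph{most recent} write to a given tape location.

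You suggest using a single decoder self-attention layer whose query/key inner product equals $\lceil \log \Time \rceil$ on a position match and at most $\lceil \log \Time \rceil - 2$ otherwise, and then ``a small additive bias on the keys proportional to the timestep index breaks ties in favor of the most recent write.'' This bias must be small enough not to override the position-match gap of $2$, hence of magnitude at most $O(1/\Time)$ per timestep. But then two distinct timesteps that both match the target location have attention scores differing by only $O(1/\Time)$. A parameter perturbation of this size flips the $\argmax$, so the all-layer margin collapses to $O(1/\Time)$, not $1/\poly(\numstates,|\symbset|,\log \Time)$. This is exactly the failure mode the paper flags at the start of Section~\ref{sec:turing_outline}: any construction relying on values as small as $1/\poly(\Time)$ cannot deliver the stated margin, and hence cannot give sample complexity logarithmic in $\Time$. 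Your later assertion that ``the per-step score gap remains $\Omega(1)$'' is therefore false for this tie-breaking step.

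The paper's fix is the main technical novelty of the construction: rather than tie-breaking with a scalar bias, it implements a \emph{binary search} over past timesteps using $O(\log \Time)$ self-attention layers (Lemma~\ref{lem:turing_self_attn} and Claims~\ref{claim:turing_self_attn_base}--\ref{claim:turing_self_attn_final}). The $j$-th layer queries whether there exists a matching timestep whose top $j$ bits agree with the current guess and whose $(j{+}1)$-th bit is $1$; each such query is again a $\pm 1$ inner product with an integer score gap of at least $1$ to the null key, so every layer individually enjoys an $\Omega(1)$ margin. After $\lceil \log \Time \rceil$ layers the full binary encoding of $i' = \max\{t : \loc_{t-1} = \loc_i\}$ has been recovered, and one final attention layer reads $\wri_{i'}$ directly. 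This is where the $\depth \asymp \log \Time$ budget is actually consumed (the ripple-carry adder for the head position also costs $O(\log \Time)$ layers, as you note). Once you replace your single tie-broken attention with this binary search, the rest of your argument --- correction layers via Proposition~\ref{prop:correction}, attention robustness from the $\Omega(1)$ score gaps, and the invocation of Lemma~\ref{lem:all_layer} --- goes through as in the paper via Theorem~\ref{thm:abstract}.
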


As with Section~\ref{sec:circuit}, we set the surrogate loss $\bell$ in Definition~\ref{def:meaningful_approx} to be the all-layer margin loss defined in Section~\ref{sec:tools}. Commonly-used alternatives for the surrogate loss would not suffice for either our construction or ones in prior work~\citep{siegelmann1995computational,Chen2018RecurrentNN,perez2019turing,bhattamishra-etal-2020-computational}. First, the VC dimension of $\mfam_{\width, \depth, \avglone, \Time}$ is at least $\Omega(\width \Time)$. This is because transformer architectures which contain a decoder component can express RNNs, which by lower bounds have VC dimension at least $\width \Time$~\citep{koiran1998vapnik}.
This indicates that using $\ellzo$ as the surrogate loss would lead to sample complexities that are suboptimal in both the overparameterized width $\width$ and the computation $\Time$. Second, the correct norm-based Rademacher complexity bound to use for transformers is unclear; however, the RNN-based equivalent would scale with the $\Time$-th power of some parameter norm, or exponentially in $\Time$. Thus, as in Section~\ref{sec:circuit}, the all-layer margin surrogate loss~\eqref{eq:loss_surrogate} is essential for obtaining our sample complexity bounds.

\subsection{Proof sketch for Theorem~\ref{thm:turing_approx}}\label{sec:turing_outline}
Following Lemma~\ref{lem:all_layer}, our goal is to construct a transformer which can simulate Turing machines with large all-layer margin, namely, $\Omega\left(\frac{1}{\poly(\numstates, |\symbset|, \log \Time)}\right)$. The fundamental limitation of prior work~\citep{perez2019turing} towards attaining this is that the positional embeddings are required to store values as small as $\frac{1}{\poly(\Time)}$. Our construction cannot afford to rely on values this small -- informally, if the construction relies on the exact values of these small entries, then the all layer margin would be at most $\frac{1}{\poly(\Time)}$ because perturbing the layer by the small entries could change the prediction. Instead, we propose using $\bin(i)$, the binary encoding of $i$ in $\lceil \log \Time \rceil$ bits, as the positional encoding for timestep $i$. This allows us to use unique positional encodings for each timestep which do not rely on arbitrary precision. 

We describe the construction. Fix a Turing machine $\apprf \in \afam$. We first require notation to describe the computation of $\apprf$. 
For input $\inp \in \dom$, let $\state_i(\inp)$, $\symb_i(\inp)$  denote the Turing machine state and symbol under the tape head at the end of step $i$. We let $\loc_i(\inp)$ denote the location of the Turing machine head at the conclusion of step $i$. During the timestep, the Turing machine computes $\trans(\state_{i - 1}(\inp), \symb_{i - 1}(\inp))$, writes a new symbol under the head at location $\loc_{i - 1}(\inp)$, and moves the head either left or right. Let $\wri_i(\inp)$ denote the symbol written during timestep $i$, and $\mvmnt_i(\inp) \in \{\textup{left}, \textup{right}\}$ the movement direction of the head.

Following~\citep{perez2019turing} with several key modifications, we simulate the Turing machine using the transformer as follows. Each timestep will maintain the invariance that $\out_i$ contains an encoding of $\state_i(\inp), \symb_i(\inp)$, and $\loc_i(\inp)$. Given that this invariance holds until timestep $i$, the transformer simulates timestep $i + 1$ of the Turing machine with the following steps:
\begin{itemize}
	\item [1)] Use feedforward layers to apply transition $\trans$ on $\state_{i}(\inp)$ and $\symb_{i}(\inp)$, which can be read from $\out_{i}$, to obtain $\state_{i + 1}(\inp)$, $\wri_{i + 1}(\inp)$, and movement direction $\mvmnt_{i + 1}(\inp) \in \{\textup{left, right}\}$. 
	\item [2)] Using feedforward layers, compute $\loc_{i +1}(\inp)$ from $\mvmnt_{i + 1}(\inp)$ and the encoding of $\loc_{i}(\inp)$ in $\out_{i}$.
	\item[3)] Compute $\symb_{i + 1}(\inp)$. We use decoder self-attention to search over past timesteps which wrote to $\loc_{i + 1}(\inp)$. Our aim is to find $\wri_{i'}(\inp)$, where $i' = \max\{j \le i + 1: \loc_{j - 1}(\inp) = \loc_{i + 1}(\inp)\}$. We implement a binary search over past timesteps $j$, which is needed to find the \textit{largest} $j \le i + 1$ where $\loc_{j - 1}(\inp) = \loc_{i + 1}(\inp)$. The binary search is performed over the bits of $i'$ and can be implemented with $O(\lceil \log \Time \rceil)$ decoder self-attention layers, and the construction ensures large all-layer margin.
	\item [4)] If no such $i'$ from the previous timestep existed, we check whether $\loc_{i+1}(\inp)$ contained an input symbol using encoder-decoder attention and copy this input symbol if so.
	\item [5)] If no symbols were found in 3) or 4), $\loc_{i+1}(\inp)$ must contain the blank symbol (meaning it wasn't visited yet by the head). Thus, we have computed $\symb_{i+1}(\inp)$, so we have all the information needed to compute the new embedding $\out_{i+1}$. 
\end{itemize}
To lower bound the all-layer margin of the constructed transformer, we use Theorem~\ref{thm:abstract}, which requires existence of a ``correction function'' which can correct outputs in previous layers. Since we construct a network with intermediate layer entries in $\{0, 1\}$, we can use the same correction function as Section~\ref{sec:boolean_proof_sketch}, which rounds to the nearest bit. 
The full proof is provided in Section~\ref{sec:turing_proof}.

	\section{Conclusion}
This work proposes a new definition of approximation, statistically meaningful approximation, which ensures that the approximating family not only has sufficient expressivity, but also exhibits good statistical learnability. Towards a first analysis with this definition, we show approximability of two function classes: boolean circuits and Turing machines, with strong sample complexity guarantees depending only on the intrinsic properties of these function classes. There are several interesting directions to extend our study of statistically meaningful approximation. Examples include proving more upper and lower bounds for statistically meaningful approximation for different target functions and neural net architectures, and using our definition as a lens to compare architectures.

\section*{Acknowledgements}
CW was supported by a NSF Graduate Research Fellowship. YC is supported by Stanford Graduate Fellowship
and NSF IIS 2045685. TM acknowledges support of Google Faculty Award, NSF IIS 2045685, and JD.com.
	\bibliographystyle{abbrvnat}
	\bibliography{all,refs}
	
		\appendix
	\newpage
	\section{Proofs for Section~\ref{sec:stats_approx}}\label{sec:stats_approx_proof}
We prove Proposition~\ref{prop:loss_surrogate} and Lemma~\ref{lem:all_layer}.

\begin{proof}[Proof of Proposition~\ref{prop:loss_surrogate}]
	Let $(x_i)_{i = 1}^n$ denote a $n$ i.i.d. training examples drawn from $\dist$ and fix $\apprf \in \afam$. Define $L(\mf) \triangleq \E_{x \sim \dist}[\bell(\mf, x, \apprf(x))]$ and $\widehat{L}(\mf) \triangleq \frac{1}{n} \sum_{i = 1}^n \bell(\mf, x_i, \apprf(x_i))$. Let $\widehat{\mf} \in \mfam$ denote $\argmin_{\mf \in \mfam} \widehat{L}(\mf)$, the empirical risk minimizer of $\widehat{L}$, which we aim to show has population loss for fitting $\apprf$ bounded by $O(\epsilon + \frac{1}{\sqrt{n}})$. By standard arguments using Rademacher complexity, we have with probability $1 - \delta$,
	\begin{align}
		\sup_{\mf \in \mfam} |L(\mf) - \widehat{L}(\mf)| &\le 2\radn(\gL_{\apprf}) + \sqrt{\frac{\log(2/\delta)}{n}} \nonumber\\
		&\le 2\epsilon + \sqrt{\frac{\log(2/\delta)}{n}}\label{eq:emp_min:1}
	\end{align}
	Now note that by the condition 3) on $\bell$, there exists $\mf^\star$ with $L(\mf^\star) \le \epsilon$. Now we have
	\begin{align}
		L(\widehat{\mf}) - L(\mf^\star) \le (L(\widehat{\mf}) - \widehat{L}(\widehat{\mf})) + (\widehat{L}(\widehat{\mf}) - \widehat{L}(\mf^\star)) + (\widehat{L}(\mf^\star) - L(\mf^\star)) \nonumber
	\end{align}
	We bound the first and last term in parenthesis by applying~\eqref{eq:emp_min:1}, and the middle term is bounded by 0, by definition of $\widehat{\mf}$. It follows that 
	\begin{align}
		L(\widehat{\mf}) - L(\mf^\star)&\le 4\epsilon + 2\sqrt{\frac{\log(2/\delta)}{n}} \nonumber\\
		\implies L(\widehat{\mf}) &\le 5\epsilon + 2\sqrt{\frac{\log(2/\delta)}{n}} \nonumber
	\end{align}
	where we used $L(\mf^\star) \le \epsilon$. Finally, we use the fact that $\bell$ upper bounds $\ell$, so $\E_{x \sim \dist}[\ell(\widehat{\mf}(x), \apprf(x))] \le L(\widehat{\mf})$. Plugging in $\delta = 0.01$ gives the desired result.
\end{proof}

\begin{proof}[Proof of Lemma~\ref{lem:all_layer}]
	We first observe that $\bell_{\thresh}(\param, x, y) \le \1(\marg(\param, x, y) < \thresh)$ by definition, so by~\eqref{eq:all_layer:1}, for all $\apprf \in \afam$ we have 
	\begin{align}
		\inf_{\param \in \paramset} \E_{x \sim \dist}[\bell(\param, x, \apprf(x))] \le \epsilon \nonumber
	\end{align}
	
	Thus, it remains to check the Rademacher complexity condition for applying Proposition~\ref{prop:loss_surrogate}. Fixing any $\apprf \in \afam$, define the function class $\gL_{\apprf}$ as in Definition~\ref{def:meaningful_approx}. 
	
	We first observe that following the same argument as Claim A.4 of~\citep{wei2019improved} (except we apply the perturbations to the parameters, rather than the hidden layers), $|\marg(\param, x, y) - \marg(\param', x, y)| \le \|\param - \param'\|_2$ for any $\param, \param' \in \R^{\totwidth}$. Let $\gN_{\|\cdot\|_2}(\varepsilon, \paramset)$ denote the $\varepsilon$-covering number of $\paramset$ in $\|\cdot\|_2$-norm, and $\gN_{\|\cdot\|_{\infty}}(\varepsilon, \gL_{\apprf})$ the $\varepsilon$-covering number of $\gL_{\apprf}$ in the norm defined by $\|H - H'\|_{\infty} = \max_{x \in \dom} |H(x) - H'(x)|$ for any $H, H' \in \gL_{\apprf}$. The arguments of~\citep{wei2019improved} imply that $\log \gN_{\|\cdot\|_{\infty}}(\varepsilon, \gL_{\apprf}) \le \log \gN_{\|\cdot\|_2}(\thresh\varepsilon, \paramset) \le O\left (\left \lfloor \frac{\avglone^2 \log(\totwidth)}{\thresh^2\varepsilon^2} \right \rfloor \right)$, where the last inequality is from standard covering number bounds for $\|\cdot\|_1$ balls. Now we can apply this covering number bound in the Dudley entropy integral, another standard step to bound Rademacher complexity, to obtain that for all $n$, $\radn(\gL_{\apprf}) \lesssim \frac{\alpha \log n \sqrt{\log(\totwidth)}}{\thresh\sqrt{n}}$ (see arguments in~\citep{wei2019improved} for more detail). Solving for $n$ such that the r.h.s. of this equation is bounded by $\epsilon$ gives the desired result.
\end{proof}

Note that from the proof of Lemma~\ref{lem:all_layer}, we would also obtain the following parameter-space all-layer margin generalization bound as a corollary, which may be of independent interest:

\begin{corollary}
	\label{cor:all_layer_generalization}
		In the setting of Lemma~\ref{lem:all_layer}, let $Q$ denote a distribution over $(x, y)$ pairs, with $(x_i, y_i)_{i = 1}^n$ denoting a set of $n$ i.i.d. training samples from $Q$. With probability $1 - \delta$ over the draw of the training samples, all classifiers $\mf(\cdot, \param) \in \mfam$ which achieve zero 0-1 training loss satisfy
		\begin{align}
			\E_{x \sim Q}[\ellzo(\mf(x, \param), y)] \le O\left(\frac{\avglone\sqrt{\log(\totwidth)}}{\sqrt{n}} \sqrt{\frac{1}{n} \sum_{i = 1}^n \frac{1}{\marg(\param, x_i, y_i)^2}} \right) + \xi
		\end{align}
	where $\xi \lesssim O\left(\frac{\log(1/\delta) + \log (n)}{\sqrt{n}}\right)$ is a low-order term. 
\end{corollary}
The proof of Corollary~\ref{cor:all_layer_generalization} simply follows by plugging in the coverning number bound on $\marg$ derived in the proof of Lemma~\ref{lem:all_layer} into Lemma 2.2 of~\citep{wei2019improved}.
	\section{Proofs for Section~\ref{sec:circuit}}\label{sec:circuit_proof}

This section completes the proof of Section~\ref{sec:circuit}. The following lemma formally states that we can construct the neural net to simulate the circuit layerwise.

\begin{lemma}\label{lem:circuit_layers}
	In the setting of Theorem~\ref{thm:boolean_circuit}, let $\apprf$ denote the layered boolean circuit, which we aim to compute using a neural net. Let $\apprf_i : \{0, 1\}^{\cwidth_{i - 1}} \to \{0,1\}^{\cwidth_i}$ denote function computed between the $i-1$-th and $i$-th layers of $\apprf$, which we assume have $\cwidth_{i - 1}$ and $\cwidth_i$ gates, respectively. Let $\func$ denote the following 2-layer neural net architecture, parameterized by $\param = (W_1, \bias_1, W_2, \bias_2)$: 
	\begin{align}
		\func(\layer, \param) = \relu(W_2 \relu (W_1 \layer + \bias_1) + \bias_2) \nonumber
	\end{align}
	Then there exist $\param$ with $\|\param\|_1 = O(\cwidth_{i})$ such that for any $\layer \in \{0, 1\}^{\cwidth_{i - 1}}$,
	\begin{align}
		\func(\widetilde{\layer}, \param) = \widetilde{\apprf_i(\layer)} \nonumber
	\end{align}
	where $\widetilde{\layer}$ takes $\layer$ and appends $\width - \cwidth_{i - 1}$ zeros, and likewise for $\widetilde{\apprf_i}(\layer)$. 
\end{lemma}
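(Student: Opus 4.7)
The plan is to build $\func(\cdot, \param)$ gate-by-gate: for each of the $\cwidth_i$ output gates in the $i$-th layer of $\apprf$, allocate a dedicated block of hidden units in the single hidden layer of $\func$ that implements that gate's boolean operation on the relevant input coordinates, and then concatenate these subnetworks in parallel within the 2-layer template. Since $\cwidth_{i-1}, \cwidth_i \le \cwidth \lesssim \width$, there is enough width to house all such subnetworks.

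The core ingredient is to verify that each allowed gate type admits a 2-layer ReLU implementation in which the outer activation acts as the identity on its pre-activation. Specifically, for $a, b \in \{0, 1\}$:
\begin{align*}
\AND(a,b) &= \relu(\relu(a + b - 1)), \\
\OR(a,b)  &= \relu\bigl(\relu(a+b) - \relu(a+b-1)\bigr), \\
\NOT(a)   &= \relu(\relu(1 - a)), \\
\ID(a)    &= \relu(\relu(a)).
\end{align*}
A short case check over the binary inputs shows each inner expression takes values in $\{0,1\}$, so the outer $\relu$ is the identity and the output equals the intended gate value. Each subnetwork uses at most two hidden units and contributes $O(1)$ to the entry-wise $\ell_1$ norm of the parameters.

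To assemble $\param = (W_1, \bias_1, W_2, \bias_2)$, allocate a disjoint block of at most two hidden units to each of the $\cwidth_i$ output gates. The corresponding rows of $W_1$ read off the (at most two) relevant coordinates of $\widetilde{\layer}$ with $\pm 1$ entries and zero weight on the padding coordinates; the corresponding entries of $\bias_1$ supply the constants $-1$ or $+1$ required by the formulas above; the row of $W_2$ indexed by output coordinate $j$ combines its two hidden activations with $\pm 1$ coefficients; and $\bias_2$ is zero. Unused rows and columns of $W_1, W_2$ and unused entries of $\bias_1$ are also zero, so padded output coordinates receive pre-activation $0$ and the outer $\relu$ produces the required zero. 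By the hypothesis $\width \gtrsim \cwidth \ge \cwidth_i$, at most $2\cwidth_i \le \width$ hidden units are needed, and summing the $O(1)$ per-gate contributions yields $\|\param\|_1 = O(\cwidth_i)$.

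There is no genuine obstacle beyond bookkeeping: the only two points to be careful about are (i) that the outer $\relu$ never clips a correct gate output, which is handled gate-by-gate by the case check above, and (ii) that the zero padding of input and output coordinates does not introduce spurious values, which is enforced by taking the corresponding weights and biases to be zero. The entire argument is a direct construction plus a short finite case analysis.
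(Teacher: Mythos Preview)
Your proposal is correct and follows essentially the same gate-by-gate parallel construction as the paper's proof: allocate dedicated hidden units for each output gate, implement each boolean operation via a small ReLU formula, and zero out the padding coordinates. The only cosmetic difference is that the paper uses a single hidden unit per gate (with $W_2$ diagonal and nonzero $\bias_2$, e.g.\ $\OR(a,b)=1-\relu(1-a-b)$), whereas your $\OR$ formula uses two hidden units and $\bias_2=0$; both yield the same $O(\cwidth_i)$ norm bound.
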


We note that the proof of Lemma~\ref{lem:circuit_layers_main} follows by applying Lemma~\ref{lem:circuit_layers} $\cdepth - 1$ times.
Using Lemma~\ref{lem:circuit_layers}, we can complete the proof of Theorem~\ref{thm:boolean_circuit}. 

\begin{proof}[Proof of Theorem~\ref{thm:boolean_circuit}]
	Our proof will construct a neural network to compute any boolean circuit with all-layer margin lower bound $\frac{1}{\poly(\cwidth, \cdepth)}$. By Lemma~\ref{lem:all_layer}, this will be sufficient to guarantee meaningful approximation. 
	
	There are two steps in our construction: first, given any layered circuit $\apprf \in \afam_{\cdepth, \cwidth, \csize}$, we construct a neural net that directly simulates $\apprf$ by computing the layers of $\apprf$ one-by-one. Our construction shows that we can compute every layer in $\apprf$ using two feedforward ReLU layers, and results in a neural net $\widehat{\mf}$ computing $\apprf$, but with possibly small all-layer margin. The next step is to convert $\widehat{\mf}$ into a neural net with large all-layer margin, i.e., implement Lemma~\ref{lem:circuit_correction}. To do this, we insert ``correction functions'' (Definition~\ref{def:correction}) between every group of layers in $\widehat{\mf}$. These correction layers leverage the knowledge that unperturbed outputs of these layers should be contained in $\{0, 1\}^{\width}$ and perform elementwise rounding to map perturbed values back to $\{0, 1\}^{\width}$.  Theorem~\ref{thm:abstract} formally shows that by introducing these correction layers can guarantee a lower bound on the all-layer margin roughly depending on the Lipschitz constants of each individual layer. Furthermore, each correction layer can be computed via two feedforward ReLU layers, so introducing the correction layers only increases depth by a constant factor.  
	
	We implement the proof plan by first applying Lemma~\ref{lem:circuit_layers} $\cdepth$ times in order to obtain the function $\widehat{F}$ computing $\apprf$ (with padding) mentioned above.  The total $\|\cdot\|_1$-norm of the parameters so far is at most $\csize$. Now we use the correction function described in Proposition~\ref{prop:correction}, which we apply coordinate-wise on non-padding coordinates. We apply the correction functions after each layer constructed in Lemma~\ref{lem:circuit_layers}. Note that each correction function requires at most double the width of the corresponding layer in the circuit, and the parameters for all correction functions add total $\|\cdot\|_1$-norm at most $O(\csize)$.	
	
	Note that at this point, minor modifications are still required in order to apply Theorem~\ref{thm:abstract}. The  neural net output is in $\{0, 1\}^{\width}$, not $\{-1, 1\}$; we can remedy this by setting the last layer to compute the linear transformation $z \mapsto 2z - 1$ on the single non-padding coordinate corresponding to the output. Second, 
	to make the depth of the architecture consistently $\depth$, we can add sequences of identity functions before this last linear layer just constructed, followed by correction layers, until each of the constructed approximating functions reaches the desired fixed depth $\depth$. This finally gives us parameters $\param$ with $\|\cdot\|_1$-norm bound $O(\csize + \depth)$, so that the set of constructed functions is contained in $\mfam_{\width, \depth, \avglone}$. Thus, we showed that for $\apprf \in \afam_{\cdepth, \cwidth, \csize}$, there exists $\param$ such that $\mf(x, \param) = 2\apprf(x) - 1$ for all $x \in \{0, 1\}^{\inpsize}$. 
	
	Finally, it is straightforward to check that Condition~\ref{ass:lip_abstract} for Theorem~\ref{thm:abstract} is satisfied for Lipschitzness parameters which are polynomial in the circuit width $\cwidth$. Thus, we apply Theorem~\ref{thm:abstract} to obtain a lower bound $\widehat{\thresh} = \frac{1}{\poly(\cwidth, \cdepth)} \ge \frac{1}{\poly(\csize)}$ on the all-layer margin for every input $x \in \{0, 1\}^{\inpsize}$. Finally, we directly apply Lemma~\ref{lem:all_layer} using $\thresh = \widehat{\thresh}$ to obtain the desired result.
\end{proof}

The following proposition will be used to construct basic gates in the circuit with a simple feedforward ReLU network.
\begin{proposition}\label{prop:basic_gates}
	Let $\inp = \begin{bmatrix}
		\inp_1\\
		\inp_2
	\end{bmatrix} \in \{0, 1\}^2$ be binary inputs to $\AND$ and $\OR$ gates. The following feedforward ReLU networks compute the $\AND$ and $\OR$ functions:
	$F_{\AND}(x) = \relu(x_1 + x_2 - 1)$, and 
	$F_{\OR}(x) = 1 - \relu(1 - x_1 - x_2)$.
\end{proposition}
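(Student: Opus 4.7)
The proposition is a direct verification of two closed-form ReLU expressions on the four binary input pairs, so the plan is simply a finite case check: enumerate the four possible values of $(x_1, x_2) \in \{0,1\}^2$ and evaluate each of $F_{\AND}$ and $F_{\OR}$ on them, confirming equality with the corresponding boolean output. There is no hidden subtlety, and no external result is required beyond the definition $\relu(z) = \max\{z, 0\}$.

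For $F_{\AND}(x) = \relu(x_1 + x_2 - 1)$, I would note that the argument $x_1 + x_2 - 1$ takes the value $-1$ when $x_1 = x_2 = 0$, the value $0$ when exactly one of $x_1, x_2$ is $1$, and the value $1$ when $x_1 = x_2 = 1$. Clipping to the nonnegative part via $\relu$ yields $0, 0, 0, 1$ respectively, which is precisely the truth table of $\AND$. For $F_{\OR}(x) = 1 - \relu(1 - x_1 - x_2)$, the argument $1 - x_1 - x_2$ is $1$ when $x_1 = x_2 = 0$, $0$ when exactly one bit is set, and $-1$ when both bits are set, so $\relu$ gives $1, 0, 0, 0$ and subtraction from $1$ yields $0, 1, 1, 1$, matching the truth table of $\OR$.

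The main (and only) ``obstacle'' is being explicit: one should present this as a short table or case-split rather than a computational flourish, because the proposition is used downstream inside the layer-wise circuit simulation (Lemma~\ref{lem:circuit_layers}) where we need not just that the function values match but also that the gates are written in a single affine--ReLU form on the padded hidden representation. Thus I would end the proof with a brief remark that both expressions are compositions of a single affine map followed by a $\relu$ (possibly followed by a constant-coefficient affine correction for $\OR$), so they fit into the two-layer ReLU template $\func(\layer, \param) = \relu(W_2 \relu(W_1 \layer + \bias_1) + \bias_2)$ used in Lemma~\ref{lem:circuit_layers}, and that $\NOT$ and $\ID$ can be handled by the same template with trivial weights. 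No additional lemmas or structural arguments are needed.
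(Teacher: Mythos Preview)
Your proposal is correct; the paper does not supply a separate proof for this proposition at all, treating the two identities as self-evident, so your finite case check on the four binary inputs is exactly the intended (implicit) verification. Your added remark linking the expressions to the two-layer template of Lemma~\ref{lem:circuit_layers} is also accurate and anticipates how the proposition is used downstream.
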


\begin{proof}[Proof of Lemma~\ref{lem:circuit_layers}]
	Each row of $W_1$ and value in $\bias_1$ will correspond to a single entry in the output of $\widetilde{\apprf_i}$. The same applies for $W_2, \bias_2$. $W_2$ will be set to a diagonal matrix with entries in $\{-1, 0, 1\}$. For the 0 entries which only serve to pad the dimension, we set corresponding values in $W_1, \bias_1, W_2, \bias_2$ to be 0. For the remainder of the entries of $\widetilde{\apprf_i}$ corresponding to actual gates in the circuit, in the case that the gates compute $\AND$ or $\OR$, we fill in the values of corresponding rows in $W_1, \bias_1, W_2, \bias_2$ to implement the constructions for $\AND$ and $\OR$ in Proposition~\ref{prop:basic_gates}. The construction for $\ID$ and $\NOT$ are even simpler. For example, to implement $\NOT(z) = 1 - z$ for $z \in \{0, 1\}$ on coordinate $j$, we can set the $j$-th row of $W_1$ to have -1 on the diagonal and 0 everywhere else, $(\bias_1)_j = 1$, $(\bias_2)_j = 0$, and $(W_2)_{j, j} = 1$. It is easy to check that $\|\param\|_1 = O(\cwidth_i)$ with this construction.
\end{proof}

	\section{Proof of Theorem~\ref{thm:turing_approx}}
\label{sec:turing_proof}
\subsection{Additional setup and notation}
We fix any Turing machine $\apprf \in \afam$ and construct a transformer which can simulate $\apprf$. Throughout this section, a superscript will be used to index layer indices, and a subscript to index timesteps. 

We assume that the initial state of the tape has the input written at the left-most positions. The Turing machine always starts at a fixed initial state $\init$. We let $\blank \in \symbset$ denote the blank symbol, which initially fills all positions on the tape which aren't part of the input. We construct a transformer that simulates the Turing machine up until it reaches a terminal state in $\term$, at which the transformer will loop in that state until it hits a computation time $\Time$.

We introduce some notation which will appear throughout the construction. 
Define $\dimpos \triangleq \lceil \log_2 \Time \rceil$. We use $\dimpos$ to denote the effective dimension of the position embedding, as only $\dimpos$ coordinates will be non-zero. For $0 \le i \le \Time$, define $\bin(i) \in \R^{\dimpos}$ to be the vector containing the binary encoding of $i$: $\bin(i)_j = 1$ if the binary representation of $i$ contains 1 in the $j$-th bit and 0 otherwise.

For simplicity, the proof will focus on the setting without overparameterization, where we choose the dimension $\dim = \dimtrans \triangleq |\stateset| + 2|\symbset| + 3\dimpos + \dim_{\scr}$ for storing all the hidden representations of the model, where $\dim_{\scr} = O(\dimpos + |\symbset| + |\stateset|)$. We can extend our analysis to allow for arbitrary over-parameterization using $\dim > \dimtrans$ by designating a certain subset of the coordinates to always equal 0, and performing calculations using only a subset of $\dimtrans$ coordinates. We group the $\dimtrans$ coordinates using the following symbols: $\st$ for encoding the state, $\symbinone$, $\symbintwo$ for encoding symbols, $\posone$ and $\postwo$, $\posthree$ for encoding position, and $\scr$, which is used as scratch space. Thus, for $\layer \in \R^\dim$, we can index its coordinates via the groups as follows: 
\begin{align}
	\layer = \begin{bmatrix}
		&\layer^{\st} &\in \R^{|\stateset|}\\
		&\layer^{\symbinone} &\in \R^{|\symbset|}\\
		&\layer^{\symbintwo} & \in \R^{|\symbset|}\\
		&\layer^{\posone} &\in \R^{\dimpos}\\
		&\layer^{\postwo} &\in \R^{\dimpos}\\
		&\layer^{\posthree} & \in 
		\R^{\dimpos}\\
		&\layer^{\scr} &\in \R^{\dim_{\scr}}
	\end{bmatrix} \nonumber
\end{align}
When the meaning is clear from context, we use the superscript to index coordinate groups as described. 

The position embedding $\pos(i)$ is defined formally so that $\pos(i)^{\posone} = \bin(i)$, and $\pos(i)$ is 0 in all other coordinates. The encoder embedding matrix $\embed$ is such that 
\begin{align}
	\begin{split}
	\enc_i(\inp)^{\symbinone} &= \1_{|\symbset|}(\inp) \\
	\enc_i(\inp)^{\posone} &= \bin(i)
	\end{split} \label{eq:encoder} 
\end{align}
where $\enc_i(\inp)$ has 0's at all other coordinates. embedding function $e : \symbset \to \R^d$ for the encoder is defined such that $e(\inp)^{\symbinone} = \onehot_{|\symbset|}(\inp)$, the one-hot encoding for $\inp \in \symbset$, and 0 everywhere else. We use $\out_1, \ldots, \out_{\Time}$ to refer to the output embeddings of the decoder. Our construction maintains the invariant that the output embedding $\out_i$ encodes $\state_i(\inp)$, $\symb_i(\inp)$, $\loc_i(\inp)$ for each $i$. To achieve this, we maintain
\begin{align}\label{eq:turing_out}
	\begin{split}
	\out_i^{\st} &= \onehot_{|\stateset|}(\state_i(\inp)) \\
	\out_i^{\symbinone} &= \onehot_{|\symbset|}(\symb_i(\inp)) \\
	\out_i^{\postwo} &= \bin(\loc_i(\inp)) 
	\end{split}
\end{align}
and $\out_i $ has 0 at all other coordinates. Thus, the input $\out_i + \pos(i + 1)$ to the decoder at step $i + 1$ is of the form
\begin{align}
	\begin{split}
		(\out_i + \pos(i + 1))^{\st} &= \onehot_{|\stateset|}(\state_i(\inp)) \\
		(\out_i + \pos(i + 1))^{\symbinone} &= \onehot_{|\symbset|}(\symb_i(\inp))\\
		(\out_i + \pos(i + 1))^{\posone} &= \bin(i)\\
		(\out_i + \pos(i + 1))^{\postwo} &= \bin(\loc_i(\inp))
	\end{split} \label{eq:turing_in}
\end{align}

\subsection{Completing the proof}
We implement the first step 1) in Section~\ref{sec:turing_outline} using the following lemma. Note that the lemma uses two consecutive feedforward ReLU layers, but in our actual proof we will simulate this using two transformer layers where the attention parameters are all $\zeros$, and only the feedforward layers are instantiated.
\begin{lemma}\label{lem:turing_trans}
	Let $\gO$ denote the set of decoder inputs in the form~\eqref{eq:turing_in} encoding $\state_{i-1}(\inp)$, $\symb_{i-1}(\inp)$, $\loc_{i-1}(\inp)$ for some timestep $i$. For parameters $\param = (W_1, \bias_1, W_2, \bias_2)$, consider the following function computing a sequence of two feedforward ReLU layers: $f(\layer, \param) = \relu(W_2 \relu(W_1 \layer + \bias_1) + \bias_2)$. There exist parameters $\param$ such that for decoder inputs $\layer \in \gO$, 
	\begin{align}
		\begin{split}
			\func(\layer, \param)^{\st} &= \onehot_{|\stateset|}(\state_{i}(\inp))\\
			\func(\layer, \param)^{\symbintwo} &= \onehot_{|\symbset|}(\wri_{i }(\inp))\\
			\func(\layer,\param)^{\posone} &= \bin(i)\\
			\func(\layer, \param)^{\postwo} &= \bin(\loc_{i - 1}(\inp))
		\end{split} \label{eq:turing_trans:1}
	\end{align}
	Furthermore, $\func(\layer, \param)^{\scr}$ will contain a one-hot encoding for $\mvmnt_{i}(\inp)$, and besides this, $\func(\layer, \param)$ is 0 at all other coordinates. The parameters satisfy $\|\theta\|_1 = O(|\stateset||\symbset| + \dimpos)$.  %
\end{lemma}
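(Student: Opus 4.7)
The plan is to implement the transition function of the Turing machine using a two-layer ReLU network, exploiting the fact that the state and symbol are given as one-hot encodings in disjoint coordinate blocks of the input. The key observation is that applying the transition $\trans(\state_{i-1}(\inp), \symb_{i-1}(\inp))$ amounts to looking up a single entry in a table indexed by $\stateset \times \symbset$, and such a lookup table can be realized by first forming $|\stateset| \cdot |\symbset|$ pairwise AND indicators in the first hidden layer and then routing them linearly in the second layer.

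Concretely, I would choose $W_1$ and $\bias_1$ so that the first hidden layer contains, among its coordinates, a block of $|\stateset| \cdot |\symbset|$ neurons indexed by pairs $(z,\symb) \in \stateset \times \symbset$, with the $(z, \symb)$-th neuron computing $\relu(\layer^{\st}[z] + \layer^{\symbinone}[\symb] - 1)$. Since $\layer^{\st}$ and $\layer^{\symbinone}$ are one-hot encodings of $\state_{i-1}(\inp)$ and $\symb_{i-1}(\inp)$, exactly one such neuron fires with value $1$, and all others are exactly $0$; this mirrors the $\AND$ gate construction from Proposition~\ref{prop:basic_gates}. In addition, the first layer uses identity-style rows with bias $0$ to pass through $\layer^{\posone} = \bin(i)$ and $\layer^{\postwo} = \bin(\loc_{i-1}(\inp))$ into reserved scratch neurons; the ReLU acts as the identity on these because the entries are already in $\{0,1\}$.

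Next, I choose $W_2, \bias_2$ so that the second linear layer routes the single active indicator $(z,\symb)$ to three output blocks according to the transition $\trans(z,\symb) = (z', \wri', \mvmnt')$: the column of $W_2$ corresponding to neuron $(z,\symb)$ has a $1$ in the $\st[z']$ coordinate, a $1$ in the $\symbintwo[\wri']$ coordinate, and a $1$ in the $\scr$ coordinate designated for $\mvmnt'$, and zeros elsewhere. The pass-through neurons holding $\bin(i)$ and $\bin(\loc_{i-1}(\inp))$ are copied back into the $\posone$ and $\postwo$ output blocks by identity rows. All the resulting pre-activations are non-negative (each output coordinate is either $0$ or $1$), so the outer $\relu$ again acts as the identity and~\eqref{eq:turing_trans:1} is satisfied exactly. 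Counting non-zero entries: $W_1$ has $O(|\stateset||\symbset|)$ entries for the AND block plus $O(\dimpos)$ for the pass-throughs, $\bias_1$ has $O(|\stateset||\symbset|)$ entries of magnitude $1$, and $W_2$ has $O(|\stateset||\symbset|+\dimpos)$ nonzero entries each of magnitude $1$, giving $\|\param\|_1 = O(|\stateset||\symbset| + \dimpos)$ as claimed.

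The construction is essentially bookkeeping, so the main obstacle is not any conceptual difficulty but rather carefully arranging the coordinate blocks and verifying that (i) no stray neuron from the AND block fires because the one-hot inputs are disjoint across $\st$ and $\symbinone$, (ii) the pass-through of $\bin(i)$ and $\bin(\loc_{i-1}(\inp))$ survives both ReLUs unchanged, and (iii) the output is zero on all coordinates outside of $\st, \symbintwo, \posone, \postwo,$ and the designated $\scr$ entry for $\mvmnt_i(\inp)$. Once the layout is fixed, correctness follows from the fact that exactly one AND indicator is nonzero on any input $\layer \in \gO$ and the second layer is a lookup table encoding $\trans$.
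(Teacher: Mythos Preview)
Your proposal is correct and essentially identical to the paper's proof: the paper likewise builds $W_1,\bias_1$ so that the first hidden layer contains the $|\stateset||\symbset|$ indicators $\relu(\layer^{\st}[z]+\layer^{\symbinone}[\symb]-1)$ together with an identity pass-through of the remaining coordinates, and then chooses $W_2$ to encode the table $\trans(z,\symb)$ column-by-column while copying $\bin(i)$ and $\bin(\loc_{i-1}(\inp))$ into $\posone,\postwo$. The norm count and the verification that exactly one indicator fires are the same in both arguments.
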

\begin{proof}
	We follow the construction used in Lemma B.2 of~\citep{perez2019turing}.  The first layer computes a one-hot encoding of the state, symbol input pair. We choose $W_1 : \R^{\dim_{\tm}} \to \R^{|\stateset||\symbset| + \dim_{\tm}}$ so that the first $|\stateset| \symbset|$ rows are described by:
	\begin{align}
		(W_{1})_{(\state, \symb), :}^{\st} &= \onehot_{|\stateset|}(\state)\nonumber\\
		(W_{1})_{(\state, \symb), :}^{\symbinone} &= \onehot_{|\symbset|}(\symb)\nonumber
	\end{align}
	and 0 everywhere else. The remaining rows of $\dim_{\tm}$ rows of $W_1$ simply implement the identity mapping. We choose $\bias_1$ so that its first $|\stateset||\symbset|$ entries are -1, and all other entries are 0. We observe that from this construction, for all $\layer \in \gO$ where $\layer$ encodes $\state_{i - 1}(\inp), \symb_{i - 1}(\inp)$,
	\begin{align}
		\relu(W_1\layer + \bias_1) =\begin{bmatrix} \onehot_{|\stateset||\symbset|}((\state_{i - 1}(\inp), \symb_{i - 1}(\inp)))\\
			\layer 
		\end{bmatrix}\nonumber
	\end{align}
	This is because before the ReLU, the first $|\stateset| |\symbset|$ entries of $W_1 \layer$ will have 2 on the $(\state_{i - 1}(\inp), \symb_{i - 1}(\inp))$-th entry and be bounded by 1 everywhere else, so adding $\alpha_1$ and applying the activation will zero out all but one entry. 
	
	Now it is simple to pick $W_2$ so that $\func(\layer, \param)$ is as desired because we can construct it to exactly encode the output of $\trans(\state, \symb)$ for each of its first $(\state, \symb)$ columns and copy over the other necessary entries of $\layer$ as needed by~\eqref{eq:turing_trans:1}.
\end{proof}
The next lemma demonstrates that we can use an additional sequence of feedforward ReLU layers to produce $\bin(\loc_{i}(\inp))$, given $\bin(\loc_{i -1}(\inp))$ and $\mvmnt_{i}(\inp)$. 

\begin{lemma}\label{lem:turing_loc}
	In the setting of Theorem~\ref{thm:turing_approx} and Lemma~\ref{lem:turing_trans} above, there is a function $\func$ parameterized by $\param$ composed of $O(\dimpos)$ feedforward ReLU layers such that for any $\layer$ computed by the function in Lemma~\ref{lem:turing_trans} in the form~\eqref{eq:turing_trans:1} at timestep $i$, 
	\begin{align}
		\begin{split}
			\func(\layer, \Param)^{\st} &= \onehot_{|\stateset|}(\state_{i}(\inp))\\
			\func(\layer, \Param)^{\symbintwo} &= \onehot_{|\symbset|}(\wri_{i }(\inp))\\		
			\func(\layer, \Param)^{\posone} &= \bin(i)\\	
			\func(\layer, \Param)^{\postwo} &= \bin(\loc_{i - 1}(\inp))\\	
			\func(\layer, \Param)^{\posthree} &= \bin(\loc_{i}(\inp))
		\end{split}\label{eq:turing_loc:1}
	\end{align}
	At all other coordinates, $\Func(\layer, \Param)$ takes value 0. Furthermore, the parameters satisfy $\|\Param\|_1 = O(\dimpos(|\stateset| + |\symbset| + \dimpos))$. %
\end{lemma}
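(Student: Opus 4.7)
The key observation is that $\loc_i(\inp) = \loc_{i-1}(\inp) + \mvmnt_i(\inp)$ where $\mvmnt_i(\inp) \in \{-1, +1\}$, so the task reduces to implementing binary increment or decrement of the $\dimpos$-bit encoding $\bin(\loc_{i-1}(\inp))$, with the choice between the two operations determined by the one-hot encoding of $\mvmnt_i(\inp)$ stored in $\layer^{\scr}$. A ripple-carry construction gives a natural feedforward ReLU implementation of depth $O(\dimpos)$: we process bits from least significant to most significant, carrying one extra ``carry/borrow'' bit through the scratch space. The remaining fields of $\layer$ (the state encoding, symbol encodings, and position bits $\layer^{\posone}, \layer^{\postwo}$) are carried forward unchanged via identity mappings embedded into the same layers.

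\textbf{Bit-update block.} For each $j = 0, 1, \ldots, \dimpos - 1$, the plan is to construct a constant-depth feedforward ReLU block that reads the current bit $b_j$ of $\layer^{\postwo}$, the incoming carry bit $c_j$ stored in $\layer^{\scr}$, and the movement flag $m \in \{0,1\}$ (also in $\layer^{\scr}$, with $m=1$ indicating increment and $m=0$ decrement). It writes the updated bit into $\layer^{\posthree}_{j}$ via the XOR relation $b_j' = b_j + c_j - 2 b_j c_j$, and updates the carry via
\begin{align}
c_{j+1} = c_j \wedge \bigl(b_j \leftrightarrow m\bigr), \nonumber
\end{align}
which propagates the carry exactly when $b_j = 1$ under increment or when $b_j = 0$ under decrement. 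Both XOR and this conditional AND are degree-two polynomials in $\{0,1\}$-valued inputs, and can be written as small combinations of ReLU units applied to affine forms of $b_j, c_j, m$; this is a standard exercise analogous to Proposition~\ref{prop:basic_gates} and Lemma~\ref{lem:circuit_layers}. The initial carry $c_0 = 1$ can be produced in a single affine layer from the one-hot movement flag (it equals the sum of its two entries, which is always $1$ when $\mvmnt_i(\inp)$ is well-defined).

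\textbf{Chaining and bookkeeping.} Concatenating $\dimpos$ such blocks gives a composition of $O(\dimpos)$ feedforward ReLU layers producing $\func(\layer, \Param)^{\posthree} = \bin(\loc_i(\inp))$, while every block is designed to also apply the identity to the $\st, \symbintwo, \posone, \postwo$ coordinates and to the already-computed prefix of $\posthree$, so the other coordinates of~\eqref{eq:turing_loc:1} are preserved. Each block has weights of magnitude $O(1)$ supported on $O(1)$ scratch-space coordinates, plus identity rows for the preserved coordinates, so its $\|\cdot\|_1$-norm is $O(|\stateset| + |\symbset| + \dimpos)$. Summing over the $O(\dimpos)$ blocks gives the claimed parameter bound $\|\Param\|_1 = O(\dimpos(|\stateset| + |\symbset| + \dimpos))$.

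\textbf{Main obstacle.} The only delicate part is verifying the ReLU-level gadget for the carry update and ensuring that intermediate scratch-space values stay in $\{0,1\}$ (so that subsequent ReLUs behave correctly), which is straightforward once one writes out the truth table, but requires explicit choices of weights and biases in each block. The edge cases of incrementing past $2^{\dimpos} - 1$ or decrementing below $0$ do not arise because the Turing machine terminates within $\Time$ steps on $\inp \in \dom$, so $\loc_i(\inp)$ stays within a range representable in $\dimpos = \lceil \log_2 \Time \rceil$ bits (we may shift the tape index so that the head never needs negative coordinates, or equivalently fix the initial head location at a suitable positive position encoded in $\out_0$).
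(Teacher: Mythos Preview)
Your proposal is correct and follows essentially the same approach as the paper: both implement binary increment/decrement via a depth-$O(\dimpos)$ ripple-carry circuit realized by ReLU layers, with identity pass-through on the other coordinates yielding the $O(|\stateset| + |\symbset| + \dimpos)$ per-layer $\|\cdot\|_1$ bound. The only cosmetic difference is that the paper splits the update into two sequential passes (first compute $\bin(\loc_{i-1}(\inp) - v_1)$, then $\bin(\loc_{i-1}(\inp) - v_1 + v_2)$ using the one-hot bits $v_1, v_2$ for left/right), whereas you fold both into a single pass by conditioning the carry/borrow logic on a movement flag $m$; the depth and norm bounds are identical either way.
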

\begin{proof}
	As the construction of Lemma~\ref{lem:turing_trans} encoded $\mvmnt_i(\inp)$, the movement direction of the head, we can use feedforward ReLU layers to implement binary addition to either add or subtract 1 from $\loc_{i - 1}(\inp)$. Let $v_1, v_2$ denote the bits in the scratch dimensions indicating the head movement, where $v_1 = 1, v_2 = 0$ indicates left and $v_1 = 0, v_2 =1$ indicates right. Then more specifically, we first use $O(\dimpos)$ feedforward ReLU layers to compute $\loc_{i - 1}(\inp) - v_1$, and then $O(\dimpos)$ additional feedforward ReLU layers to compute $\loc_{i- 1}(\inp) - v_1 + v_2$. Note that the output would always be $\loc_i(\inp)$ by the definition of $v_1, v_2$. 
	
	It remains to implement a module which computes $\bin(j - v_1)$ given $v_1, \bin(j)$, and $\bin(j + v_2)$ given $v_2, \bin(j)$ for any $j \in [\Time]$. We can express the binary addition by a depth-$O(\dimpos)$ binary circuit, which can in turn be expressed by a neural net with $O(\dimpos)$ layers where each weight matrix has $\|\cdot\|_1$-norm $(|\stateset| + |\symbset| + \dimpos)$ (which is required to implement the identity mapping to copy forward the other dimensions of $\layer$ which aren't involved in the binary addition). This gives the desired total $\|\cdot \|_1$-norm bound.   
\end{proof}

The next lemmas implement steps 3), 4), 5) in Section~\ref{sec:turing_outline}.
For the following lemmas, it will be helpful to further index the scratch dimensions as follows: for a vector $\layer \in \dim_{\scr}$, 
\begin{align}
	\layer^{\scr} = \begin{bmatrix}
		\layer^{\scrone} \in \R^{|\symbset|}\\
		\layer^{\scrtwo} \in \R^{|\symbset|}\\
		\layer^{\scrpos} \in \R^{\dimpos}\\
		\layer^{\scrscr} \in \R^{3}
	\end{bmatrix}\nonumber
\end{align}
\begin{lemma}\label{lem:turing_self_attn}
	In the setting of Theorem~\ref{thm:turing_approx} and Lemma~\ref{lem:turing_loc} above, fix any timestep $i$ and define $i'= \max\{1 \le t \le i : \loc_{t - 1}(\inp) = \loc_i(\inp)\}$. If $j$ such that $\loc_{t - 1}(\inp) = \loc_i(\inp)$ exists, we define $i' = 0$ otherwise.  Consider any $H_i = (\layer_1, \ldots, \layer_i)$, where $\layer_t$ is computed by the layer in Lemma~\ref{lem:turing_loc} for timestep $t$, and in the form~\eqref{eq:turing_loc:1}. There is a function $\func$ parameterized by $\param$ consisting of $O(\dimpos)$ total self-attention and linear layers such that for all such $H_i$, the following holds:
	\begin{align}
		\begin{split}
			\func(\layer_i, H_i, \Param)^{\st} &= \onehot_{|\stateset|}(\state_{i}(\inp))\\
			\func(\layer_i, H_i, \Param)^{\symbintwo} &= \onehot_{|\symbset|}(\wri_{i }(\inp))\\		
			\func(\layer_i,H_i, \Param)^{\posone} &= \bin(i)\\	
			\func(\layer_i,H_i, \Param)^{\postwo} &= \bin(\loc_{i - 1}(\inp))\\	
			\func(\layer_i,H_i, \Param)^{\posthree} &= \bin(\loc_{i}(\inp))\\
			\func(\layer_i,H_i, \Param)^{\scrone} &= \begin{dcases}
				\onehot_{|\symbset|}(\wri_{i'}(\inp)) &\text{ if } i' > 0\\
				\zeros & \text{ otherwise}
			\end{dcases}\\
			\Func(\layer_i,H_i, \Param)^{\scrscr}_1 &= \1(i' > 0)
		\end{split}\label{eq:turing_self_attn:1}
	\end{align}
	At all other coordinates, $\Func(H, \Param)$ takes value $0$. Furthermore, the parameters satisfy $\|\Param\|_1 = O(\dimpos(|\stateset| + |\symbset| + \dimpos))$. 
\end{lemma}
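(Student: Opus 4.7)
The plan is to implement the binary search from step 3) of Section~\ref{sec:turing_outline} using a sequence of $O(\dimpos)$ hard-max self-attention layers, preceded by one ``existence check'' attention layer and followed by one ``lookup'' attention layer. The central technical device is a \emph{bit-pair scoring gadget} that lets each attention score act as a conjunction of bit-match constraints: for each bit $b$ of $\bin(t)$ or $\bin(\loc_{t-1}(\inp))$ that a query wants to constrain to a target value, encode it in the key as the pair $(b, 1-b)$ and encode the target in the query as $(\text{target}, 1-\text{target})$. The linear $\key, \query$ maps can produce these pairs from $\layer_t$ and $\layer_i$ using a fixed always-one auxiliary coordinate, installed by a bias in the upstream feedforward layer of Lemma~\ref{lem:turing_loc}. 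The resulting dot product counts matched bits, and subtracting an offset equal to the number of constrained bits minus $1/2$ (again via the always-one coordinate) makes the score strictly positive iff all constraints hold and strictly negative otherwise. Setting $\key_0 = \zeros$ makes $\tau_0 = 0$, so the null key beats all past positions exactly when no $t$ satisfies the conjunction.

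To execute the binary search, reserve the $\dimpos$ coordinates of $\scrpos$ to hold the progressively discovered bits of $\bin(i')$, initially $\zeros$. At step $k \in \{1, \ldots, \dimpos\}$, working from the most significant bit downward, query for the existence of some $1 \le t \le i$ with $\loc_{t-1}(\inp) = \loc_i(\inp)$ (contributing $\dimpos$ location constraints), whose top $k-1$ bits match the current contents of $\scrpos$, and whose $k$-th bit is $1$. By the scoring gadget, either all matching $t$'s tie for the maximum (so this bit of $i'$ should be set) or the null wins (so it should stay $0$). Design the value matrix so that every past $t$ writes $1$ into the $k$-th coordinate of $\scrpos$ and $0$ elsewhere, while $\val_0 = \zeros$; averaging over any nonempty argmax subset then yields exactly $1$ in that coordinate, and the attention's residual connection records this as the new bit. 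Thin identity-style linear layers between attentions preserve the coordinates required in~\eqref{eq:turing_self_attn:1}, in particular keeping the always-one coordinate alive across layers.

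Surrounding the binary search, one attention layer at the start constrains only the $\dimpos$ location bits and, whenever any matching $t$ exists, writes $1$ into the first coordinate of the $\scrscr$ block, producing the indicator $\1(i' > 0)$. One final attention layer then queries for the unique past $t$ whose $\bin(t)$ equals the contents of $\scrpos$, applying the bit-pair gadget to all $\dimpos$ bits of $\bin(t)$; its value function copies $\layer_t^{\symbintwo} = \onehot_{|\symbset|}(\wri_t(\inp))$ into $\scrone$. If no valid $i'$ exists, then $\scrpos$ stays at $\zeros$; since every actual $t \ge 1$ has $\bin(t) \ne \zeros$, no past $t$ matches all bits and the null wins, leaving $\scrone = \zeros$, exactly as~\eqref{eq:turing_self_attn:1} requires. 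A closing linear layer zeros any remaining scratch coordinates. Each of the $O(\dimpos)$ layers uses $\key, \query, \val$ matrices of $\|\cdot\|_1$-norm $O(|\stateset| + |\symbset| + \dimpos)$, yielding the claimed total norm $O(\dimpos(|\stateset| + |\symbset| + \dimpos))$.

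The main obstacle is engineering the linear value function so that every matching past position contributes the \emph{same} entry in the designated output coordinate, since hard-max attention averages over an unknown-size argmax set and we need the result to be exactly $0$ or $1$ rather than a fraction. Because $\val$ is linear in $\layer_t$ with no bias inside the attention layer, this forces us to budget a dedicated always-one coordinate in the hidden representation, installed via a bias in the upstream feedforward layer and preserved across layers by residual connections and identity-style linear copies. The same coordinate is reused in the $\key, \query$ maps to realize the $(b, 1-b)$ encoding and the offset in the scoring gadget. Once this layout is fixed, verifying that the residual updates from earlier attention layers correctly expose the prefix bits written to $\scrpos$ to subsequent queries is a routine bookkeeping check.
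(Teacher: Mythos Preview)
Your proposal is correct and follows the same three-stage structure as the paper's proof: an existence-check attention layer (Claim~\ref{claim:turing_self_attn_base}), a $\dimpos$-step binary search discovering $\bin(i')$ one bit at a time from the most significant bit (Claim~\ref{claim:turing_self_attn_layer}), and a final lookup attention copying $\wri_{i'}(\inp)$ into $\scrone$ (Claim~\ref{claim:turing_self_attn_final}). The only differences are cosmetic: the paper encodes bits as $2b-1\in\{-1,+1\}$ rather than as pairs $(b,1-b)$, and since its key/query/value maps are affine (they include bias terms, e.g.\ $\query(\layer)=W_\query\layer+b_\query$), it realizes the constant value-writes and score offsets directly rather than through a dedicated always-one coordinate---so the ``main obstacle'' you flag is in fact a non-issue under the paper's architecture.
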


The proof plan will roughly implement a binary search to find $i'$, leveraging the attention layers. The first step in the binary search is to verify whether $i' > 0$, described below. 
\begin{claim}\label{claim:turing_self_attn_base}
	In the setting of Lemma~\ref{lem:turing_self_attn}, let $H_i = \layer_1, \ldots, \layer_i$ be the input representations for timesteps $1, \ldots, i$. Suppose that each $\layer_t$ for $1 \le t \le i$ satisfies the following:
	\begin{align}\begin{split}
			\layer^{\posone}_t &= \bin(t)\\
			\layer^{\postwo}_t &= \bin(\loc_{t - 1}(\inp))
		\end{split} \label{eq:turing_self_attn_base:1}
	\end{align}
	Additionally, suppose that $\layer_i$ is of the form in~\eqref{eq:turing_loc:1}. Then there is a function $\func^{(0)}$ parameterized by $\param$ such that 
	\begin{align}
		\begin{split}
			\func^{(0)}(\layer_i, H_i, \param)^{\scrone} &= \zeros\\
			\func^{(0)}(\layer_i, H_i, \param)^{\scrpos} &= \zeros\\
			\func^{(0)}(\layer_i, H_i, \param)^{\scrscr}_1 &= \1(i' > 0)
		\end{split} \label{eq:turing_self_attn_base:2}
	\end{align}
	The function $\func^{(0)}$ can be computed by a single decoder self-attention layer with $\|\param\|_1 = O(\dimpos)$. 
\end{claim}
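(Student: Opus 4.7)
The plan is to realize $\func^{(0)}$ as a single decoder self-attention layer that uses the $\posthree$ block of $\layer_i$ (which stores $\bin(\loc_i(\inp))$) as its query source and the $\postwo$ block of each $\layer_j$ (which stores $\bin(\loc_{j-1}(\inp))$) as its key source. The attention is designed so that past positions whose encoded head location matches $\loc_i(\inp)$ produce a strictly higher score than any non-matching position, and a carefully placed null key sits strictly in between. Then a constant value function that writes $1$ into the first $\scrscr$ coordinate ensures that $(\func^{(0)}(\layer_i, H_i, \param))^{\scrscr}_1$ becomes $\1(i' > 0)$, while all other coordinates of $\layer_i$ are preserved.

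Concretely, I would take $\query, \key \colon \R^{\dim} \to \R^{2\dimpos}$ to be the affine maps
\begin{align*}
\query(\layer') = \begin{bmatrix} \layer'^{\posthree} \\ \ones - \layer'^{\posthree} \end{bmatrix}, \qquad \key(\layer_j) = \begin{bmatrix} \layer_j^{\postwo} \\ \ones - \layer_j^{\postwo} \end{bmatrix}.
\end{align*}
A short calculation yields $\query(\layer_i)^\top \key(\layer_j) = \dimpos - \|\bin(\loc_i(\inp)) - \bin(\loc_{j-1}(\inp))\|_1$, i.e. $\dimpos$ minus the Hamming distance of the two binary encodings. This score equals $\dimpos$ when $\loc_{j-1}(\inp) = \loc_i(\inp)$ and is at most $\dimpos - 1$ otherwise. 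Choosing the null key $\key_0 = \tfrac{2\dimpos - 1}{2\dimpos} \ones_{2\dimpos}$ gives $\tau_0 = \dimpos - \tfrac{1}{2}$, which separates perfect matches from all mismatches. Hence $\gJ$ consists precisely of matching past indices when $i' > 0$, and $\gJ = \{0\}$ when $i' = 0$.

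For the value I would use the constant map $\val$ sending every input to the vector that is $1$ on the first $\scrscr$ coordinate and $0$ elsewhere, and set $\val_0 = \zeros$. The attention update $\layer_i + \tfrac{1}{|\gJ|} \sum_{j \in \gJ} v_j$ then adds exactly $\1(i' > 0)$ to the first $\scrscr$ coordinate of $\layer_i$ and is identically zero on every other coordinate. Since by the hypothesis~\eqref{eq:turing_loc:1} the $\scrone$, $\scrpos$ and $\scrscr$ blocks of $\layer_i$ all vanish, this yields exactly the three conditions in~\eqref{eq:turing_self_attn_base:2}. The matrices underlying $\query, \key$ have entries in $\{-1, 0, 1\}$ and touch only $\dimpos$ input coordinates each, their biases have $\|\cdot\|_1$-norm $O(\dimpos)$, and the value and null key contribute $O(\dimpos)$ more, giving the claimed total $\|\param\|_1 = O(\dimpos)$.

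The one genuine subtlety is the choice of the null threshold: since two distinct $\dimpos$-bit binary encodings can differ in just a single coordinate, the best non-matching score is $\dimpos - 1$, so the null threshold must lie strictly inside the open interval $(\dimpos - 1, \dimpos)$. The value $\tau_0 = \dimpos - \tfrac{1}{2}$ comfortably satisfies this and is realized via a single scalar constant in $\key_0$. A secondary point is that the value map is constant; I am treating attention's $\val$ as affine so that this is free, and if only strictly linear value maps are allowed one can instead designate an always-$1$ scratch coordinate that a previous feedforward layer populates, at negligible cost to the $\|\cdot\|_1$-norm.
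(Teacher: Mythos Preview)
Your proof is correct and takes essentially the same approach as the paper: a single self-attention layer whose query encodes $\loc_i(\inp)$ from the $\posthree$ block, whose keys encode $\loc_{j-1}(\inp)$ from the $\postwo$ block, a null key whose score lies strictly between the match and non-match scores, a constant value writing $1$ into $\scrscr_1$, and $\val_0=\zeros$. The only difference is cosmetic: the paper uses the $\pm1$ encoding $2x-1$ (plus one extra coordinate reserved for the null key), giving integer scores $\dimpos$, $\le\dimpos-2$, and null $\dimpos-1$, whereas your concatenation $[x;\ones-x]$ gives $\dimpos$, $\le\dimpos-1$, and null $\dimpos-\tfrac12$; both establish the claim, though the paper's unit score gap (rather than $\tfrac12$) slightly streamlines the later all-layer margin check.
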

Next, we implement the binary search itself, using $\dimpos$ self-attention layers. Each step of the binary search reveals a single bit of $i'$, so the $j$-th attention layer will compute a representation storing the $j$ most significant bits of $i'$. We let $\bin_j(\loc) \in \{0, 1\}^{\dimpos}$ to denote the binary encoding of the $j$ most significant bits of $\loc$: $(\bin_j(\loc))_{j'} = (\bin(\loc))_{j'}$ for $1 \le j' \le j$, and $(\bin_j(\loc))_{j'} = 0$ for $j' > j$. We also set $\bin_0(\loc) = \zeros$. We use the superscript $^{(j)}$ to indicate the $j$-th set of layers in the binary search. The following claim implements each step of the binary search rigorously. 

\begin{claim}\label{claim:turing_self_attn_layer}
	In the setting above and of Lemma~\ref{lem:turing_self_attn}, let $H_i^{(j)} = h^{(j)}_1, \ldots, h^{(j)}_i$ be the representations computed after the $j$-th group of layers for timesteps $1$ through $i$, for $0 \le j \le \dimpos - 1$.  Suppose that each $\layer^{(j)}_t$ for $1 \le t \le i$ satisfies the following:
	\begin{align}
		\begin{split}
			\layer_t^{(j), \posone} &= \bin(t)\\
			\layer_t^{(j), \postwo} &= \bin(\loc_{t - 1}(\inp))
		\end{split}\label{eq:turing_self_attn_layer:1}
	\end{align}
	In addition, suppose that $\layer^{(j)}_i$ satisfies:
	\begin{align}
		\begin{split}
			\layer_i^{(j), \scrone} &= \zeros\\
			\layer_i^{(j), \scrpos} &= \begin{cases}
				\bin_{j}(i') &\text{ if } i' > 0\\
				\zeros  &\text{ otherwise }
			\end{cases}\\
			(\layer_i^{(j), \scrscr})_1 &= \1(i' > 0)
		\end{split} \label{eq:turing_self_attn_layer:3}	
	\end{align}
	with all other coordinates matching the quantities prescribed in~\eqref{eq:turing_loc:1}. 
	Then there is a function $\func^{(j + 1)}$ parameterized by $\param$ such that 
	\begin{align}
		\begin{split}
			\func^{(j + 1)}(\layer^{(j)}_i, H_i^{(j)}, \param)^{\scrone} &= \zeros\\
			\func^{(j + 1)}(\layer^{(j)}_i, H_i^{(j)}, \param)^{\scrpos} &= \begin{cases}
				\bin_{j + 1}(i') &\text{ if } i' > 0\\
				\zeros  &\text{ otherwise }
			\end{cases}\\
			\func^{(j + 1)}(\layer^{(j)}_i, H_i^{(j)}, \param)^{\scrscr}_1 &= \1(i' > 0)
		\end{split}\label{eq:turing_self_attn_layer:4}	
	\end{align}
	with all other coordinates matching those prescribed in~\eqref{eq:turing_loc:1}. We note that $\func^{(j + 1)}$ consists of a single decoder self-attention layer followed by single feedforward ReLU layer, with $\|\param\|_1 = O(|\stateset| + |\symbset| + \dimpos)$.
\end{claim}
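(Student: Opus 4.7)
\textbf{Proof plan for Claim~\ref{claim:turing_self_attn_layer}.}
The plan is to implement one step of a binary search: by definition of $i'$, the $(j+1)$-th bit of $i'$ equals $1$ if and only if there exists some $t \le i$ with $\loc_{t-1}(\inp) = \loc_i(\inp)$ whose first $j$ position bits match those of $\bin_j(i')$ and whose $(j+1)$-th position bit equals $1$. (If such a $t$ exists then $t$ is itself a candidate for $i'$, so $i' \ge t$ forces $i'$'s top $j+1$ bits to dominate $t$'s; conversely if the $(j+1)$-th bit of $i'$ is $1$ we may simply take $t = i'$.) I therefore set up the decoder self-attention at layer $j+1$ to act as a presence test for exactly this pattern.

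I would construct the key $\key(\layer^{(j)}_t)$ to output $\bin(t)$ paired with its elementwise complement $\ones - \bin(t)$ in the position block, and $\bin(\loc_{t-1}(\inp))$ paired with its complement in the location block. These complementary features allow the query--key inner product to evaluate equality bit-by-bit via the identity $\1(a = b) = ab + (1-a)(1-b)$ for $a,b \in \{0,1\}$. The query $\query(\layer^{(j)}_i)$ is the analogous pattern read from $\layer^{(j), \scrpos}_i = \bin_j(i')$ and $\layer^{(j), \posthree}_i = \bin(\loc_i(\inp))$: the first $j$ position slots contain $(\bin_j(i')_{j'}, 1 - \bin_j(i')_{j'})$, the $(j+1)$-th slot carries the constant $(1,0)$ (asserting that bit $j+1$ of $t$ equals $1$), and the remaining position slots are set to $(0,0)$ so that the untargeted bits of $t$ contribute $0$ and act as don't-cares. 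The resulting score equals $j + 1 + \dimpos$ exactly when all three match conditions hold, and is at most $j + \dimpos$ otherwise.

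Next I set the null key $\key_0 = c \cdot \ones$ with $c = (j + \dimpos + 1/2)/(j + 1 + \dimpos)$, giving $\query^\top \key_0 = j + \dimpos + 1/2$. The crucial point justifying this choice is that, by the pairing with complements, the sum of the entries of $\query(\layer^{(j)}_i)$ is exactly $j + 1 + \dimpos$ independent of the values of $\bin_j(i')$ and $\bin(\loc_i(\inp))$. Hard-max attention therefore selects the (nonempty) set of matching positions when they exist, and the null slot otherwise. I take the value to be the constant $\val(\layer^{(j)}_t) = \onehot_{\dimpos}(j+1)$ placed in the $\scrpos$ block (zero elsewhere), with null value $\val_0 = \zeros$. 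Via the residual connection, this adds $\onehot_{\dimpos}(j+1)$ to $\layer^{(j), \scrpos}_i = \bin_j(i')$ exactly when a matching $t$ is found, producing $\bin_{j+1}(i')$; when no match is found, $\scrpos$ is unchanged and still equals $\bin_{j+1}(i')$ because its $(j+1)$-th bit is then $0$. If $i' = 0$, no $t$ can satisfy the location condition at all, so $\scrpos$ remains $\zeros$ throughout.

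Finally, the succeeding feedforward ReLU layer is configured to act as the identity on the surviving coordinates; this cleanly preserves $\st$, $\symbintwo$, $\posone$, $\postwo$, $\posthree$, and $(\layer^{\scrscr})_1 = \1(i' > 0)$, yielding the target output~\eqref{eq:turing_self_attn_layer:4}. All attention weights are $\{-1, 0, 1\}$-valued over slots of dimension $O(\dimpos)$, the null-key scalar is bounded, and the identity-style feedforward spans the full width $\dimtrans$, so $\|\param\|_1 = O(|\stateset| + |\symbset| + \dimpos)$. I expect the main obstacle to be the null-key calibration: one must guarantee that $\query^\top \key_0$ is a single constant strictly separating the ``all three conditions match'' score from every partial-match score, which forces the query to be built so that unused bit positions contribute exactly $0$ (not a don't-care value that varies with the input) and so that the total entry sum is independent of $\bin_j(i')$ and $\bin(\loc_i(\inp))$.
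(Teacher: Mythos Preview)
Your proposal is correct and implements the same binary-search idea as the paper, but the concrete attention construction differs in three ways worth noting. First, the paper encodes each bit $b$ as $2b-1\in\{-1,+1\}$ and tests equality via the inner product of signed bits, whereas you use the pair $(b,1-b)$ and the identity $\1(a=b)=ab+(1-a)(1-b)$; both reduce the presence test to an integer-valued score that is maximized exactly on the target pattern. Second, the paper separates the null slot by appending one extra coordinate to $\query$ and $\key$ (with $\query_{\dimpos+j+2}=1$, $\key_{\dimpos+j+2}=0$, $(\key_0)_{\dimpos+j+2}=\dimpos+j$), so that the maximum, null, and next-best scores are $\dimpos+j+1$, $\dimpos+j$, $\dimpos+j-1$; your scaled all-ones null key instead gives $\dimpos+j+1$, $\dimpos+j+\tfrac12$, $\dimpos+j$. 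Both give an $\Omega(1)$ gap, which is all the downstream all-layer-margin argument needs. Third, the paper writes the detected bit into a temporary coordinate $(\cdot)^{\scrscr}_3$ via the value and then uses the feedforward ReLU layer to copy it into $(\cdot)^{\scrpos}_{j+1}$ and clear the temporary; you write $\onehot_{\dimpos}(j+1)$ directly into the $\scrpos$ block via the (constant) value, so the residual update already yields $\bin_{j+1}(i')$ and the feedforward layer can be the identity. Your route is a bit more direct; the paper's route keeps the value independent of the target index and pushes the index-specific work into the linear copy, which is a stylistic rather than substantive difference. In both cases the parameter $\|\cdot\|_1$ is $O(|\stateset|+|\symbset|+\dimpos)$, dominated by the identity-like feedforward map across the full width.
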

At the end of the $\dimpos$-th application of the binary search, we would have found $\bin(i')$ exactly. It remains to apply another attention layer which attends directly to timestep $i'$ and copies $\wri_{i'}(\inp)$. 

\begin{claim}\label{claim:turing_self_attn_final}
	In the setting above and of Lemma~\ref{lem:turing_self_attn}, let $H_i = \layer_1, \ldots, \layer_i$ be the representations computed after the $\dimpos$-th group of layers constructed in Claim~\ref{claim:turing_self_attn_layer} for timesteps $1$ through $i$. Suppose that each $\layer_t$ for $1 \le t \le i$ satisfies the following:
	\begin{align}\begin{split}
			\layer_t^{\symbintwo} &= \1_{|\symbset|}(\wri_t(\inp))\\
			\layer_t^{\posone} &= \bin(t)\\
			\layer_t^{\postwo} &= \bin(\loc_{t - 1}(\inp))\end{split}\label{eq:turing_self_attn_final:1}
	\end{align}
	In addition, suppose that $\layer_i$ satisfies:
	\begin{align}\begin{split}
			\layer_i^{\scrone} &= 0\\
			\layer_i^{\scrpos} &= \begin{cases}
				\bin(i') &\text{ if } i' > 0\\
				\zeros &\text{ otherwise}
			\end{cases}\\
			(\layer_i^{\scrscr})_1 &= \1(i' > 0)
		\end{split}\label{eq:turing_self_attn_final:2}
	\end{align}
	with all other coordinates matching the quantities prescribed in~\eqref{eq:turing_loc:1}. Then there is a function $\func^{(\dimpos + 1)}$ parameterized by $\param$ such that $\func^{(\dimpos+ 1)}(\layer_i, H_i, \param)$ computes the desired output in~\eqref{eq:turing_self_attn:1}. Furthermore, $\func^{(\dimpos + 1)}$ consists of a single decoder self-attention layer followed by a single feedforward ReLU layer, and $\|\param\|_1 = O(|\stateset| + |\symbset| + \dimpos )$. 
\end{claim}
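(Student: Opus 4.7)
The plan is to construct $\func^{(\dimpos+1)}$ as a single decoder self-attention layer that looks up $\wri_{i'}(\inp)$ by matching $\bin(i')$ (stored in $\layer_i^{\scrpos}$) against $\bin(t) = \layer_t^{\posone}$ across past timesteps, followed by a single feedforward ReLU layer that clears the now-stale scratch field $\scrpos$ so the output matches~\eqref{eq:turing_self_attn:1}. The hypotheses already guarantee that every coordinate required by~\eqref{eq:turing_self_attn:1} except $\scrone$ is correct in $\layer_i$; the attention is responsible for populating $\scrone$, and the feedforward layer for the final cosmetic cleanup.

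The core step is the attention design. The plan is to use the score
\[
  \query(\layer_i)^\top \key(\layer_t) \;=\; \bin(i')^\top \bin(t) - \tfrac{1}{2}\|\bin(t)\|_1 \;=\; \sum_{j=1}^{\dimpos} \bigl(\bin(i')_j - \tfrac{1}{2}\bigr)\bin(t)_j,
\]
which as a function of $\bin(t) \in \{0,1\}^{\dimpos}$ is uniquely maximized at $\bin(t) = \bin(i')$ with value $\tfrac{1}{2}\|\bin(i')\|_1$ and is otherwise strictly smaller by a gap of at least $\tfrac{1}{2}$. This score is realized linearly by $\key(\layer_t) = (\layer_t^{\posone},\ \ones^\top \layer_t^{\posone})$ and $\query(\layer_i) = (\layer_i^{\scrpos},\ -\tfrac{1}{2}\ones^\top \layer_i^{\st})$, where the state-one-hot invariant $\ones^\top \layer_i^{\st} = 1$ is used to produce the constant $-\tfrac{1}{2}$ without a bias. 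I would then take $\val(\layer_t)$ to embed $\layer_t^{\symbintwo}$ into the $\scrone$ block (zero elsewhere) and set $\key_0 = \val_0 = \zeros$. When $i' \ge 1$, the max score $\tfrac{1}{2}\|\bin(i')\|_1 \ge \tfrac{1}{2}$ is attained uniquely at $t = i'$ and strictly exceeds the null score $\query^\top \key_0 = 0$, so $\gJ = \{i'\}$ and the attention adds $\onehot_{|\symbset|}(\wri_{i'}(\inp))$ into $\scrone$ while preserving the remaining coordinates; when $i' = 0$ the query's search block vanishes, making every non-null score equal to $-\tfrac{1}{2}\|\bin(t)\|_1 \le -\tfrac{1}{2} < 0$, so $\gJ = \{0\}$ and the output equals $\layer_i$ (leaving $\scrone$ at $\zeros$ as required).

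After the attention, every coordinate matches~\eqref{eq:turing_self_attn:1} except $\scrpos$, which still holds $\bin(i')$ (or $\zeros$). The feedforward ReLU layer is chosen as the diagonal identity on every coordinate other than $\scrpos$ and zero on $\scrpos$, with zero bias; since every passed-through value lies in $\{0,1\}$, the ReLU acts as the identity on them. A routine tally of nonzero entries gives $\|W_K\|_1 = O(\dimpos)$, $\|W_Q\|_1 = O(\dimpos + |\stateset|)$, $\|W_V\|_1 = O(|\symbset|)$, and $\|W_{\ff}\|_1 = O(|\stateset| + |\symbset| + \dimpos)$, yielding the total $O(|\stateset| + |\symbset| + \dimpos)$ bound. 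The main subtlety is arranging strict dominance of the null key exactly when $i' = 0$ while preserving strict dominance of $t = i'$ otherwise; this is the whole point of the asymmetric coefficient $\bin(i')_j - \tfrac{1}{2}$. A naive symmetric $\bin(i')^\top \bin(t)$ would tie $t = i'$ with any $t$ whose bit pattern is a superset of $\bin(i')$ and would collapse to zero when $i' = 0$, so the asymmetric form is essential for unique hard-max selection in both regimes.
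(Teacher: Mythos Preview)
Your proposal is correct, but the attention score you build is genuinely different from the paper's. The paper maps both query and key into $\R^{2\dimpos+1}$ using $\pm 1$ encodings, matching \emph{both} the head location ($\layer_i^{\posthree}$ against $\layer_t^{\postwo}$) and the timestep ($\layer_i^{\scrpos}$ against $\layer_t^{\posone}$), with a null key calibrated so that the winning score is $2\dimpos$, the null score is $2\dimpos-1$, and every other score is at most $2\dimpos-2$; this yields a uniform gap of $1$ independent of $i'$. You instead observe that since $\bin(i')$ already pins down the target uniquely, the location match is redundant, and you achieve unique selection with a $(\dimpos+1)$-dimensional key/query and the asymmetric score $\sum_j(\bin(i')_j-\tfrac12)\bin(t)_j$, which separates the winner from the runner-up (and from the zero null) by exactly $\tfrac12$. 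Both designs are valid for the claim as stated; yours is more economical in key/query dimension, while the paper's has the minor advantage of a constant score gap that does not depend on $\|\bin(i')\|_1$, which is mildly cleaner for the downstream Lipschitz check in Condition~\ref{ass:lip_abstract} (though your $\tfrac12$ gap would work there as well). The feedforward cleanup step---zeroing $\scrpos$ and passing everything else through a diagonal ReLU layer---is identical to the paper's. One cosmetic remark: since the paper's attention layers explicitly allow affine key/query maps (see the proof of Claim~\ref{claim:turing_self_attn_layer}), you could produce the constant $-\tfrac12$ via a bias rather than $-\tfrac12\,\ones^\top\layer_i^{\st}$, saving the $O(|\stateset|)$ contribution to $\|W_Q\|_1$; either way the total stays within $O(|\stateset|+|\symbset|+\dimpos)$.
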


Putting these together, we complete the proof of Lemma~\ref{lem:turing_self_attn}. 

\begin{proof}[Proof of Lemma~\ref{lem:turing_self_attn}]
	For the purposes of this proof, we index the layers by a superscript to avoid confusion with indexing timesteps. We set $\func^{(0)}$ to be the function defined in Claim~\ref{claim:turing_self_attn_base}. We note that layers output by $\func^{(0)}$ satisfy the condition of Claim~\ref{claim:turing_self_attn_layer}, so we can apply Claim~\ref{claim:turing_self_attn_layer} inductively to obtain layers $\func^{(1)}, \ldots, \func^{(\dimpos)}$ where their applying their composition results in representations satisfying~\eqref{eq:turing_self_attn_final:1} and~\eqref{eq:turing_self_attn_final:2}. Now we set $\func^{(\dimpos + 1)}$ to be the function constructed in Claim~\ref{claim:turing_self_attn_layer}, which gives the desired output. Finally, we note that by summing the $\|\cdot\|_1$ bounds for the parameters constructed in each layer, we can finally obtain $\|\Param\|_1 = O(\dimpos(|\stateset| + |\symbset| + \dimpos))$.
\end{proof}

We fill in the proofs of Claims~\ref{claim:turing_self_attn_base},~\ref{claim:turing_self_attn_layer}, and~\ref{claim:turing_self_attn_final} below.
\begin{proof}[Proof of Claim~\ref{claim:turing_self_attn_base}]
	To construct the decoder self-attention, the query function will be of the form $\query(\layer) = W_{\query}\layer + \bias_{\query}$ and $\key(\layer) = W_{\key}\layer + \bias_{\key}$, where $W_{\query} , W_{\key}\in \R^{(\dimpos + 1 )\times \dim}$ and $\bias_{\query}, \bias_{\key} \in \R^{\dimpos + 1}$. We choose the parameters such that the following equations hold:
	\begin{align}
		\query(\layer)_{1:\dimpos} &= 2\layer^{\posthree} - 1 \nonumber\\
		\query(\layer)_{\dimpos + 1} &= 1 \nonumber
	\end{align}
	and 
	\begin{align}
		\key(\layer)_{1:\dimpos} &= 2\layer^{\postwo} - 1 \nonumber\\
		\key(\layer)_{\dimpos + 1} &= 0 \nonumber
	\end{align}
	The value function $\val(\layer)$ is such that $\val(\layer)^{\scrscr}_1 = 1$, and $\val(\layer)_{\ell} = 0$ on all other coordinates $\ell$, which can be implemented by a linear transformer. Finally, we set 
	the null key $\key_0$ and value $\val_0$ such that $(\key_0)_{\dimpos + 1} = \dimpos - 1$, with 0 everywhere else, and $\val_0 = \zeros$. Letting $\param_{\textup{attn}}$ denote the attention parameters, the layer is of the form 
	\begin{align}
		\func^{(0)}(\layer_i, H_i, \param) = \attn(\layer_i, H_i, \param) \nonumber
	\end{align} 
	To see that $\func^{(0)}$ satisfies~\eqref{eq:turing_self_attn_base:2},  observe that if $i' > 0$, $\query(\layer_i)^\top \key(\layer_{i'}) = \dimpos$ by~\eqref{eq:turing_self_attn_base:1} and construction of $\query, \key$. On the other hand, $\query(\layer_i)^\top \key_0 = \dimpos - 1$. Thus, $\argmax_t  \query(\layer_i)^\top \key(\layer_{t}) \in [i]$, which implies that $\func^{(0)}(\layer_i, H_i, \param)^{\scrscr}_1 = 1$ by the construction of $\val$. In the other case where $i' = 0$, we note that $\query(\layer_i)^\top \key(\layer_{t}) \le  \dimpos - 2$ for all $1 \le t \le i$, so the null position is attended to. By construction of $\val_0$, this implies $\func^{(0)}(\layer_i, H_i, \param)^{\scrscr}_1 = 0$. As $\val, \val_0$ are 0 on all other coordinates, it follows that~\eqref{eq:turing_self_attn_base:2} holds. It's also easy to observe that the $\|\param\|_1$ is as desired. 
\end{proof}

\begin{proof}[Proof of Claim~\ref{claim:turing_self_attn_layer}]
	The first layer in $\func^{(j + 1)}$ computes decoder self-attention. The query function is of the form $\query(h) = W_{\query} h + \bias_{\query}$, and the key function is of the form $\key(\layer) = W_{\key}\layer + \bias_{\layer}$, where $W_{\query}, W_{\key} \in \R^{(\dimpos + j + 2) \times \dim}$ and $\bias_{\query}, \bias_{\key} \in \R^{(\dimpos + j + 2)}$. We choose the parameters so that the following equations hold: 
	\begin{align}
		\query(\layer)_{1:\dimpos} &= 2\layer^{\posthree}- 1 \nonumber\\
		\query(\layer)_{\dimpos + 1:\dimpos + j} &=2 \layer^{\scrpos}_{1:j} -1 \nonumber\\
		\query(\layer)_{\dimpos + j + 1} &= 1 \nonumber\\
		\query(\layer)_{\dimpos + j + 2} &= 1 \nonumber
	\end{align}
	and
	\begin{align}
		\key(\layer)_{1:\dimpos} &= 2\layer^{\postwo} -1 \nonumber\\
		\key(\layer)_{\dimpos + 1:\dimpos + j + 1} &= 2\layer^{\posone}_{1:j + 1} -1 \nonumber\\
		\key(\layer)_{\dimpos + j + 2} &= 0\nonumber
	\end{align}
	Both of these functions can be constructed via linear transformations of $\layer$, with $\|W_{\query}\|_1 + \|W_{\key}\|_1 + \|\bias_{\query}\|_1 + \|\bias_{\key}\|_1 = O(\dimpos)$. Now we construct the value function $\val(\layer) = W_{\val} \layer + \bias_{\val}$ such that $\val(\layer)^{\scrscr}_3 = 1$ and  $\val(\layer)_{\ell} = 0$ on all other coordinates, which is also easily implemented by a linear layer. For the attention, the last quantities to construct are the null key $\key_0$ and value $\val_0$. $\key_0$ will satisfy $(\key_0)_{\dimpos + j + 2} = \dimpos + j $, with 0 everywhere else. $\val_0$ will simply be 0 on all coordinates. Letting $\param_{\textup{attn}} = (W_{\query}, \bias_{\query}, W_{\key}, \bias_{\key}, W_{\val}, \bias_{\val}, \key_0, \val_0)$ denote the attention parameters, the first layer will now be in the form 
	\begin{align}
		\func^{(j + 1), 1}(\layer^{(j)}_i, H^{(j)}_i,  \param_{\textup{attn}}) &= \attn(\layer^{(j)}_i, H^{(j)}_i, \param_{\textup{attn}}) \nonumber
	\end{align}
	where $\attn$ uses the constructed key, value, and query functions. We claim that $\func^{(j + 1), 1}(\layer^{(j)}_i, H^{(j)}_i,  \param_{\textup{attn}})$ satisfies the following:
	\begin{align}
		\func^{(j + 1), 1}(\layer^{(j)}_i, H^{(j)}_i,  \param_{\textup{attn}})^{\scrscr}_{3} = \begin{cases}
			1 \text{ if } i' > 0 \text{ and has }(j + 1)\text{-th bit }1\\
			0 \text{ otherwise}
		\end{cases} \label{eq:turing_self_attn_layer:2}
	\end{align}
	For all other coordinates $\ell$, $\func^{(j + 1), 1}(\layer^{(j)}_i, H^{(j)}_i,  \param_{\textup{attn}})_{\ell} = (\layer^{(j)}_i)_{\ell}$. To see this, we first observe that $\query(\layer^{(j)}_i)^\top \key_0 = \dimpos + j $. Next, we observe that $\query(h^{(j)}_i)_{1:\dimpos}$ produces the encoding of $\loc_i(\inp)$ using binary $\{-1, +1\}$ bits, and $\key(\layer^{(j)}_t)_{1:\dimpos}$ produces the encoding of $\loc_{t - 1}(\inp)$ using binary $\{-1, +1\}$ bits by~\eqref{eq:turing_self_attn_layer:1}. In addition, $\query(h^{(j)}_i)_{\dimpos + 1:\dimpos + j} = 2\bin_{j}(i') -1$ if $i' > 0$ and all 0's otherwise, and $\key(\layer^{(j)}_t)_{\dimpos+1 : \dimpos + j + 1} = 2\bin_{j + 1}(t) - 1$. Note that by our construction, the maximum possible value of $\query(h^{(j)}_i)^\top \key(\layer^{(j)}_t)$ is $\dimpos + j + 1$, and the next largest possible value is $\dimpos + j - 1$. Now there are 3 cases:
	
	\noindent{\bf Case 1:} $i' = 0$. In this case, we note that $\loc_i(\inp)$ never matches $\loc_{t - 1}(\inp)$ for $1 \le t \le i$. Thus, by construction of the first $\dimpos$ coordinates of $\query$ and $\key$, the largest possible value of $\query(h^{(j)}_i)^\top \key(\layer^{(j)}_t)$ is $\dimpos + j - 1$, so the attention will always only attend to the null position, so the layer adds $\val_0 = \zeros$ to $\layer^{(j)}_i$, preserving its value. Note that $(\layer^{(j), \scrscr}_i)_3 = 0$ in this case, which matches the desired behavior. 
	
	\noindent{\bf Case 2:} $i' > 0$, and has $(j + 1)$-th bit 0. In this case, we note that for all $t > i'$, $\query(h^{(j)}_i)^\top \key(\layer^{(j)}_t) \le \dimpos + j - 1$, because by definition such $t$ must satisfy $\loc_{t - 1}(\inp) \ne \loc_{i}(\inp)$, so the first $\dimpos$ coordinates contribute at most $\dimpos -2$ to the dot product. On the other hand, if $t \le i'$, $t$ must have $(j + 1)$-th bit 0, so $\key(\layer^{(j)}_t)_{\dimpos + j + 1} = -1$. This doesn't match the $(\dimpos + j + 1)$-th bit of the query, so $\query(h^{(j)}_i)^\top \key(\layer^{(j)}_t) \le \dimpos + j - 1$ again. Thus, in this case, the null position is attended to again. The same reasoning as Case 1 then applies. 
	
	\noindent{\bf Case 3:} $i' > 0$ and has $(j + 1)$-th bit 1. In this case, $\max_t \query(h^{(j)}_i)^\top \key(\layer^{(j)}_t) = \dimpos + j + 1$: for example, $t = i' $ achieves this maximum by our construction. As a result, the null position is not attended to. All the values in the positions attended to satisfy $\val(\layer^{(j)}_t)^{\scrscr}_3 = 1$, which matches the $(j + 1)$-th bit of $i'$. Thus,~\eqref{eq:turing_self_attn_layer:2} holds.
	
	Finally, to complete the proof we simply append an additional feedforward ReLU layer which copies the value $\func^{(j + 1), 1}(\layer^{(j)}_i, H^{(j)}_i,  \param_{\textup{attn}})^{\scrscr}_{3}$ to  the output bit corresponding to the position indexed by $\cdot^{\scrpos}_{j + 1}$. This layer will also set the output bit corresponding to $\cdot^{\scrscr}_{3}$ to 0. Note that these operations can be implemented with a linear layer, and applying a ReLU activation after won't change the output, which is in $\{0, 1\}^{\dim}$. By~\eqref{eq:turing_self_attn_layer:3}, the constructed function will thus satisfy~\eqref{eq:turing_self_attn_layer:4}. It's also easy to observe that $\|\param\|_1$ is as desired. 
\end{proof}

\begin{proof}[Proof of Claim~\ref{claim:turing_self_attn_final}]
	The attention layer uses key and query functions which each compute linear transformations from $\R^{\dim}$ to $\R^{2\dimpos + 1}$. The value function is also linear. We choose parameters such that 
	\begin{align}
		\query(\layer)_{1:\dimpos}&= 2\layer^{\posthree} - 1 \nonumber\\
		\query(\layer)_{\dimpos + 1:2\dimpos} &= 2\layer^{\scrpos} - 1\nonumber\\
		\query(\layer)_{2\dimpos + 1} &= 1\nonumber
	\end{align}
	and
	\begin{align}
		\key(\layer)_{1:\dimpos} &= 2\layer^{\postwo} - 1\nonumber\\
		\key(\layer)_{\dimpos + 1:2\dimpos} &= 2\layer^{\posone} - 1\nonumber\\
		\key(\layer)_{2\dimpos + 1} = 0\nonumber
	\end{align}
	and 
	\begin{align}
		\val(\layer)^{\scrone} &=  \layer^{\symbintwo}\nonumber
	\end{align}
	Furthermore, we choose null keys and positions such that $(\key_0)_{2\dimpos + 1} = 2\dimpos - 1$, and $\val_0 = \zeros$. To follow the attention layer, we construct a linear layer which simply zeros out coordinates indexed by $\cdot^{\scrpos}$ and preserves all other coordinates. Note that because all outputs are either 0 or 1, applying a ReLU activation won't change the result. To see that this construction computes~\eqref{eq:turing_self_attn:1}, we observe that if $i' > 0$,  $\query(\layer_i)^\top \key(\layer_{i'}) = 2\dimpos$. Otherwise, if $i' = 0$, $\query(\layer_i)^\top \key(\layer_t) \le 2\dimpos - 2$ for all $1 \le t \le i$. On the other hand, it always hold that $\query(\layer_i)^\top \key_0 = 2\dimpos - 1$. Thus, if $i' > 0$, the attention attends exactly to $i'$, so the value function satisfies $\val(\layer_{i'}) = \onehot_{|\symbset|}(\wri_{i'}(\inp))$, which would produce the output in~\eqref{eq:turing_self_attn:1}, as desired. On the other hand, if $i' = 0$, the attention attends to the null position, so the attention layer sets $\func^{(\dimpos + 1)}(\layer_i, H_i, \param)^{\scrone} = \zeros$. Thus, $\func^{(\dimpos + 1)}$ also produces the desired output in this case. It's also easy to observe that the $\|\param\|_1$ is as desired. 
\end{proof}

The next step is to complete step 4) in Section~\ref{sec:turing_outline} using encoder-decoder attention. The following lemma provides this construction.

\begin{lemma}\label{lem:turing_enc_attn}
	In the setting of Theorem~\ref{thm:turing_approx} and Lemma~\ref{lem:turing_self_attn}, consider any timestep $i$ and let $\layer$ denote an output of the function constructed in Lemma~\ref{lem:turing_self_attn}, in the form~\eqref{eq:turing_self_attn:1}. Let $e_1, \ldots, e_{\inpsize}$ denote the outputs of the encoder, in the form~\eqref{eq:encoder}. There is a function $\func$ with parameter $\param$ consisting of a single encoder-decoder attention layer such that for all such $\layer$ in the form~\eqref{eq:turing_self_attn:1}, the following holds: 
	\begin{align}
		\begin{split}
			\func(\layer,(e_1, \ldots, e_{\inpsize}),\Param)^{\st} &= \onehot_{|\stateset|}(\state_{i}(\inp))\\
			\func(\layer, (e_1, \ldots, e_{\inpsize}),\Param)^{\symbintwo} &= \onehot_{|\symbset|}(\wri_{i }(\inp))\\		
			\func(\layer,(e_1, \ldots, e_{\inpsize}), \Param)^{\posone} &= \bin(i)\\	
			\func(\layer,(e_1, \ldots, e_{\inpsize}), \Param)^{\postwo} &= \bin(\loc_{i - 1}(\inp))\\	
			\func(\layer,(e_1, \ldots, e_{\inpsize}), \Param)^{\posthree} &= \bin(\loc_{i}(\inp))\\
			\func(\layer,(e_1, \ldots, e_{\inpsize}), \Param)^{\scrone} &= \begin{dcases}
				\onehot_{|\symbset|}(\wri_{i'}(\inp)) &\text{ if } i' > 0\\
				0 & \text{ otherwise}
			\end{dcases}\\
			\func(\layer,(e_1, \ldots, e_{\inpsize}), \Param)^{\scrtwo} &= \begin{dcases}
				\onehot_{|\symbset|}(\inp_{\loc_i(\inp)}) &\text{ if } \loc_{i}(\inp) \le \inpsize \\
				\zeros & \text{ otherwise}
			\end{dcases}\\
			\func(\layer,(e_1, \ldots, e_{\inpsize}), \Param)^{\scrscr}_1 &= \1(i' > 0)\\
			\func(\layer,(e_1, \ldots, e_{\inpsize}), \Param)^{\scrscr}_2 &= \1(\loc_i(\inp) \le \inpsize)
		\end{split}\label{eq:turing_enc_attn:1}
	\end{align}
	At all other coordinates, $\func(\layer,(e_1, \ldots, e_{\inpsize}), \Param)$ takes value $0$. Furthermore, the parameters satisfy $\|\Param\|_1 = O(|\symbset| + \dimpos)$. 
\end{lemma}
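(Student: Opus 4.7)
The plan is to construct $\func$ as a single encoder-decoder attention layer whose query, key, and value maps are all linear, chosen so that the hard-max attention returns the one-hot input symbol at position $\loc_i(\inp)$ when that position lies in $[\inpsize]$, and returns $\zeros$ (via attending to the null slot) otherwise. The construction follows the same template as Claim~\ref{claim:turing_self_attn_final}, except that the query now looks at $\layer^{\posthree}$ (the updated head location produced by Lemma~\ref{lem:turing_loc}) and the keys look at $e_t^{\posone}$ (the positional encoding of encoder output $t$). Because the attention layer in our transformer is residual by construction ($\attn(\layer', (\layer_1,\ldots)) = \layer' + \tfrac{1}{|\gJ|}\sum_{j \in \gJ} v_j$), all coordinates of $\layer$ that are not overwritten by the value vector are automatically preserved, so we only need the value vector to populate the $\scrtwo$ coordinate group and the $\scrscr_2$ bit.

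Concretely, I would let the query and key be linear maps from $\R^{\dim}$ to $\R^{\dimpos + 1}$ satisfying
\begin{align*}
  \query(\layer)_{1:\dimpos} &= 2\layer^{\posthree} - 1, & \query(\layer)_{\dimpos + 1} &= 1,\\
  \key(e_t)_{1:\dimpos} &= 2 e_t^{\posone} - 1, & \key(e_t)_{\dimpos + 1} &= 0,
\end{align*}
and set the null key $\key_0$ so that $(\key_0)_{\dimpos + 1} = \dimpos - 1$ with zeros elsewhere. Define the value map linearly so that $\val(e_t)^{\scrtwo} = e_t^{\symbinone}$ and $\val(e_t)^{\scrscr}_2 = 1$, with all other coordinates of $\val(e_t)$ equal to $0$, and take $\val_0 = \zeros$. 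All of these can be realized by linear transformations with $\|\cdot\|_1$-norms bounded by $O(|\symbset| + \dimpos)$, giving the claimed parameter bound.

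Correctness then splits into two cases. If $\loc_i(\inp) \le \inpsize$, then $e_{\loc_i(\inp)}^{\posone} = \bin(\loc_i(\inp)) = \layer^{\posthree}$, so $\query(\layer)^\top \key(e_{\loc_i(\inp)}) = \dimpos$; for every $t \ne \loc_i(\inp)$ the binary encodings $\bin(t)$ and $\bin(\loc_i(\inp))$ differ in at least one bit, so the $\{-1,+1\}$-encoded dot product drops by at least $2$, giving $\query(\layer)^\top \key(e_t) \le \dimpos - 2$. Since $\query(\layer)^\top \key_0 = \dimpos - 1$, the argmax set $\gJ$ is exactly $\{\loc_i(\inp)\}$, and the residual attention adds $\val(e_{\loc_i(\inp)})$, placing $\onehot_{|\symbset|}(\inp_{\loc_i(\inp)})$ in $\scrtwo$ and $1$ in $\scrscr_2$. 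If instead $\loc_i(\inp) > \inpsize$, no encoder position achieves an exact match, so every $\query(\layer)^\top \key(e_t) \le \dimpos - 2 < \dimpos - 1 = \query(\layer)^\top \key_0$; the null slot is attended, $\val_0 = \zeros$ is added, and both $\scrtwo$ and $\scrscr_2$ remain $0$. In both cases the residual path preserves all coordinates of $\layer$ other than those written by $\val$, so the output matches~\eqref{eq:turing_enc_attn:1} coordinate by coordinate.

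The main (and only real) obstacle is the bookkeeping around the null-key threshold: choosing the additional coordinate so that the null dot product lies strictly between the two possible regimes (exact match giving $\dimpos$ versus any mismatch giving at most $\dimpos - 2$). Once the gap of $2$ between matching and non-matching dot products is exploited, the rest of the proof is a direct check that each coordinate group listed in~\eqref{eq:turing_enc_attn:1} is either written correctly by the value function or preserved by the residual; no additional feedforward layer is needed because everything can be packed into a single attention layer.
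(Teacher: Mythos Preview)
Your proposal is correct and essentially identical to the paper's own proof: the same query/key/value maps, the same null key threshold $\dimpos-1$ sandwiched between the match score $\dimpos$ and the mismatch bound $\dimpos-2$, and the same residual argument to preserve the remaining coordinates. The paper's proof is slightly terser but follows exactly this construction and case split.
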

\begin{proof}
	We choose the encoder-decoder attention layer so that the key, value, and query functions are linear transformations.  The key and query functions map $\R^{\dim}$ to $\R^{\dimpos + 1}$ and compute the following:
	\begin{align}
		\query(\layer)_{1:\dimpos} &= 2\layer^{\posthree} - 1\nonumber\\
		\query(\layer)_{\dimpos + 1} &= 1\nonumber
	\end{align}
	and 
	\begin{align}
		\key(\layer)_{1:\dimpos} &= 
		2\layer^{\posone} -1\nonumber\\
		\key(\layer)_{\dimpos + 1} &=0\nonumber
	\end{align}
	The value function computes 
	\begin{align}
		\val(\layer)^{\scrtwo} &= \layer^{\symbinone}\nonumber\\
		\val(\layer)^{\scrscr}_2 &= 1\nonumber
	\end{align}
	with 0's in all other coordinates.
	The null key $\key_0$ satisfies $(\key_0)_{\dimpos + 1} = \dimpos - 1$, with 0's in all other coordinates. The null value $\val_0$ satisfies $\val_0 = \zeros$. We set 
	\begin{align}
		\func(\layer, (e_1, \ldots, e_{\inpsize}), \Param) = \attn(\layer, (e_1, \ldots, e_{\inpsize}), \param)\nonumber
	\end{align}
	where $\attn$ is the decoder-encoder attention using the key, value, and query described above. Now we observe that from this construction, if $\layer$ is in the form provided in~\eqref{eq:turing_self_attn:1}, then $\query(\layer)_{1:\dimpos} = \bin(\loc_i(\inp))$. In addition, we have $\key(e_j)_{1:\dimpos} = e_j^{\posone} = \bin(j)$ for $1 \le j \le \inpsize$. Thus, by construction of $\val, \key_0, \val_0$, if $\loc_i(\inp) \le \inpsize$, the attention attends to position $\loc_i(\inp)$ in the embedding. The value function for this position satisfies $\val(e_{\loc_i(\inp)})^{\scrtwo} = e_{\loc_i(\inp)}^{\symbinone} = \1_{|\symbset|}(\inp_{\loc_i(\inp)})$. Thus, in this case $\Func(\layer, \Param)$ computes the desired output in~\eqref{eq:turing_enc_attn:1}. On the other hand, if $\loc_i(\inp) > \inpsize$, then the attention will attend to the null position, as $\query(\layer)^\top \key_0 = \dimpos - 1$, and the largest possible score for all other positions is $\dimpos - 2$. In this case,~\eqref{eq:turing_enc_attn:1} holds again. It is also easy to check that the desired bound on $\|\Param\|_1$ would hold. 
\end{proof}

Finally, we implement step 5) of the outline in Section~\ref{sec:turing_outline} in the following lemma. 
\begin{lemma}\label{lem:turing_output}
	In the setting of Theorem~\ref{thm:turing_approx} and Lemma~\ref{lem:turing_enc_attn}, consider any timestep $i$ and any $\layer$ output by the function in Lemma~\ref{lem:turing_enc_attn} taking the form in~\eqref{eq:turing_enc_attn:1}. Then there is a function $\func$ with parameters $\param$ consisting of a constant number of feedforward ReLU layers satisfying the following: 
	\begin{align}
		\begin{split}
			\func(\layer, \Param)^{\st} &= \onehot_{|\stateset|}(\state_{i}(\inp))\\
			\func(\layer, \Param)^{\symbinone} &= \onehot_{|\symbset|}(\symb_{i }(\inp))\\		
			\func(\layer, \Param)^{\postwo} &= \bin(\loc_{i}(\inp))
		\end{split}\label{eq:turing_output:1}
	\end{align}
	At all other coordinates, $\Func(\layer, \param)$ takes values $0$. Furthermore, the parameters satisfy $\|\param\|_1 = O(|\stateset| + |\symbset| + \dimpos)$.
\end{lemma}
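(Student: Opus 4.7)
The plan is to implement the symbol-selection logic determining $\symb_i(\inp)$ via a constant-depth feedforward ReLU network, then route the position into its proper slot and zero out the unused coordinates. The key observation is that, given $\layer$ in the form~\eqref{eq:turing_enc_attn:1}, the value $\symb_i(\inp)$ is fully determined by three mutually exclusive cases, read off from $\scrscr_1$ and $\scrscr_2$: (a) if $\scrscr_1 = 1$, some previous step wrote to location $\loc_i(\inp)$, so $\symb_i(\inp) = \wri_{i'}(\inp)$, whose one-hot encoding sits in $\scrone$; (b) if $\scrscr_1 = 0$ but $\scrscr_2 = 1$, no earlier write reached $\loc_i(\inp)$ but this position lies within the original input, so $\symb_i(\inp) = \inp_{\loc_i(\inp)}$, whose one-hot encoding sits in $\scrtwo$; (c) otherwise $\scrscr_1 = \scrscr_2 = 0$ and the head is visiting a previously unvisited position outside the input, so $\symb_i(\inp) = \blank$.

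First I would use one feedforward ReLU layer to compute three mutually exclusive selector bits
\begin{align*}
s_1 = \relu(\scrscr_1), \qquad s_2 = \relu(\scrscr_2 - \scrscr_1), \qquad s_3 = \relu(1 - \scrscr_1 - \scrscr_2),
\end{align*}
carrying forward $\scrone, \scrtwo, \st, \posthree$ via identity-like rows for use downstream. By the case analysis above each $s_j$ lies in $\{0,1\}$ and exactly one equals $1$. A second ReLU layer computes, coordinatewise for each $k \in [|\symbset|]$, the gated products $u_k = \relu(s_1 + (\scrone)_k - 1)$ and $v_k = \relu(s_2 + (\scrtwo)_k - 1)$, mirroring the $\AND$ construction of Proposition~\ref{prop:basic_gates}; since at most one selector is active and the corresponding source is one-hot, the sum $u_k + v_k + s_3 \cdot \onehot_{|\symbset|}(\blank)_k$ in the third layer produces exactly $\onehot_{|\symbset|}(\symb_i(\inp))_k$ in slot $\symbinone$. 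The same two layers, in parallel, copy $\layer^{\posthree}$ into the $\postwo$ slot and preserve $\layer^{\st}$ in the $\st$ slot, while zeroing $\symbintwo$, $\posone$, $\posthree$, and all scratch coordinates. Since every target output is in $\{0,1\}$, appending the final ReLU activation is harmless.

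Each of the constantly many layers operates on $O(|\stateset| + |\symbset| + \dimpos)$ coordinates with $O(1)$ weights per coordinate, so $\|\param\|_1 = O(|\stateset| + |\symbset| + \dimpos)$ as required. The only real subtlety is that the selector-gated sum yields the exact one-hot output (rather than an approximation) only because $\scrscr_1, \scrscr_2, \scrone, \scrtwo$ are exactly in $\{0,1\}$ in the unperturbed input; this is precisely the invariant that the correction-layer machinery of Theorem~\ref{thm:abstract} is designed to maintain across compositions, so once this block is composed with the appropriate rounding correction function, the overall all-layer margin lower bound of $1/\poly(\numstates, |\symbset|, \log \Time)$ propagates. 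Beyond this invariant-checking, the argument is purely combinational logic with no quantitative estimates to grind through.
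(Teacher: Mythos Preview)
Your proposal is correct and follows essentially the same approach as the paper: both compute three mutually exclusive selector bits from $\scrscr_1,\scrscr_2$, gate the candidate one-hot sources via the $\AND$ construction of Proposition~\ref{prop:basic_gates}, sum into $\symbinone$, and copy/zero the remaining coordinates. Your explicit ReLU formulas $s_1=\relu(\scrscr_1)$, $s_2=\relu(\scrscr_2-\scrscr_1)$, $s_3=\relu(1-\scrscr_1-\scrscr_2)$ are a concrete realization of the paper's piecewise-defined vector $v$, and the rest of the argument matches.
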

\begin{proof}
	It suffices to construct a sequence of layers which performs the following operations:
	\begin{itemize}
		\item [1)] Compute the following vector $v \in \R^3$: 
		\begin{align}
			v = \begin{cases}
				\begin{bmatrix}
					1 \\
					0\\
					0
				\end{bmatrix} &\text{ if }\layer^{\scrscr}_1 = 1\\
				\begin{bmatrix}
					0\\
					\layer^{\scrscr}_2\\
					1 - \layer^{\scrscr}_2
				\end{bmatrix} &\text{ if } \layer^{\scrscr}_1 = 0 
			\end{cases}\nonumber
		\end{align}
		Note that $v$ encodes the location of the symbol $\symb_i(\inp)$, as $\symb_i(\inp) = \wri_{i'}(\inp)$ if $i' > 0$, $\symb_i(\inp) = \inp_{\loc_{i}(\inp)}$ if $i' = 0$ and $\loc_i(\inp) \le \inpsize$, and $\symb_i(\inp) = \blank$ otherwise. The vector $v$ is a one-hot vector indicating which of these three cases holds. 
		\item [2)] We can take $v_1$ and compute $\AND$ with all bits of $\layer^{\scrone}$, which computes $\onehot_{|\symbset|}(\wri_{i'}(\inp)) = \onehot_{|\symbset|}(\symb_i(\inp))$ if $i' > 0$, and $\zeros$ otherwise.
		\item [3)] We take $v_2$ and compute $\AND$ with all bits of $\layer^{\scrtwo}$, which computes $\onehot_{|\symbset|}(\inp_{\loc_{i}(\inp)})$ if $v_2 = 1$, and $\zeros$ otherwise. 
		\item [4)] We take $v_3$ and compute $\AND$ with all bits of $\onehot_{|\symbset|}(\blank)$, which computes $\onehot_{|\symbset|}(\symb_i(\inp))$ if $v_3 = 1$. 
		\item [5)] We add the outputs of 2), 3), and 4) together, which gives $\onehot_{|\symbset|}(\symb_i(\inp))$. We copy this quantity into the output coordinates indexed by $\cdot^{\symbinone}$. Then we set coordinates not listed in~\eqref{eq:turing_output:1} to 0, producing the desired output.
	\end{itemize}
	Each of these operations can be computed by a constant number of feedforward ReLU layers, with total parameter norm satisfying $\|\Param\|_1 = O(|\stateset| + |\symbset| + \dimpos)$. 
\end{proof}

\begin{proof}[Proof of Theorem~\ref{thm:turing_approx}]
	We construct a neural net to compute any Turing machine with all-layer margin lower bound $\frac{1}{\poly(\numstates, |\symbset|, \log \Time)}$ and apply Lemma~\ref{lem:all_layer} to turn this into a statement about statistically meaningful approximation. 
	
	For our Turing machine construction, we follow the outline laid out in Section~\ref{sec:turing_outline}. Fix any $\apprf \in \afam$. As mentioned, we first consider the case where $\dim = \dim_{\tm}$ exactly, as overparameterization is easy to deal with by always designating some subset of extra coordinates to be 0. We construct a transformer $\widehat{F}$ to compute $\apprf$. First, we note that Lemma~\ref{lem:turing_trans} constructs a layer to compute the functionality described in 1). Next, the layer in Lemma~\ref{lem:turing_loc} performs the functionality in 2). Likewise, Lemmas~\ref{lem:turing_self_attn},~\ref{lem:turing_enc_attn},~\ref{lem:turing_output} construct layers which perform 3), 4), and 5). Thus, by applying the layers constructed from these lemmas in sequence, we obtain a transformer such that the output $o_{\Time}$ contains an onehot encoding for $\state_{\Time}(\inp)$: $\onehot_{|\stateset|}(\state_{\Time}(\inp))$. We can now apply a linear weight vector $\param_{\textup{cls}}$ on the output to obtain $\param_{\textup{cls}}^\top o_{\Time}$, where $(\param_{\textup{cls}})_{\state} = 1$ for accept states $\state \in \term$ and $(\param_{\textup{cls}})_{\state} = -1$ for reject states. For inputs $\inp \in \dom$, by our construction this computes the desired $\tm(\inp)$. Next, following Theorem~\ref{thm:boolean_circuit}, we insert correction functions (Definition~\ref{def:correction}) between every group of constructed layers, which can be implemented via two feedforward ReLU layers following Proposition~\ref{prop:correction}. The parameters for all correction functions add total $\|\cdot\|_1$-norm at most $\poly(\numstates, |\symbset|, \log \Time)$. Let  $\widehat{\Func}(\inp, \widehat{\param})$ denote the transformer constructed this way, with parameters $\widehat{\param}$. Note that for all $\inp \in \dom$, $\widehat{\Func}(\inp, \widehat{\param}) = 2\apprf(\inp) -1$. 
	
	Next, there are several steps remaining to convert $\widehat{\Func}$ into the fixed architecture $\mftr_{\width, \depth, \Time}$. First, we need to convert the layers in $\widehat{\Func}$ into transformer layers. This is achievable because every single decoder self-attention or encoder-decoder attention layer or feedforward ReLU module can be converted into a transformer layer by setting the two unused modules in the transformer layer to implement the identity function. This only increases the $\|\cdot\|_1$-norm by $\poly(\numstates, |\symbset|, \log \Time)$. Note that in particular, we can perform this conversion such that the correction functions form the last 2 feedforward ReLU layers in every transformer layer. The first 3 layers in the transformer layer correspond to ones constructed in the lemmas. Second, we need to expand the dimension to a consistent width $\width$. This is achievable by padding each layer with coordinates designated to be 0, without affecting any of the $\|\cdot\|_1$-norm bounds on the parameters. Third, we need to expand the depth to a fixed depth $\depth$. We can achieve this by appending transformer layers which compute the identity function (and also include correction functions) as needed. 
	
	 Now we aim to apply Theorem~\ref{thm:abstract} by viewing the transformer as a very deep network with depth $\depth = O(\Time \log \Time)$, by applying each of the steps in the transformer computation in sequence. Note that our construction for the transformer layers is such that we can view the self-attention, encoder-decoder attention, and single feedforward ReLU layer as a single function in the setting of Theorem~\ref{thm:abstract}. The correction function corresponds to the last 2 feedforward ReLU layers in the transformer layer. (We observe that there are actually $\inpsize$ layers which depend on the input $\inp$, not a single layer $\func_0$ as in the setting of Theorem~\ref{thm:abstract}, but this is a minor difference where the same argument of Theorem~\ref{thm:abstract} still easily applies.) Note that this network uses layer-based weight sharing, which is handled by Theorem~\ref{thm:abstract}. Furthermore, the depth of this network doesn't affect the all-layer margin because Theorem~\ref{thm:abstract} doesn't depend on the number of layers. We also observe that Condition~\ref{ass:norm_bound} holds for $\norm = \poly(|\stateset|, |\symbset|, \log \Time)$, because all of the intermediate layers are sparse binary vectors with at most $|\stateset| + |\symbset| + \log \Time$ nonzero entries. 
	
	Finally, it remains to check that Condition~\ref{ass:lip_abstract} can hold for all of the defined layers for parameters that are polynomial in $|\stateset|, |\symbset|, \log \Time$. This is straightforward to check for transformer layers where the attention layers have parameters $\zeros$, as standard results on the Lipschitzness of a single ReLU network would apply. For layers where the functionality comes from the attention mechanism, we observe that for valid inputs $\inp \in \dom$, the largest attention score is always greater than the second largest by a margin of 1. Furthermore, ties only occur when all of the value vectors for the attended positions are already the same. As a result, the positions attended to by the layer will not change unless we perturb the parameters and inputs by $\Omega(\poly^{-1}(|\stateset|, |\symbset|, \log \Time))$. This reasoning can be used to conclude that Condition~\ref{ass:lip_abstract} with Lipschitz constants $\poly(|\stateset|, |\symbset|, \log \Time)$, and distance parameters $\Omega(\poly^{-1}(|\stateset|, |\symbset|, \log \Time))$ holds. As a result, the all-layer margin bound from applying Theorem~\ref{thm:abstract} will also be $\Omega(\poly^{-1}(|\stateset|, |\symbset|, \log \Time))$, as desired. Finally, applying Lemma~\ref{lem:all_layer} with $\thresh = \Omega(\poly^{-1}(|\stateset|, |\symbset|, \log \Time))$ and using the fact that the parameter $\|\cdot\|_1$-norms are bounded by $\avglone$ gives the desired result. 
\end{proof}

	\section{All-layer margin lower bounds via correction functions}\label{sec:correction_function}
We consider a generalized architecture for a $\depth$-layer network as follows. Let $\func_0 : \dom \times \paramset_0 \to \R^\dim$ map space of inputs $\inp \in \dom$ and parameters $\theta \in \paramset_0$ to $\dim$-dimensional space. For simplicity we assume all intermediate layers have dimension $\dim$, and let $\func_i : \R^{\dim} \times \paramset_{i} \to \R^{\dim}$ be the $i$-th function in the neural net for $\depth > i \ge 1$. We define $\func_{\depth}$ to output values in $\R$. Let $\Param = (\param_0, \ldots, \param_\depth) \in \paramset$ denote the full vector of parameters. The $i$-th hidden layer $\layer_i$ computes the following value, defined recursively:
\begin{align}
	\layer_0(\inp, \Param) &= \func_0(\inp, \param_0)\nonumber\\
	\layer_i(\inp, \Param) &= \func_i(\layer_{0}(\inp, \Param), \ldots, \layer_{i - 1}(\inp, \Param), \param_i)\nonumber
\end{align}
The model computes output  $\layer_\depth(\inp, \Param)$.
We will assume the existence of ``correction'' functions $\corr$ parameterized by $\corrParam = (\corrparam_0, \ldots, \corrparam_{\depth - 1})\in \corrparamset_0 \times \cdot \times \corrparamset_{\depth - 1}$ which correct errors in the model output for inputs $\dom$: 
\begin{definition}[Correction functions]\label{def:correction}
	Let $\Func' : \dom \to \R$ be a model defined by layer functions $\func_0, \ldots, \func_\depth$. Then $\corr_0, \ldots, \corr_{\depth - 1} : \R^{\dim} \to \R^{\dim}$, $\corrParam$ is a set of correction functions and parameters for $\Func'$, $\Param$ with radius $\corrdist$ if for all $i \in [\depth - 1], \inp \in \dom$ and $\hlayer \in \R^{\dom}$ satisfying $\|\hlayer - \layer_i(x, \Param)\|_2 \le \corrdist$,
	\begin{align}
		\corr_i(\hlayer, \corrparam_i) = \layer_i(x, \Param)\nonumber
	\end{align}
	We now define the function output $\Func$ with correction layers recursively by 
	\begin{align}
		\begin{split}
		\clayerf_0(\inp, \Param, \corrParam) &= \func_0(\inp, \param_0)\\
		\clayer_i(\inp, \Param, \corrParam) &= \corr_i(\clayerf_{i - 1}(\inp, \Param, \corrParam), \corrparam_i) \ \forall{0 \le i \le \depth - 1}\\
		\clayerf_i(\inp, \Param, \corrParam)  &= \func_i(\clayer_0(\inp, \Param, \corrParam), \ldots, \clayer_{i - 1}(\inp, \Param, \corrParam), \param_i, \corrparam_i) \ \forall{1 \le i \le \depth}\\
		\Func(\inp, \Param, \corrParam) &= \clayerf_{\depth}(\inp, \Param, \corrParam)
		\end{split}\label{eq:corrected_func}
	\end{align}
	We note that for all $\inp \in \dom$, $\Func(\inp, \Param, \corrParam) = \layer_{\depth}(\inp, \Param)$. 
\end{definition}

The key observation is that by adding correction layers to the model, we can transform a model with possibly small all-layer margin on the input data to one with large all-layer margin. We first need to characterize the Lipschitzness of the individual layers. 

\begin{definition}\label{def:abs_nice}
	We say that a function $\func(\cdot, \param) : \gD \to \gD_{\textup{out}}$ is $(\paramlip{}, \funclip, \funcdist{}, \paramdist{})$-nice on $\gH \subseteq \gD$ with respect to $\gnorm{\cdot}$ if the following hold:
	\begin{align}
		\|\func(\layer, \param) - \func(\layer, \hparam) \|_2 &\le \paramlip{} \|\param - \hparam\|_2 \max \{\gnorm{\layer}, 1\}  &\forall \|\param - \hparam\| \le \paramdist{}, \layer \in \gH\nonumber\\
		\|\func(\layer, \hparam) - \func(\hlayer, \hparam)\|_2 &\le \funclip \gnorm{\layer - \hlayer}  &\forall  \gnorm{\layer - \hlayer} \le \funcdist{}, \|\param - \hparam\| \le \paramdist{}, \layer \in \gH\nonumber
	\end{align}	
\end{definition}
We will focus on the following norm on tuples of inputs $(v_1, \ldots, v_i)$, where $\layer_j \in \R^{\dim}$ for all $j \in [i]$:
\begin{align}
	\gnorm{(v_1, \ldots, v_i)} = \max_j \|v_j\|_2
\end{align}
We analyze the function $\Func$ output by a model with correction layers satisfying the following assumptions:
\begin{condition} \label{ass:lip_abstract}
	There are constants $\paramlip{}, \corrlip,  \funclip, \funcdist{}, \paramdist{}, \corrdist$ such that the following hold. 
	
	For $i \ge 1$, suppose that $\func_i$ is $(\paramlip{}, \funclip, \funcdist{}, \paramdist{})$-nice at $\param_i$ on $(\layer_{0}, \ldots, \layer_{i - 1})(\dom)$ with respect to $\gnorm{\cdot}$. 
	
	In addition, suppose that $\func_0$ satisfies $\|\func_0(\inp, \param) - \func_0(\inp, \hparam)\|_2 \le \funclip_0 \|\param - \hparam\|_2$ for all $\inp \in \dom, \param \in \paramset_0$. 
	
	Furthermore, suppose that for all $i$,  $\corr_i$ satisfies $\|\corr_i(\layer, \corrparam_i) - \corr_i(\layer, \hcorrparam)\|_2 \le \corrlip \max\{\|\layer\|_2, 1\} \|\corrparam_i- \hcorrparam\|_2$ for all $\hcorrparam$ with $\|\corrparam_i - \hcorrparam\|_2 \le \corrparamdist$ and $\layer \in \R^{\dim}$.
\end{condition}

These conditions are all standard Lipschitzness-based conditions on the individual layer functions. Our lower bound for the all-layer margin will be expressed in terms of the constants here. 

We will also need to assume a bound $\norm$ on the norms of each of the layers computed by $\layer_i$.
\begin{condition}\label{ass:norm_bound}
	The norms of the true layer values are bounded, that is, $\exists \norm$ such that for all $0 \le i \le \depth$ and $\inp \in \dom$, 
	\begin{align}
		\max\{\|\layer_i(\inp, \Param)\|_2, 1\} \le \norm
	\end{align}
\end{condition}

We will also consider models with weight sharing, which allows our analysis to apply to architectures such as the transformer in Section~\ref{sec:turing_machine}. 
\begin{definition}[Layer-based weight sharing]\label{def:abs_weight_share}
	Let $\paramset' \subseteq \R^{\width'}$, $\paramset_0 \subseteq \R^{\width_0}, \ldots, \paramset_{\depth} \subseteq \R^{\width_{\depth}}$ be some spaces of real-valued parameters. Suppose we wish to perform copying on parameters $\param' \in \paramset'$ to produce parameters $\param = (\theta_0,  \ldots \theta_{\depth}) \in \paramset = \paramset_0 \times  \cdots \paramset_\depth$, where $\theta_i$ is the set of parameters given to layer function $\func_i$. We say that a tuple of functions $\tau = (\tau_0, \ldots, \tau_{\depth}): \paramset' \to \paramset$ is a layer-based weight sharing scheme if each $\tau_i$ is of the form
	\begin{align}
		\tau_i(\param') = (\param'_{\pi_1}, \ldots, \param'_{\pi_{b_i}})
		\end{align}
	where $\pi_1, \ldots, \pi_{b_i}$ is a set of distinct indices taking values in $[\width']$. Note that this ensures that parameters are not duplicated within a layer.
\end{definition}

We will now prove our main lower bound for the all-layer margin based on inserting correction functions at every layer.  

\begin{theorem}\label{thm:abstract}
	In the above setting, suppose that Conditions~\ref{ass:lip_abstract} and~\ref{ass:norm_bound} hold for a function $\Func$ in the form given by~\eqref{eq:corrected_func} parametrized by $\Param$ with correction layers $\corr_0, \ldots \corr_{\depth - 1}$ parameterized by $\corrParam$ with correction radius $\corrdist < 1$. Suppose that $\Func(\inp) \in \{-1, +1\} \ \forall \inp \in \dom$. Then for all $x \in \dom$, we can bound the all-layer margin of $\Func$ (defined in~\eqref{eq:all_layer_def})as follows:
	\begin{align}
		\marg_{\Func}\left((\Param, \corrParam), \inp, \1(\Func(\inp, \Param, \corrParam)\ge 0)\right) \ge  \min\{\frac{\norm}{\funclip_0}, \frac{\corrdist}{\funclip_0}, \paramdist{}, \corrparamdist, \frac{1}{2\paramlip{}}, \frac{\corrdist}{2\paramlip{} \norm}, \frac{\funcdist{}}{2\corrlip \norm},\frac{\corrdist}{4\norm \funclip \corrlip}, \frac{1}{4 \funclip \corrlip}\} 
	\end{align}
	Here the subscript $\Func$ makes it explicit that the all-layer margin is for the architecture $\Func$.
	Furthermore, if we consider any layer-based weight-shared model $\Func'(\inp, \param') \triangleq \Func(\inp, \tau^{(1)}(\param'), \tau^{(2)}(\param'))$ for valid weight-tying mappings $\tau^{(1)}$, $\tau^{(2)}$ (Definition~\ref{def:abs_weight_share}), the same bound holds for $\marg_{\Func'}\left(\param', \inp, \1(\Func'(\inp, \param')\ge 0)\right)$. 
\end{theorem}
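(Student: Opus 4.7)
The plan is to show that any perturbation $\delta$ of the combined parameters $(\Param, \corrParam)$ with $\|\delta\|_2 \le \epsilon$, where $\epsilon$ is strictly less than the minimum appearing in the theorem statement, cannot change the sign of the network output $\clayerf_\depth$. By the definition of the all-layer margin, this will establish the desired lower bound. I decompose $\delta$ layerwise into $(\delta_{\param_i})_{i=0}^{\depth}$ and $(\delta_{\corrparam_i})_{i=0}^{\depth - 1}$, each of $\ell_2$-norm at most $\epsilon$, and denote by $\widehat{\clayerf}_i, \widehat{\clayer}_i$ the perturbed versions of the intermediate quantities.

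The core of the argument is an induction on $i = 0, 1, \ldots, \depth - 1$ establishing the depth-independent invariant $\|\widehat{\clayer}_i - \layer_i(\inp, \Param)\|_2 \le 2\corrlip \norm \epsilon$. The key conceptual point is that the correction functions reset accumulated error at every layer, so the per-layer error is only the error introduced by that layer's correction-parameter perturbation, with no multiplicative depth-dependent blowup. The base case follows from the Lipschitzness of $\func_0$: $\|\widehat{\clayerf}_0 - \layer_0\|_2 \le \funclip_0 \epsilon \le \corrdist$ by the assumed bound on $\epsilon$, after which one step of correction (with the correction parameters perturbed) yields the invariant at $i = 0$.

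For the inductive step, assuming the invariant for $j < i$, I use Condition~\ref{ass:lip_abstract} to split $\|\widehat{\clayerf}_i - \layer_i\|_2$ into a parameter-perturbation term bounded by $\paramlip \cdot \epsilon \cdot 2\norm$ and an input-perturbation term bounded by $\funclip \cdot 2\corrlip \norm \epsilon$, noting that $\|\widehat{\clayer}_j\|_2 \le \norm + 2\corrlip \norm \epsilon \le 2\norm$ by Condition~\ref{ass:norm_bound} and the inductive hypothesis. The constraints $\epsilon \le \corrdist/(2\paramlip \norm)$ and $\epsilon \le \corrdist/(4 \norm \funclip \corrlip)$ from the theorem keep this total below $\corrdist$, so the correction property applies: $\corr_i(\widehat{\clayerf}_i, \corrparam_i) = \layer_i$ exactly. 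The perturbation $\delta_{\corrparam_i}$ of the correction parameters then contributes at most $\corrlip \cdot 2\norm \cdot \epsilon$ of additional error, closing the induction. Along the way, I verify that the various nice-condition distance constraints hold: $\epsilon \le \paramdist$, $\epsilon \le \corrparamdist$, and $2\corrlip \norm \epsilon \le \funcdist$, all of which follow from the theorem's assumption on $\epsilon$.

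The final layer $\func_\depth$ carries no subsequent correction, so I bound $|\widehat{\clayerf}_\depth - \layer_\depth|$ by the same two-term decomposition and conclude the result is strictly less than $1$ under the constraints $\epsilon < 1/(2\paramlip)$ and $\epsilon < 1/(4\funclip \corrlip)$, preserving the sign since $\layer_\depth \in \{-1, +1\}$. For the weight-shared extension, the selection maps $\tau^{(1)}, \tau^{(2)}$ from Definition~\ref{def:abs_weight_share} pick distinct coordinates within each layer, so a perturbation of $\param'$ of norm $\epsilon$ induces per-layer perturbations of $\ell_2$-norm at most $\epsilon$, and the same per-layer bookkeeping applies verbatim. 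The main obstacle is the careful accounting of these constants and ensuring the norm bound $\|\widehat{\clayer}_j\|_2 \le 2\norm$ is preserved throughout the induction; conceptually the proof is clean because the correction functions prevent any depth-dependent error amplification, which is exactly why the bound depends on single-layer Lipschitz constants and not on $\depth$.
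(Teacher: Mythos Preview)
Your proposal is correct and follows essentially the same approach as the paper: an induction showing that bounded per-layer perturbations keep each pre-correction output within the correction radius $\corrdist$, so the correction resets the error and only the fresh correction-parameter perturbation survives, yielding a depth-independent bound that translates directly to an all-layer margin lower bound (the paper packages this as Lemmas~\ref{lem:abs_base_case}--\ref{lem:abs_all_layer}). One small caution: the niceness hypotheses in Condition~\ref{ass:lip_abstract} are stated only at the \emph{true} hidden tuple $(\layer_0,\ldots,\layer_{i-1})\in\gH$, so when you split $\|\widehat{\clayerf}_i-\layer_i\|_2$ you should first apply the input-Lipschitz bound (at $\hparam_i$, centered at $\layer\in\gH$) and then the parameter-Lipschitz bound at $\layer$, which gives $\paramlip{}\,\epsilon\,\norm$ rather than $\paramlip{}\,\epsilon\cdot 2\norm$; this is exactly the $E_1+E_2$ decomposition the paper uses and avoids invoking niceness at $\widehat{\clayer}\notin\gH$.
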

Our proof will first consider the case without weight sharing. We use $\hParam = (\hparam_0, \ldots, \hparam_{\depth})$ and $\hcorrParam = (\hcorrparam_0, \ldots, \hcorrparam_{\depth - 1})$ to denote a perturbed set of parameter vectors. Furthermore, define the partially perturbed parameter sets $\hParam_i \triangleq (\hparam_0, \ldots, \hparam_i, \param_{i + 1}, \ldots, \param_{\depth})$ and $\hcorrParam_i \triangleq (\hcorrparam_0, \ldots, \hcorrparam_i, \corrparam_{i + 1}, \ldots, \corrparam_{\depth})$. We also use $\hParam_{-1} \triangleq \Param$ and $\hcorrParam_{-1} \triangleq \corrParam$ when convenient. 

We consider perturbations such that the following norm bounds hold:
\begin{align}
	\|\hparam_0 - \param_0\|_2 &\le \min\{\frac{\norm}{\funclip_0}, \frac{\corrdist}{\funclip_0}\} \label{eq:abs_pert_ub:0}\\
	\|\hparam_i - \param_i\|_2 &\le \min\{\paramdist{}, \frac{1}{2\paramlip{}}, \frac{\corrdist}{2\paramlip{} \norm}\} \label{eq:abs_pert_ub:1}\\
	\|\hcorrparam_i - \hcorrparam_i\|_2 &\le \min\{\corrparamdist, \frac{\funcdist{}}{2\corrlip \norm},\frac{\corrdist}{4\norm \funclip \corrlip}, \frac{1}{4 \funclip \corrlip}\} \label{eq:abs_pert_ub:2}
\end{align}
We show that such perturbations won't change the label predicted by the model, and so therefore the minimum of these quantities immediately gives a lower bound on the all-layer margin. Our proof will be by induction, with the following lemma providing the base case. 
\begin{lemma} \label{lem:abs_base_case}
	In the setting of Theorem~\ref{thm:abstract}, suppose that~\eqref{eq:abs_pert_ub:0} holds. Then the following hold:
	\begin{align}
		\clayer_0(\inp, \hParam, \corrParam) &= \layer_0(\inp, \Param)\nonumber\\
		\|\clayerf_0(\inp, \hParam, \hcorrParam) - \layer_0(\inp, \Param)\|_2 &\le \min\{\norm, \corrdist\}\nonumber
	\end{align}
\end{lemma}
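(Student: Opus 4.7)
The plan is to observe that both claims reduce to a single Lipschitz estimate on $\func_0$ together with the defining property of the correction function $\corr_0$, and the two assertions can essentially be handled in parallel. The key observation is that by the recursion~\eqref{eq:corrected_func}, $\clayerf_0(\inp, \hParam, \hcorrParam) = \func_0(\inp, \hparam_0)$ does not depend on any correction parameters, so controlling it only requires controlling the perturbation of $\func_0$ in its parameter $\hparam_0$.

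First I would handle the second claim. Using the Lipschitz bound $\|\func_0(\inp, \param_0) - \func_0(\inp, \hparam_0)\|_2 \le \funclip_0 \|\param_0 - \hparam_0\|_2$ from Condition~\ref{ass:lip_abstract} (the part describing $\func_0$), together with the hypothesis~\eqref{eq:abs_pert_ub:0} that $\|\hparam_0 - \param_0\|_2 \le \min\{\norm/\funclip_0, \corrdist/\funclip_0\}$, one immediately obtains
\[
\|\clayerf_0(\inp, \hParam, \hcorrParam) - \layer_0(\inp, \Param)\|_2 = \|\func_0(\inp, \hparam_0) - \func_0(\inp, \param_0)\|_2 \le \min\{\norm, \corrdist\},
\]
which is the second assertion.

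Next I would turn to the first claim, which concerns $\clayer_0(\inp, \hParam, \corrParam)$, i.e., the correction function $\corr_0$ is applied with its \emph{unperturbed} parameter $\corrparam_0$. By the same Lipschitz estimate as above, $\|\clayerf_0(\inp, \hParam, \corrParam) - \layer_0(\inp, \Param)\|_2 \le \corrdist$ (here I only need the $\corrdist/\funclip_0$ part of the perturbation bound). Since $\corr_0$ with parameter $\corrparam_0$ is a valid correction function of radius $\corrdist$ for $\Func'$ at $\Param$ in the sense of Definition~\ref{def:correction}, it recovers the true value $\layer_0(\inp, \Param)$ on any input within $\corrdist$ of it, yielding $\clayer_0(\inp, \hParam, \corrParam) = \corr_0(\clayerf_0(\inp, \hParam, \corrParam), \corrparam_0) = \layer_0(\inp, \Param)$.

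There is no serious obstacle here; the lemma is exactly the base case and its only role is to set up the inductive step. The only thing to verify carefully is that the two regimes $\norm/\funclip_0$ and $\corrdist/\funclip_0$ in the hypothesis are precisely what is needed: the former to guarantee the norm-of-layer bound feeds correctly into the induction (used later via Condition~\ref{ass:norm_bound}), and the latter to ensure $\clayerf_0$ lies in the correction radius of $\corr_0$ so that the first claim can be invoked in subsequent layers.
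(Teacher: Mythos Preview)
Your proposal is correct and follows essentially the same approach as the paper: apply the Lipschitz bound on $\func_0$ from Condition~\ref{ass:lip_abstract} together with~\eqref{eq:abs_pert_ub:0} to bound $\|\clayerf_0 - \layer_0\|_2$, then invoke Definition~\ref{def:correction} to recover $\layer_0$ exactly. The only cosmetic difference is that the paper feeds $\clayerf_0(\inp, \hParam, \hcorrParam)$ into $\corr_0$ whereas you feed $\clayerf_0(\inp, \hParam, \corrParam)$, but as you yourself observe these coincide since $\clayerf_0$ does not depend on the correction parameters.
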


The next lemma provides the inductive step. Starting with the base case, we show that because of the presence of the correction functions, the perturbations with our given bounds won't change the next layer output by too much. This allows the correction function to fix the output of the next layer, and this argument can extend inductively.

\begin{lemma}\label{lem:abs_induct}
	In the setting of Theorem~\ref{thm:abstract}, fix some $1 \le i \le \depth$. Suppose that for all $0 \le j < i$, it holds that for all $\inp \in \dom$,
	\begin{align}\label{eq:abs_induct:1}
		\clayer_j(\inp, \hParam, \hcorrParam_{j - 1}) = \layer_j(\inp, \Param)
	\end{align}
and 
\begin{align}
	\|\clayerf_j(\inp, \hParam, \hcorrParam) - \layer_j(\inp, \Param) \|_2 \le \min\{\norm, \corrdist\}\nonumber
\end{align}
	In addition, suppose that $\hParam, \Param, \hcorrParam, \corrParam$ satisfy~\eqref{eq:abs_pert_ub:1} and~\eqref{eq:abs_pert_ub:2}. Then it follows that for all $\inp \in \dom$,
	\begin{align}
		\|\clayerf_i(\inp, \hParam, \hcorrParam)  - \layer_i(\inp, \Param)\|_2 \le \min\{\norm, \corrdist\}\nonumber
	\end{align}
	Furthermore, for $1 \le i \le \depth - 1$, we additionally have 
	\begin{align}
		\clayer_i(\inp, \hParam, \hcorrParam_{i - 1}) = \layer_i(\inp, \Param)\nonumber
	\end{align}
\end{lemma}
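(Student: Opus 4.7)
I would prove the two conclusions in order: first bound $\|\clayerf_i - \layer_i\|_2$, then invoke the correction property to obtain $\clayer_i(\inp, \hParam, \hcorrParam_{i-1}) = \layer_i$ exactly. The argument uses the inductive hypothesis together with Condition~\ref{ass:lip_abstract} applied in two places: to each correction function $\corr_j$ for $j < i$ under the parameter perturbation $\corrparam_j \to \hcorrparam_j$, and to $\func_i$ under both hidden-input and parameter perturbations.

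First I would control each previously corrected layer under the fully perturbed $\hcorrParam$. By hypothesis $\clayer_j(\inp, \hParam, \hcorrParam_{j-1}) = \layer_j$, and only the swap $\corrparam_j \to \hcorrparam_j$ separates this quantity from $\clayer_j(\inp, \hParam, \hcorrParam)$. Applying the Lipschitz bound on $\corr_j$ in its parameters, combined with the crude input-norm bound $\le \norm + \corrdist \le 2\norm$ (using Condition~\ref{ass:norm_bound} and $\corrdist < 1$), yields $\|\clayer_j(\inp, \hParam, \hcorrParam) - \layer_j\|_2 \le 2\norm\corrlip\|\hcorrparam_j - \corrparam_j\|_2$. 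By~\eqref{eq:abs_pert_ub:2} this is at most $\funcdist{}$, which is exactly the niceness radius required to invoke the hidden-input Lipschitzness of $\func_i$.

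Next I would bound $\|\clayerf_i - \layer_i\|_2$ by inserting the hybrid $\func_i((\layer_0, \ldots, \layer_{i-1}), \hparam_i)$ and applying the triangle inequality. The hidden-input part contributes at most $\funclip \cdot 2\norm\corrlip \max_j \|\hcorrparam_j - \corrparam_j\|_2$, and the parameter part contributes at most $\paramlip{} \norm \|\hparam_i - \param_i\|_2$. The constants in~\eqref{eq:abs_pert_ub:1} and~\eqref{eq:abs_pert_ub:2} are calibrated so that each contribution is at most $\corrdist/2$, giving a total of $\corrdist = \min\{\norm, \corrdist\}$ (since $\corrdist < 1 \le \norm$). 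Finally, for $i \le \depth - 1$, the observation that $\clayerf_i$ does not depend on $\corrparam_i$ means $\clayerf_i(\inp, \hParam, \hcorrParam_{i-1}) = \clayerf_i(\inp, \hParam, \hcorrParam)$, and since this lies within $\corrdist$ of $\layer_i$, Definition~\ref{def:correction} applied with the unperturbed $\corrparam_i$ snaps it back to $\layer_i$ exactly.

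The main subtlety is bookkeeping: verifying that every perturbation actually falls inside the niceness radii $\paramdist{}, \funcdist{}, \corrparamdist$ so that the Lipschitz bounds of Condition~\ref{ass:lip_abstract} are applicable, which is precisely what the minima in~\eqref{eq:abs_pert_ub:1} and~\eqref{eq:abs_pert_ub:2} are engineered to guarantee. No new ideas beyond the inductive hypothesis and the layerwise Lipschitz/correction budgets are required; the work lies in tracking which subset of correction parameters is perturbed at each step and keeping the constants consistent.
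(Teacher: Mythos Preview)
Your proposal is correct and follows essentially the same route as the paper: the same hybrid splitting of $\clayerf_i - \layer_i$ into a hidden-input perturbation term and a parameter perturbation term, the same use of the $\corr_j$-parameter Lipschitz bound (with the $2\norm$ crude input-norm estimate) to control $\clayer_j(\inp,\hParam,\hcorrParam)-\layer_j$, and the same final appeal to Definition~\ref{def:correction} with the unperturbed $\corrparam_i$. The only cosmetic difference is that the paper writes the hybrid as $\func_i(\clayer_0(\inp,\hParam,\corrParam),\ldots,\clayer_{i-1}(\inp,\hParam,\hcorrParam_{i-2}),\hparam_i)$ and then invokes~\eqref{eq:abs_induct:1} to identify it with your $\func_i((\layer_0,\ldots,\layer_{i-1}),\hparam_i)$; the bookkeeping and constants are otherwise identical.
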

Combined, the two lemmas above allow us to inductively show that the prediction of the model is not changed whenever the perturbations are bounded by~\eqref{eq:abs_pert_ub:0},~\eqref{eq:abs_pert_ub:1}, and~\eqref{eq:abs_pert_ub:2}. Next, we show that this translates directly to an all-layer margin lower bound.
\begin{lemma}\label{lem:abs_all_layer}
	In the setting of Theorem~\ref{thm:abstract}, suppose there exist norm bounds $a_0, \ldots, a_{\depth}$, $b_0, \ldots, b_{\depth - 1}$ such that whenever $\|\hparam_i - \param_i\|_2 \le a_i$ and $\|\hcorrparam_i - \corrparam_i\|_2 \le b_i$, $|\Func(\inp, \Param, \corrParam) - \Func(\inp, \hParam, \hcorrParam)| < 1$ for all $\inp \in \dom$. Then we obtain the following lower bound on the all-layer margin, for all $\inp \in \dom$:
	\begin{align}
		\marg_{\Func}((\Param, \corrParam), \inp, \1(\Func(\inp, \Param, \corrParam) \ge 0)) \ge \min\{a_0, \ldots, a_{\depth}, b_0, \ldots, b_{\depth - 1}\}\nonumber
	\end{align}
	The same lower bound applies if we consider models that use layer-based weight sharing, defined by $\Func'(\inp, \param') \triangleq \Func(\inp, \tau^{(1)}(\param'), \tau^{(2)}(\param'))$ for valid weight-tying mappings $\tau^{(1)}$, $\tau^{(2)}$ (Definition~\ref{def:abs_weight_share}).
\end{lemma}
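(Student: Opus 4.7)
The plan is to convert the pointwise output-stability guarantee provided by the hypothesis into a norm lower bound on the all-layer margin by a direct block decomposition argument. Write $m \triangleq \min\{a_0, \ldots, a_\depth, b_0, \ldots, b_{\depth-1}\}$, and let $\delta$ be an arbitrary perturbation of the concatenated parameter vector $(\Param, \corrParam)$ with $\|\delta\|_2 < m$. Partition $\delta$ into its sub-blocks $(\delta_{\param_0}, \ldots, \delta_{\param_\depth}, \delta_{\corrparam_0}, \ldots, \delta_{\corrparam_{\depth - 1}})$ corresponding to the individual parameter groups. Each sub-block satisfies $\|\delta_{\param_i}\|_2 \le \|\delta\|_2 < a_i$ and $\|\delta_{\corrparam_i}\|_2 \le \|\delta\|_2 < b_i$, so setting $\hparam_i \triangleq \param_i + \delta_{\param_i}$ and $\hcorrparam_i \triangleq \corrparam_i + \delta_{\corrparam_i}$ meets the hypothesis of the lemma.

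Invoking the hypothesis then yields $|\Func(\inp, \Param, \corrParam) - \Func(\inp, \hParam, \hcorrParam)| < 1$ for all $\inp \in \dom$. Because $\Func(\inp, \Param, \corrParam) \in \{-1, +1\}$ and the predicted label is $y = \1(\Func(\inp, \Param, \corrParam) \ge 0)$, satisfying the all-layer-margin constraint $(y - 0.5)\,\Func(\inp, \hParam, \hcorrParam) \le 0$ would force $|\Func(\inp, \hParam, \hcorrParam) - \Func(\inp, \Param, \corrParam)| \ge 1$, a contradiction. Hence the feasible set in the minimization defining $\marg_\Func((\Param, \corrParam), \inp, y)$ contains no $\delta$ with $\|\delta\|_2 < m$, giving $\marg_\Func \ge m$ as desired.

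For the weight-shared variant, I would observe that a perturbation $\delta'$ on the shared parameter $\param' \in \R^{\width'}$ induces, via $\tau^{(1)}, \tau^{(2)}$, a perturbation on each block $i$ whose entries are a sub-selection $(\delta'_{\pi_1}, \ldots, \delta'_{\pi_{b_i}})$ of the coordinates of $\delta'$. The distinctness clause in Definition~\ref{def:abs_weight_share} (indices within a single layer are distinct) is exactly what ensures this induced block perturbation has $\ell_2$-norm at most $\|\delta'\|_2$, so if $\|\delta'\|_2 < m$ every block perturbation is under $a_i$ or $b_i$ respectively. The non-shared argument then replays verbatim on the induced parameters, yielding $\marg_{\Func'} \ge m$.

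This lemma is best viewed as a packaging step — the substantive stability work has already been done in Lemma~\ref{lem:abs_base_case} and Lemma~\ref{lem:abs_induct}. The only point requiring any care is the weight-sharing case: without the distinct-indices clause a single coordinate of $\delta'$ could appear multiple times within one block, inflating the induced per-block norm above $\|\delta'\|_2$ and breaking the reduction. Once that observation is made explicit, the proof is a one-line application of the hypothesis to the block decomposition.
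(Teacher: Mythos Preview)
Your proposal is correct and follows essentially the same approach as the paper's proof: both set $\bar a = \min\{a_i, b_j\}$, observe that any perturbation with $\|\cdot\|_2 < \bar a$ has each per-layer block bounded by the corresponding $a_i$ or $b_j$, invoke the hypothesis together with $\Func(\inp,\Param,\corrParam)\in\{-1,+1\}$ to conclude the sign cannot flip, and handle weight sharing via the distinct-indices clause to ensure the induced per-layer perturbation norm does not exceed $\|\delta'\|_2$. Your write-up is, if anything, slightly more explicit than the paper's about why the weight-sharing reduction needs the distinctness assumption.
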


We can combine these steps to formally complete the proof of Theorem~\ref{thm:abstract}. 
\begin{proof}[Proof of Theorem~\ref{thm:abstract}]
	Assuming the perturbation bounds~\eqref{eq:abs_pert_ub:0}~\eqref{eq:abs_pert_ub:1}, and~\eqref{eq:abs_pert_ub:2} hold, we can apply induction with Lemma~\ref{lem:abs_base_case} as the base case and Lemma~\ref{lem:abs_induct} as the inductive step to conclude that $|\Func(\inp, \hParam, \hcorrParam) - \Func(\inp, \Param, \corrParam)| \le \corrdist < 1$ for all $\inp \in \dom$. We can now apply Lemma~\ref{lem:abs_all_layer} to obtain the desired bound on the all-layer margin.
\end{proof}

We fill in the proofs of the supporting lemmas below.
\begin{proof}[Proof of Lemma~\ref{lem:abs_base_case}]
	By our definitions and Condition~\ref{ass:lip_abstract}, we have 
	\begin{align}
		\|\clayerf_0(\inp, \hParam, \hcorrParam) - \layer_0(\inp, \Param)\|_2 = \|\func_0(\inp, \hparam_0) - \func_0(\inp, \param_0)\|_2 \le \funclip_0\|\param_0 -  \hparam_0\|_2 \le \min\{\norm, \corrdist\}\nonumber
	\end{align}
	Now we can apply the Definition~\ref{def:correction} of the correction function to get
	\begin{align}
		\clayer_0(\inp, \hParam, \corrParam) = \corr_0(\clayerf_0(\inp, \hParam, \hcorrParam), \corrparam_0) = \layer_0(\inp, \Param)\nonumber
	\end{align}
\end{proof}
\begin{proof}[Proof of Lemma~\ref{lem:abs_induct}]
	By expanding the expression for $\layer_i$, we observe that 
	\begin{align}
		\layer_i(\inp, \Param) &= \func_i(\layer_0(\inp, \Param), \ldots, \layer_{i - 1}(\inp, \Param), \param_i)\nonumber\\
		&= \func_i(\clayer_0(\inp, \hParam, \corrParam), \clayer_1(\inp, \hParam, \hcorrParam_0)\ldots, \clayer_{i - 1}(\inp, \hParam, \hcorrParam_{i - 2}), \param_i) \label{eq:abs_induct:2}
	\end{align}
	We obtained the equality via~\eqref{eq:abs_induct:1}. Now we write
	\begin{align}
		\clayerf_i(\inp, \hParam, \hcorrParam) = \func_i(\clayer_0(\inp, \hParam, \hcorrParam), \ldots, \clayer_{i- 1}(\inp, \hParam, \hcorrParam), \hparam_i)\label{eq:abs_induct:3}
	\end{align}
	We subtract the two expressions and add and subtract $\func_i(\clayer_0(\inp, \hParam, \corrParam), \clayer_1(\inp, \hParam, \corrParam_0)\ldots, \clayer_{i - 1}(\inp, \hParam, \corrParam_{i - 1}), \hparam_i)$ to obtain
	\begin{align}
		\clayerf_i(\inp, \hParam, \hcorrParam) - \layer_i(\inp, \Param) = E_1 + E_2 \nonumber
	\end{align}
	where 
	\begin{align}
		E_1 \triangleq &\func_i(\clayer_0(\inp, \hParam, \hcorrParam), \ldots, \clayer_{i- 1}(\inp, \hParam, \hcorrParam), \hparam_i)\nonumber \\ &- \func_i(\clayer_0(\inp, \hParam, \corrParam), \clayer_1(\inp, \hParam, \hcorrParam_0)\ldots, \clayer_{i - 1}(\inp, \hParam, \hcorrParam_{i - 2}), \hparam_i)\nonumber\\
		E_2 \triangleq &\func_i(\clayer_0(\inp, \hParam, \corrParam), \clayer_1(\inp, \hParam, \hcorrParam_0)\ldots, \clayer_{i - 1}(\inp, \hParam, \hcorrParam_{i - 2}), \hparam_i)\nonumber \\
		&- \func_i(\clayer_0(\inp, \hParam, \corrParam), \clayer_1(\inp, \hParam, \hcorrParam_0)\ldots, \clayer_{i - 1}(\inp, \hParam, \hcorrParam_{i - 2}), \param_i)\nonumber
	\end{align}
	We first bound $E_1$. We note that for all $0 \le j \le i - 1$
	\begin{align}
		\|\clayer_{j}(\inp, \hParam, \hcorrParam) - \clayer_{j}(\inp, \hParam, \hcorrParam_{j - 1})\|_2 &= \|\corr_j(\clayerf_j(\inp, \hParam, \hcorrParam), \hcorrparam_j) - \corr_j(\clayerf_j(\inp, \hParam, \hcorrParam), \corrparam_j)\|_2\nonumber\\
		&\le \corrlip \max\{\|\clayerf_j(\inp, \hParam, \hcorrParam)\|_2, 1\} \|\hcorrparam_j - \corrparam_j\|_2 \nonumber
	\end{align}
	The last inequality used Condition~\ref{ass:lip_abstract} and $\|\hcorrparam_j - \corrparam_j\|_2 \le \corrparamdist$. Now defining $H' \triangleq (\clayer_0(\inp, \hParam, \hcorrParam), \ldots, \clayer_{i- 1}(\inp, \hParam, \hcorrParam))$ and $H \triangleq (\clayer_0(\inp, \hParam, \corrParam), \clayer_1(\inp, \hParam, \hcorrParam_0)\ldots, \clayer_{i - 1}(\inp, \hParam, \hcorrParam_{i - 2}))$, it follows that 
	\begin{align}
		\gnorm{H- H'} = \max_{0 \le j \le i - 1} \corrlip \max\{\|\clayerf_j(\inp, \hParam, \hcorrParam)\|_2, 1\} \|\hcorrparam_j - \corrparam_j\|_2\nonumber
	\end{align}
	Plugging in $\|\clayerf_j(\inp, \hParam, \hcorrParam)\|_2 \le \|\layer_j(\inp, \Param)\|_2 + \|\clayerf_j(\inp, \hParam, \hcorrParam) - \layer_j(\inp, \Param)\|_2 \le 2\norm$, $\norm \ge 1$, and $\|\hcorrparam_j - \corrparam_j\|_2 \le \frac{\funcdist{}}{2\corrlip \norm}$, we obtain $\gnorm{H - H'} \le \funcdist{}$.
	Furthermore, we note that $H \in (\layer_{0}, \ldots, \layer_{i - 1})(\dom)$, so we can apply Condition~\ref{ass:lip_abstract} and Definition~\ref{def:abs_nice} to obtain 
	\begin{align}
		\|E_1\|_2 &= \|\func_i(H', \hparam_i) - \func_i(H, \hparam_i)\|_2 \nonumber\\
		&\le \funclip \gnorm{H - H'} \tag{since $\|\hparam_i - \param_i\|_2 \le \paramdist{}$ and $\gnorm{H - H'} \le \funcdist{}$}\\
		&\le 2\norm \funclip \corrlip \max_j\|\hcorrparam_j - \corrparam_j\|_2\nonumber
	\end{align}
	Next, we bound $E_2$ by applying Condition~\ref{ass:lip_abstract} and Definition~\ref{def:abs_nice} again, using $\|\hparam_i - \param_i\|_2 \le \paramdist{}$:
	\begin{align}
		\|E_2\|_2 &= \|\func_i(H, \hparam_i) - \func_i(H, \param_i)\|_2\nonumber\\
		&\le \paramlip{} \|\hparam_i - \param_i\|_2 \max\{\gnorm{H}, 1\} \nonumber\\&= \paramlip{} \|\hparam_i - \param_i\|_2 \max \{\|\layer_j(\inp, \Param)\|_2\}_j \cup \{1\} \nonumber\\&\le \paramlip{} \|\hparam_i - \param_i\|_2 \norm\nonumber
	\end{align}
	where we applied Condition~\ref{ass:norm_bound}. By triangle inequality, follows that 
	\begin{align}
		\|	\clayerf_i(\inp, \hParam, \hcorrParam) - \layer_i(\inp, \Param)\|_2 &\le \|E_1\|_2 + \|E_2\|_2\nonumber\\
		&\le \paramlip{} \|\hparam_i - \param_i\|_2 \norm + 2\norm \funclip \corrlip \max_j\|\hcorrparam_j - \corrparam_j\|_2\nonumber
	\end{align}
	Now by the assumptions on $\|\hparam_i - \param_i\|_2$ and $\|\hcorrparam_j - \corrparam_j\|_2$, we can check that the r.h.s. is bounded by $\min\{\norm, \corrdist\}$.
	
	Finally, we note that by Definition~\ref{def:correction} of the correction function, we have 
	\begin{align}
		\clayer_i(\inp, \hParam, \hcorrParam_{i - 1}) = \corr_i(\clayerf_i(\inp, \hParam, \hcorrParam), \corrparam_{i}) = \layer_i(\inp, \Param)\nonumber
	\end{align}
	where we used the fact that $	\|	\clayerf_i(\inp, \hParam, \hcorrParam) - \layer_i(\inp, \Param)\|_2 \le \corrdist$. 
\end{proof}
\begin{proof}[Proof of Lemma~\ref{lem:abs_all_layer}]
	Note that if $\|(\Param, \corrParam) - (\hParam, \hcorrParam)\|_2 < \bar{a} \triangleq \min\{a_0, \ldots, a_{\depth}, b_0, \ldots, b_{\depth - 1}\}$, then by the conditions of the lemma, $|\Func(\inp, \Param, \corrParam) - \Func(\inp, \hParam, \hcorrParam)| < 1$. However, because $\Func(\inp, \Param, \corrParam) \in \{-1, +1\}$ for all $\inp \in \dom$, the sign of the output is unchanged, which means $\Func(\inp, \Param, \corrParam) \Func(\inp, \hParam, \hcorrParam) > 0$. This means that we must perturb $(\Param, \corrParam)$ by $\|\cdot\|_2$-norm at least $\bar{a}$ to satisfy the constraint in the all-layer margin definition, giving us the lower bound. We note that a similar argument applies to layer-based weight sharing because there are no parameters shared within a layer, so if the perturbation to $\param'$ has $\ell_2$ norm less than $\bar{a}$, the parameters in $\tau^{(1)}(\param')$, $\tau^{(2)}(\param')$ will also have a perturbation of at most $\bar{a}$ in each layer. The same reasoning as before then applies.  
\end{proof}

\end{document}